\documentclass[11pt]{article}
\pdfoutput=1
\usepackage[letterpaper, left=1in, right=1in, top=1in,bottom=1in]{geometry}

\usepackage{etoolbox}
\newtoggle{acmformat}
\togglefalse{acmformat}

\input{auxiliary} 
\newcommand{\traversal}{{\normalfont{\texttt{TRAVERSAL}}}\xspace}
\newcommand{\sfr}{\mathsf{r}}
\newcommand{\alg}{\mathsf{alg}}

\newcommand{\Exphall}{\Exp_{\mathrm{hal};\ell}}
\newcommand{\Prhall}{\Pr_{\mathrm{hal};\ell}}
\newcommand{\rhoprog}{\rho_{\mathrm{prog}}}
\newcommand{\Egoodl}{\mathcal{E}_{\mathrm{good};\ell}}
\newcommand{\deltafail}{\delta_{\mathrm{fail}}}
\newcommand{\Evisit}{\scrE_{\calU_{\ell}}}
\newcommand{\modgoodl}{\modclass_{\mathrm{good};\ell}}
\newcommand{\progress}[1][\ell]{\mathtt{PROGRESS}_{#1}} 
\newcommand{\Eexplorel}[1][\ell]{\progress[#1]}

\newcommand{\Prmucan}{\Prop_{\modvarhat \sim \mathrm{can}}}
\newcommand{\Expmucan}{\Expop_{\modvarhat \sim \mathrm{can}}}

\newcommand{\Prcanl}{\Pr_{\mathrm{can};\,\ell}}   
\newcommand{\gapcanl}{\gap_{\mathrm{can};\,\ell}} 

\usepackage[round]{natbib}

\usepackage{times}
\usepackage{nicefrac}
\usepackage{color}              

\newcommand{\OPT}{\mathtt{OPT}} 
\newcommand{\REG}{\ttbf{regret}}

\newcommand{\qpunish}{q_{\mathrm{pun}}}
\newcommand{\ralt}{r_{\mathrm{alt}}}
\newcommand{\Expsimhh}{\Expop_{\ledhall \sim \mathrm{hh}}}
\newcommand{\Prsimhh}{\Prop_{\ledhall \sim \mathrm{hh}}}

\newcommand{\Dhalnol}{\mathcal{D}_{\mathrm{hal}}}
\newcommand{\ledcens}{\ledger_{\mathrm{cens}}}
\newcommand{\Utot}{\mathcal{U}_{\mathrm{all}}}
\newcommand{\modpunish}{\mathcal{M}_{\mathrm{pun}}}

\newcommand{\reach}{\mathsf{Reach}}
\newcommand{\mdphh}{{\normalfont \texttt{HiddenHallucination}}\xspace} 
\newcommand{\HHandExploit}{{\normalfont \texttt{HH\&Exploit}}\xspace} 

\newcommand{\gapcan}{\gap_{\mathrm{can}}}
\newcommand{\Expcan}{\Exp_{\mathrm{can}}}

\newcommand{\ledvar}{\bm{\ledger}}


\newcommand{\ledrawl}[1][\ell]{\ledger_{\mathrm{raw};#1}}
\newcommand{\ledcensl}[1][\ell]{\ledger_{\mathrm{cens};#1}}
\newcommand{\ledul}[1][\ell]{\ledger_{\mathcal{U}_{\ell};\ell}}
\newcommand{\modhall}{\model_{\mathrm{hal};\ell}}
\newcommand{\ledrevk}[1][k]{\ledger_{#1}}

\newcommand{\ledhonl}{\ledger_{\mathrm{hon};\ell}}




\newcommand{\initOneLiners}{%
         \setlength{\itemsep}{0pt}
        \setlength{\parsep }{0pt}
          \setlength{\topsep }{0pt}
}
\newenvironment{OneLiners}[1][\ensuremath{\bullet}]
    {\begin{list}
        {#1}
        {\initOneLiners}}
    {\end{list}}

\newcommand{\EqComment}[1]{\text{\emph{(#1)}}}

\newcommand{\ie}{{\em i.e.,~\xspace}}

\newcommand{\eg}{{\em e.g.,~\xspace}}
\newcommand{\Eg}{{\em E.g.,~\xspace}}

\newcommand{\rbr}[1]{\left(\,#1\,\right)}
\newcommand{\sbr}[1]{\left[\,#1\,\right]}
\newcommand{\cbr}[1]{\left\{\,#1\,\right\}}

\newcommand{\LDOTS}{\, ,\ \ldots\ ,}     
\newcommand{\poly}{\operatornamewithlimits{poly}}


\numberwithin{equation}{section}

\newcounter{protocol}

\makeatletter

\makeatletter

\makeatother

\DeclareMathAlphabet\mathbfcal{OMS}{cmsy}{b}{n}

\Crefname{claim}{Claim}{Claims}
\Crefname{equation}{Eq.}{Eqs.}
\Crefname{protocol}{Protocol}{Protocols}
\Crefname{asm}{Assumption}{Assumptions}

\Crefname{condition}{Condition}{Conditions}

\newcommand{\calC}{\mathcal{C}}
\newcommand{\calX}{\mathcal{X}}

\newcommand{\nipsvspace}[1]{\iftoggle{nips}{\vspace{1}}{}}




\renewcommand{\Pr}{\mathbb{P}}
\newcommand{\Exp}{\mathbb{E}}

\newcommand{\iidsim}{\overset{\mathrm{i.i.d.}}{\sim}}




\newcommand{\I}{\mathbf{1}}



\newcommand{\calF}{\mathcal{F}}

\newcommand{\algcomment}[1]{\textcolor{blue!70!black}{\footnotesize{\texttt{\textbf{#1}}}}}

\newcommand{\supp}{\mathrm{supp}}
\newcommand{\xah}{(x,a,h)}
\newcommand{\rtil}{\tilde{r}}

\newcommand{\ttbf}[1]{{\normalfont \texttt{\textbf{#1}}}}

\newcommand{\opfont}[1]{\ttbf{#1}}

\newcommand{\bluepar}[1]{\vspace{1mm}\noindent\textbf{\textcolor{blue!70!black}{#1}}}

\newcommand{\modvar}{\bm{\upmu}}

\newcommand{\EvPun}{\calE_{\mathrm{pun},\,\ell}}

\newcommand{\gap}{\mathrm{Gap}}
\newcommand{\valuename}{\ttbf{value}}
\newcommand{\scrE}{\texttt{E}}

\newcommand{\nlearn}{n_{\mathrm{lrn}}}
\newcommand{\nphase}{n_{\mathrm{ph}}} 

\newcommand{\prior}{\ttbf{p}}
\newcommand{\posteriorhall}{\prior_{\mathrm{hal},\ell}} 

\newcommand{\orac}{\opfont{orac}}

\newcommand{\cens}{\opfont{cens}}
\newcommand{\Phase}{\opfont{phase}}

\newcommand{\modclass}{\mathcal{M}}
\newcommand{\modtotal}{\modclass_{\mathrm{mdp}}}
\newcommand{\modelst}{\model_{\star}}

\newcommand{\valuef}[2]{\valuename(#1;#2)}


\newcommand{\Pitotal}{\Pi_{\mathrm{mkv}}}

\newcommand{\modst}{\model_{\star}}
\newcommand{\sfE}{\mathsf{E}}

\newcommand{\calE}{\mathcal{E}}

\newcommand{\calK}{\mathcal{K}}

\newcommand{\ledhall}[1][\ell]{\ledger_{\mathrm{hal};#1}}

\newcommand{\Expop}{\operatornamewithlimits{\ensuremath{\mathbb{E}}}}   
\newcommand{\Prop}{\operatornamewithlimits{\ensuremath{\mathbb{P}}}}    

\newcommand{\calD}{\mathcal{D}}

  \newcommand{\Prcan}{\Pr_{\mathrm{can}}}

  \newcommand{\ledgespace}{\mathcal{L}}
  \newcommand{\ledgespacetot}{\widebar{\ledgespace}}

\newcommand{\modclasshall}[1][\ell]{\modclass_{#1}}
\newcommand{\kexpl}[1][\ell]{k^{\mathrm{hal}}_{#1}}

\newcommand{\censicon}{\mathrm{cs}}

\newcommand{\ledgercens}{\ledger_{\censicon}}

\newcommand{\ledgespacecenstotm}{\ledgespacetot_{\censicon;m}}

\newcommand{\ledgecens}{\ledgercens}

\newcommand{\epsp}{\varepsilon_{\mathsf{p}}}
\newcommand{\epsr}{\varepsilon_{r}}

\newcommand{\Klearn}{\mathcal{K}_{\mathrm{lrn}}}

\newcommand{\bmx}{\mathbf{x}}

\newcommand{\bma}{\mathbf{a}}

\newcommand{\pmodst}{\sfp_{\modst}}
\newcommand{\rmodst}{r_{\modst}}
\newcommand{\lonenorm}[1]{\|#1\|_{\ell_1}}
\newcommand{\calN}{\mathcal{N}}

\newcommand{\sfP}{\mathsf{P}}

\newcommand{\modvarhat}{\hat{\model}}

\newcommand{\sfp}{\mathsf{p}}

\newcommand{\thetar}{\theta_r}
\newcommand{\thetap}{\theta_{\sfp}}
\newcommand{\epspunish}{\varepsilon_{\mathrm{pun}}}

\newcommand{\ledgespacetotm}{ \ledgespacetot_m}

\newcommand{\Pimarkov}{\Pi_{\mathrm{mkv}}}

\newcommand{\subsets}{\mathrm{subsets}}
\newcommand{\censinv}{\cens^{-1}}
\newcommand{\calU}{\mathcal{U}}
\newcommand{\sfR}{\mathsf{R}}
\newcommand{\kh}{_{k;h}}
\newcommand{\nn}{\nonumber}

\newcommand{\Prmech}{\mathbb{P}_{\mathrm{mech}}}

\newcommand{\Ecubed}{\texttt{E3}}

\newcommand{\ledger}{\uplambda} 
\newcommand{\signal}{\upsigma}  
\newcommand{\traj}{{\uptau}}    
\newcommand{\model}{\upmu}      
\newcommand{\signals}{\Upsigma} 

\newcommand{\underexplored}[1][\ell]{\calU^{\mathrm{und}}_{#1}} 

\renewcommand{\refeq}[1]{Eq.~(\ref{#1})}

\newlength\tindent
\setlength{\tindent}{\parindent}
\setlength{\parindent}{15pt}

\parskip=.2em

\usepackage[suppress]{color-edits}
\addauthor{ms}{orange}
\addauthor{as}{red}

\title{Exploration and Incentives in Reinforcement Learning%
\footnote{Compared to the initial version, this version refactors presentation and spells out implications for regret (Section~\ref{sec:exploit}).}}

\author{Max Simchowitz\thanks{MIT, \texttt{msimchow@berkeley.edu}. Research conducted while the author was a graduate student at UC Berkeley and an intern at Microsoft Research NYC.} \and Aleksandrs Slivkins\thanks{Microsoft Research NYC, \texttt{slivkins@microsoft.com}}}

\date{First version: February 2021\\This version: February 2023}

\begin{document}

\maketitle
\vspace{-8mm}

\begin{abstract}
How do you incentivize self-interested agents to \emph{explore} when they prefer to \emph{exploit}? We consider complex exploration problems, where each agent faces the same (but unknown) MDP. In contrast with traditional formulations of reinforcement learning, agents control the choice of policies, whereas an algorithm can only issue recommendations. However, the algorithm controls the flow of information, and can incentivize the agents to explore via information asymmetry. We design an algorithm which explores all reachable states in the MDP. We achieve provable guarantees similar to those for incentivizing exploration in static, stateless exploration problems studied previously. To the best of our knowledge, this is the first work to consider mechanism design in a stateful, reinforcement learning setting.





\end{abstract}

\addtocontents{toc}{\protect\setcounter{tocdepth}{0}}



\section{Introduction}
How do you incentivize self-interested agents to \emph{explore} when they prefer to \emph{exploit}? We revisit the tradeoff between exploration and exploitation, \ie between costly acquisition of information and using this information to make good near-term decisions. While algorithmic aspects of this tradeoff have been extensively studied in machine learning and adjacent disciplines, we focus on its economic aspects. We consider a population of self-interested agents which collectively face the exploration-exploitation tradeoff. The agents should explore for the sake of the common good, yet any given agent is not inherently incentivized to suffer the costs of exploration for the sake of helping others. As a result, exploration may happen very slowly, or not at all. This can be remedied by an online platform such as a recommendation system, which may wish to balance exploration and exploitation. Such platform can only provide information and recommendations, but cannot force the agents to comply. However, the platform may aggregate information from many agents in the past, and control what is disclosed to agents in the future. This information asymmetry provides the platform with leverage to incentivize exploration. Designing recommendation algorithms that incentivize exploration has been studied previously, starting from \citep{Kremer-JPE14,Che-13}; we will refer to this work as \emph{incentivized exploration}. All prior work focuses on a basic exploration problem in which each agent acts once, and her actions do not affect the outcomes for actions chosen in the future.%
(If all agents are controlled by an algorithm, this problem is known as \emph{multi-armed bandits}.)

We initiate the study of incentives in much more complex exploration problems that arise in reinforcement learning (RL). Each agent has multiple interactions with the environment. The chosen actions have a persistent effect: there is \emph{state} which is probabilistically affected by the agent's actions and in turn affects the agent's rewards for these actions. A standard abstraction for such interactions is a Markov Decision Process (\emph{MDP}), which posits that the effect of the past is completely summarized by the current state. We follow the paradigm of \emph{episodic RL}: each agent faces a fresh copy of the same MDP, constituting one ``episode" of the problem. This MDP is not known in advance, but may be approximately learned over multiple episodes. RL algorithms choose the course of action in a particular episode (usually called a \emph{policy}), and adjust this policy from one episode to another so as to balance exploration and exploitation. However, in our problem, called \emph{incentivized RL}, it is the agents themselves who control the choice of policies, whereas an algorithm can only issue recommendations, leveraging the information asymmetry discussed above.

For motivation, let us consider three stylized examples in which agents have repeated, MDP-like interactions with the environment, and need to be incentivized to explore.
First, consider automated market proxies, such as bidding agents in ad auction,  algorithmic traders on a stock exchange, or pricing agents for retail buyers or sellers. Their actions affect their own endowment or inventory, and possibly also the state of the market (\eg for thin markets). The parameter settings for such proxies can be viewed as an MDP policy. Many/most market participants tend to be ``small", participating infrequently or in rare bursts, and therefore lacking ``inherent" incentives to explore. Thus, an online market may wish to incentivize exploration among such participants. Second, a navigation app suggests driving directions, whereas the driver chooses which route to take. The driver makes multiple decisions as he is driving, which change his location. This can be modeled as an MDP, with driver's location as ``state" and routes as ``policies". Exploration is needed to find out which routes are better at a particular time, but each driver just wants to get to this destination soon. Third, a medical treatment received by a patient may consist of multiple steps which depend on the patient's current ``state". This scenario can be viewed as an MDP, where the medical treatment is a ``policy". Exploration, a.k.a. clinical trials, is commonly used to learn which treatments work best. However, patients tend to prefer treatments that work best, and need to be incentivized to explore for the sake of others (see Section 1.4 in \cite{ICexploration-ec15} for a discussion).


\bluepar{(Simplified) problem formulation.}
A problem formulation in incentivized exploration consists of three components: a machine learning problem collectively solved by the agents, strategic interactions between the algorithm and the agents, and a specific performance objective pursued by the algorithm.  As we delve into incentivized RL, we begin with simple and arguably fundamental versions of all three components.

We use finite, deterministic MDPs without time discounting.%
\footnote{Restriction to \emph{deterministic} MDPs, \ie MDPs with deterministic rewards and transitions, captures the gist of the problem and simplifies presentation. In Section~\ref{sec:extension}, we extend our setup and results to \emph{randomized} MDPs. }
We have $S$ states, $A$ actions, and $H$ stages, denoted $x\in[S]$, $a\in[A]$ and $h\in[H]$. The parameters $S,A,H$ are finite and fixed throughout. An agent interacts with the MDP in stages: in each stage $h\in[H]$, the agent observes the current state $x_h \in [S]$, selects an action $a_h\in [A]$, receives reward $r_h\in[0,1]$ and transitions to a new state $x_{h+1}$. The outcome is agent's \emph{trajectory}, a sequence of tuples
    $(x_h,\,a_h,\, r_h,\,h)_{h\in[H]}$.
An \emph{MDP model} $\model$ specifies the reward
    $\sfr_{\model}\xah$
and the next state for each $\xah$ triple, as well as the initial state $x_0$. A policy $\pi:[S]\times[H]\to [A]$ determines the agent's behavior in the MDP, specifying an action for each state $x$ and stage $h$. The value of a policy $\pi$ under model $\model$, denoted $\valuef{\pi}{\model}$, is defined as the total reward when this policy is executed under this model. Formally:
\begin{align*}
\valuef{\pi}{\model}
    := \textstyle \sum_{h\in[H]} r_h
    = \sum_{h\in[H]} \sfr_{\model}(x_h,\,a_h,\, r_h),
\end{align*}
where the next action is $a_{h+1} = \pi(x_h, h)$ for each stage $h\in [H]$. In \emph{Episodic RL}, there is a fixed but unknown MDP model $\modst$ and $K$ \emph{episodes}. In each episode $k\in [K]$, an algorithm chooses a policy $\pi_k$, this policy is executed in the MDP, and the resultant trajectory is observed. Each episode starts from the same initial state $x_0$.
\footnote{So, an execution of a given policy in a given episode does not depend on the policies from the previous episodes.}

On the economics side, we have a new agent in each episode $k$, and the algorithm must ensure that following its chosen policy $\pi_k$ is in the agent's self-interest. The formal requirement is
\begin{align}\label{eq:greedy-intro}
\pi_k \in \argmax_{\text{policies $\pi$}}\;
    \En_{\modst\sim\prior}\sbr{ \valuef{\pi}{\modst} \mid \pi_k},
\end{align}
where the (true) MDP model $\modst$ is initially drawn from a Bayesian prior $\prior$ over MDP models.
\footnote{The Bayesian framing here is (merely) a way to endow agents with well-defined incentives.}
This is a version of Bayesian incentive-compatibility (\emph{BIC}), a standard requirement in economic theory. The intuition behind \refeq{eq:greedy-intro} is that the agent observes policy $\pi_k$ before her episode starts, and makes a one-time decision whether to follow this policy. In this decision, the agent chooses among all policies, evaluates them based on their Bayesian-expected value, and breaks ties in favor of $\pi_k$. The agent knows the algorithm, the prior $\prior$ and her episode $k$, but she does not know anything about the previous episodes. The algorithm must ensure that the agent chooses $\pi_k$.

The performance objective is \traversal: visit all $\xah$ triples in the smallest number of episodes.%
\footnote{Naturally, we are only interested in $\xah$ triples that are \emph{reachable} by some RL algorithm without incentives constraints.} This  provides sufficient data for policy optimization, possibly with a different reward metric \citep{jin2020reward}.
While \traversal does not attempt to optimize agents' welfare, the resulting data can be used afterwards for exploitation. To elucidate this point, we derive specific corollaries for regret minimization in Section~\ref{sec:exploit}.

Incentivized exploration in bandits is a special case with $H=1$ stages. Specialization of \traversal (visit each arm in the smallest number of rounds) has been studied in this context  \citep[\eg][]{ICexploration-ec15,Selke-PoIE-ec21}. Some initial samples of each arm are required to bootstrap all other published algorithms for randomized rewards. Therefore, all these algorithms are preceded by a ``warm-up stage" which collects the initial samples and, essentially, optimizes for \traversal.

\bluepar{Challenges.}
Incentivized exploration is challenging even in bandits. For one, why would an agent choose an arm she does not initially prefer, especially the first time it is recommended? Moreover, if the Bayesian prior is independent across arms, a given arm cannot be recommended before all arms with larger prior mean rewards are explored. Due to these and similar difficulties, algorithms for incentivized exploration require specially tailored algorithms, as standard algorithms from bandits are not compatible with agents' incentives. Revealing full data collected by the algorithm would not help convince the agents to follow  recommendations: instead, they would only exploit ({\ie choose} the actions with the greatest posterior mean reward). Even the most basic objective of exploring one other arm is not always achievable, requires non-trivial solutions, and is very subtle to optimize. (We expand on all these points in Related Work, see Section~\ref{sec:related}.) Returning to incentivized RL, we conclude that we should not expect  standard RL algorithms to be incentive-compatible, and \traversal is a natural first objective to consider.

RL is known to be much more difficult than multi-armed bandits as a machine learning problem, for many reasons. Below we list three specific challenges that are most relevant to incentivized RL. First, the effective number of alternatives from which an agent may choose -- the number of all Markovian policies -- is exponential in $S$, the number of states. Second, an algorithm cannot reliably explore a particular $\xah$ pair even without incentives issues, because it does not know a priori which policies will visit which states. In contrast, a bandit algorithm can just pull any desired arm. Third, expected rewards associated with different policies are necessarily highly correlated, even in the frequentist version, because many different policies may visit the same $\xah$ pair. Whereas rewards of different arms in bandits are not correlated in the frequentist version, and can be assumed mutually independent in the Bayesian version as a paradigmatic special case.

\newcommand{\nArmsMAB}{\tilde{A}} 

Incentivized RL can be trivially reduced to incentivized exploration in bandits by treating each MDP policy as an ``arm" and each episode as a ``round". This reduction is exponentially wasteful, as it creates $\nArmsMAB = A^{SH}$ arms. Moreover, the relevant state-of-art result for incentivized exploration in bandits with correlated priors \citep[Section 5]{ICexploration-ec15} does not provide explicit guarantees for a super-constant number of arms, and requires $e^{\Omega(\nArmsMAB)}$ rounds to explore each arm even once in some natural examples.%
\footnote{Recently, \citet{Selke-PoIE-ec21} achieved $\poly(\nArmsMAB)$ dependence if the prior is mutually independent across arms. However, this result does not help the reduction at hand, because the mean rewards of policies are highly correlated. }
{Reduction} to this result requires $K$ episodes, where $K$ is exponential in the number of policies. This is \emph{doubly exponential} in problem parameters:
$K=\exp\rbr{ A^{SH} }$. Moreover, the prior-dependent parametrization in the guarantee from \citep[Section 5]{ICexploration-ec15} lacks a natural interpretation when `arms' correspond to MDP policies.

Let's calibrate our desiderata {for incentivized RL}.
The effective number of different actions in the MDP is $SAH$, \ie $A$ actions for each $(x,h)$ pair. So, $\exp(SAH)$ episodes in incentivized RL would be in line with the exponential dependence in prior work, and would vastly improve over the trivial reduction {outlined above}.


\bluepar{Results and techniques.}
We design an algorithm for incentivized RL which achieves \traversal
in $K$ episodes, where $K$ is bounded in terms of parameters $S,A,H$ and the prior. We obtain $K$ with at most exponential dependence on $SAH$ when the rewards $\sfr_{\model}\xah$ are independent across the $\xah$ triples and jointly independent of the transitions (but the transitions can still be arbitrarily correlated among themselves). Note that policy values are still highly correlated.%
\footnote{This holds because (i) different policies may visit the same $\xah$ triples and (ii) the transitions may be correlated.}
We extend this result to randomized MDPs and (with a more abstract guarantee) to arbitrary priors without the independence assumption.

In terms of techniques, it is helpful to distinguish between the challenges of exploration in RL, that is, learning which policies visit which states in an MDP, and those of  \emph{incentivized} exploration in RL. We handle  RL exploration by building on the classic \Ecubed{} algorithm \citep{kearns2002near}. This algorithm encourages exploration of new $\xah$ triples by ``punishing" the previously explored $\xah$ triples and ``promoting" the unexplored ones, \ie pretending that the rewards from former are small and rewards from the latter are large, and computing a reward-optimizing policy in the resulted MDP. Our main technical contribution is to ``implement" a version of the \Ecubed{} algorithm within the constraints of incentivized RL.

Our results are theoretical, focusing on the exponential dependence on $SAH$ (and vastly improving over the prior work, as discussed above). Note that $\exp(SAH)$ episodes may be practical for sufficiently small MDPs.
Regarding implementation, each individual episode can be made computationally efficient under suitable independence assumptions. However, the fact that computation in Bayesian MDPs is not well-understood impedes efficient implementation in the general case.


\bluepar{Discussion and limitations.}
We focus on \emph{Markovian} policies: functions that input the state and the stage, but ignore what happened in the past stages. This is traditional in the study of MDPs and does not lose generality in the planning problem when the MDP is fully known. With a Bayesian posterior, the optimal MDP policy may be non-Markovian; equivalently, an agent may want to revise the policy mid-episode. However, committing to follow a Markovian policy may be behaviorally justified. Indeed,
it may reflect the lack of sophistication or resources to optimize beyond Markovian policies,
exogenous constraints (\eg the available medical treatments are Markovian and must be chosen in advance), or considerations of convenience (\eg following the driving directions vs. charting one's own course). In all these cases, agents may be content to know that the suggested policy is better for them than any other Markovian policy.


We make standard assumptions from theoretical economics: the principal has the ``power to commit" to the declared algorithm, and the agents are endowed with Bayesian rationality, \ie they maximize their expected utility given available information, and have sufficient knowledge and computational power to do so. Almost all prior work on incentivized exploration makes these assumptions.%
\footnote{One exception is \citep{Jieming-unbiased18}, which restricts the algorithm's structure so that weaker economic assumptions suffice.}
Likewise, our algorithm requires a detailed knowledge of the Bayesian prior; this is a standard assumption in incentivized exploration, and more generally in the literature on information design in economics.%
\footnote{The only exception in prior work on incentivized exploration is one of the algorithms in \citep{ICexploration-ec15}, which summarizes the prior with just two numbers. However, this result requires the prior to be independent across arms (and only applies to bandits).}

While we study a basic model of incentivized RL, all three components thereof can be extended in various directions (see \Cref{sec:conclusions}). We view our work as a foundation towards further study.



\bluepar{Some notation.}
Let $\Pimarkov$ be the set of all policies as defined above, \ie all Markovian policies.
The support of the prior $\prior$, \ie the class of of all feasible MDP models, is denoted $\modtotal$. Let us recap the ``time units" used throughout this paper: in each \emph{episode} an MDP is executed for $H$ \emph{stages} (standard terminology in episodic RL); our algorithm operates in \emph{phases} of a fixed number of episodes each; we use \emph{rounds} only when we discuss specialization to multi-armed bandits.

\bluepar{Map of the paper.}
To simplify presentation, we first specify our algorithm for the special case of  deterministic MDPs (Section~\ref{sec:alg}), and analyze it for the special case of deterministic MDPs and independent priors (Section~\ref{sec:analysis-basic}). This case is already highly non-trivial and captures our main ideas. Then in Section~\ref{sec:extension} we extend our algorithm and analysis to the general case, with randomized MDPs and arbitrary priors. Regret implications are spelled out in Section~\ref{sec:exploit}. All proofs are (only) outlined in the body of the paper, with lengthy details deferred to the appendices.

\subsection{Related Work}\label{sec:related}
Incentivized exploration was introduced in \citet{Kremer-JPE14} and \citet{Che-13}. All prior work focused on multi-armed bandits as the underlying machine learning problem. The problem is quite rich even under a basic economic model adopted in our paper: prior work has studied
optimal policies for deterministic rewards \citep{Kremer-JPE14,Cohen-Mansour-ec19},
regret-minimizing policies for stochastic rewards \citep{ICexploration-ec15}, and
exploring all arms \citep{ICexploration-ec15,ICexplorationGames-ec16}. Extensions of this model included
heterogenous agents \citep{Jieming-multitypes18,Kempe-colt18},
agents directly affecting each other's payoffs 
\citep{ICexplorationGames-ec16},
information leakage \citep{Bahar-ec16,Bahar-ec19},
relaxed economic assumptions
\citep{Jieming-unbiased18}, time-discounted utilities \citep{Bimpikis-exploration-ms17}, monetary incentives \citep{Frazier-ec14,Kempe-colt18}, and continuous information flow \citep{Che-13}. A survey on incentivized exploration can be found in
\citep[][Ch. 11]{slivkins-MABbook}. 

Standard approaches from bandits do not carry over to incentivized exploration without non-trivial modifications, major assumptions, substantial performance loss, and new analyses. Any bandit algorithm can be made BIC by ``hiding'' it among many rounds of exploitation \citep{ICexploration-ec15}.
Successive Elimination~\citep{EvenDar-icml06} carries over if the rule for eliminating suboptimal arms is revised to depend on the prior \citep{ICexploration-ec15}.
Thompson Sampling is BIC when primed with a certain amount of data that needs to be collected exogenously \citep{Selke-PoIE-ec21}. In the first two results, performance loss compared to bandits is exponential in the number of arms. The last two results assume that the prior is independent across arms, which really breaks for Incentivized RL.


The basic objective of exploring all arms is very subtle in incentivized exploration (while trivial in bandits). This objective is not always achievable, even for two arms: it may be impossible to explore arm $2$ when arm $1$ is preferred initially. 
(A simple example: two arms with mean rewards $\mu_1>\mu_2$ and a Bayesian prior on $(\mu_1,\mu_2)$ such that $\mu_1$ is independent of $\mu_1-\mu_2$ \citep{ICexploration-ec15}.) Exploring arm $2$ requires an assumption: that arm $2$ can appear optimal, with some positive probability, after seeing enough samples of arm $1$. Absent such assumptions, the objective can be refined to exploring all arms that the agents can possibly be incentivized to explore.  Achieving either version of the objective takes non-trivial techniques and analyses \citep{ICexploration-ec15,ICexplorationGames-ec16}. Achieving this objective \emph{optimally} is even more subtle. Exact optimality is achieved only for deterministic rewards and only up to three arms \citep{Kremer-JPE14,Cohen-Mansour-ec19}. For randomized rewards, one can achieve optimality up to a polynomial dependence on the lower bound, and only for independent priors \citep{Selke-PoIE-ec21}. The minimal number of rounds needed to visit arm $2$ (with any algorithm) can be arbitrarily large depending on the prior \citep{Selke-PoIE-ec21}.

Revealing the algorithm's full history to each agent implements the ``greedy" bandit algorithm which always exploits. This algorithm suffers from linear Bayesian regret, caused by herding on a suboptimal arm. This is a common case: it holds for any Bayesian prior and even for two arms \citep[][Ch. 11]{slivkins-MABbook}. However, the greedy algorithm performs well under strong assumptions on agents' heterogeneity and the structure of rewards \citep{kannan2018smoothed,bastani2017exploiting,Greedy-Manish-18,AcemogluMMO19}.

Incentivized exploration is closely related to two important topics in theoretical economics,
Bayesian Persuasion
\citep{BergemannMorris-survey19,Kamenica-survey19}
and social learning
\citep{Horner-survey16,Golub-survey16}. The former focuses on a single-round interaction between a principal and agent(s), and the latter studies how strategic agents learn over time in a shared environment. Connection between exploration and incentives arises in other domains:
dynamic auctions
    \cite[\eg][]{AtheySegal-econometrica13,DynPivot-econometrica10,Kakade-pivot-or13},
ad auctions
    \cite[\eg][]{MechMAB-ec09,DevanurK09,Transform-ec10-jacm},
human computation
    \cite[\eg][]{RepeatedPA-ec14,Ghosh-itcs13,Krause-www13},
and competition between firms
    \cite[\eg][]{bergemann2000experimentation,keller2003price,CompetingBandits-merged}.


We consider a standard paradigm of reinforcement learning (RL): episodic RL with tabular MDPs. Other RL paradigms study MDPs with specific helpful structure such as linearity, MDPs with partially observable state (POMDPs), algorithms that interact with a single MDP throughout, and algorithms that learn over different MDPs (\emph{meta-learning}). While we focus on \emph{exploration}, considerable attention has also been devoted to \emph{planning}
\ie policy optimization given the oracle access to the MDP or full knowledge thereof. The literature on RL is vast and rapidly growing, see the book draft \citep{RLTheoryBook-20} for background.

Tabular episodic RL has been studied extensively over the past two decades; standard references include
\cite{kakade2003sample,kearns2002near,brafman2002r,strehl2006pac,strehl2009reinforcement}.
Optimal solutions have recently been obtained for unknown MDPs, both for policy optimization \citep{dann2017unifying} and for regret minimization \citep{jaksch2010near,azar2017minimax,dann2015sample}.
Our objective of exploring all reachable states is studied in \citep{kearns2002near,brafman2002r}. Moreover, it is essentially a sub-problem in \emph{reward-free RL} \citep{jin2020reward}, where one collects enough data to enable policy optimization relative to any given matrix of rewards. Sample complexity guarantees in prior work are primarily derived in frequentist settings (even when analyzing Bayesian algorithms like Thompson Sampling, as in \citet{pmlr-v40-Gopalan15}). Nevertheless, Bayesian framework, standard for modeling agents' incentives, also informs many practical RL algorithms \citep{ghavamzadeh2016bayesian}.

Multi-armed bandits can be seen as a special case of episodic RL with $H=1$ stages. They received much attention as a basic model for explore-exploit tradeoff, \eg see books \citep{Gittins-book11,Bubeck-survey12,slivkins-MABbook,LS19bandit-book}.



\section{Our Algorithm: Hidden Hallucination}\label{sec:alg}

Let us describe \mdphh, our algorithm for \traversal objective (\Cref{alg:MDP_HH}). 
Recall that we focus on deterministic MDPs for now (and randomized MPDs are treated in Section~\ref{sec:extension}).

\bluepar{Preliminaries: signals.} Our algorithm is more naturally described in a slightly more general model of incentivized RL (which reduces to the one described in the Introduction). In this model, each episode $k$ proceeds as follows:
\begin{OneLiners}
\item[1.] the algorithm chooses a signal $\signal_k$ from some fixed set $\signals$ of possible signals;
\item[2.] agent $k$ arrives, observes $k$ and $\signal_k$, and chooses a policy $\pi_k$;
\item[3.] this policy is executed in the MDP;
the algorithm observes the resultant trajectory $\traj_k$.
\end{OneLiners}
No other information is revealed to the principal or the agents. The algorithm for choosing the signals is known to the agents.%
\footnote{This algorithm is often called \emph{signaling policy}; we avoid this term to prevent confusion with ``MDP policies".}
The set $\signals$ of feasible signals can be an arbitrary countable set.

Each agent $k$ chooses the policy so as to maximize her conditional expected reward given what she knows: the round $k$, the observed signal $\signal_k$, the algorithm, the prior, and the selection rule for the previous agents. More precisely, we make an inductive definition: for each agent $k=1,2, \ldots$
\begin{align}\label{eq:greedy}
\pi_k \in \argmax_{\pi \in \Pimarkov}
    \Exp\sbr{ \valuef{\pi}{\modst} \mid \signal_k},
\end{align}
with some fixed tie-breaking, and this selection rule is known to all agents. In this definition, the signal $\signal_k$ is treated as a random variable over $\signals$, whose distribution is determined by the algorithm, the prior, and the selection rule for the previous agents. We condition on a particular realization of this random variable, as chosen by the principal and observed by the agent. This conditioning yields a conditional distribution over models $\modst$, which is the distribution that $\valuef{\pi}{\cdot}$ in \eqref{eq:greedy} is integrated over.
\footnote{The selection rule \eqref{eq:greedy} merely specifies an agent's rational response to a given signal under Bayesian uncertainty over $\modst$, when the alternatives are $\pi\in\Pimarkov$ and the rewards are $\valuef{\pi}{\modst}$. How these objects are defined is unimportant to this definition. On this level of abstraction, this is a very standard setup in theoretical economics.}

To reduce this model to the version from the Introduction (where each signal $\signal_k$ must be a policy), the principal can compute the policy $\pi_k$ in \eqref{eq:greedy} and recommend this policy directly, changing the signal to $\pi_k$. Then \refeq{eq:greedy-intro} would hold,
by a standard ``direct revelation argument" \citep{Kremer-JPE14}.

We use the model with signals because this is what our algorithm directly specifies, whereas it is unclear how explicitly specify the resulting policies $\pi_k$ in \eqref{eq:greedy}.

\bluepar{Key ideas.}
We would like to  ``punish" already-visited $\xah$ triples by making their rewards appear very small, so as to incentivize exploration of the remaining $\xah$ triples. Such ``punishment" traces back to \Ecubed{} algorithm \citep{kearns2002near}, the classic RL algorithm mentioned in the Introduction. Our algorithm implements this ``punishment" in a way consistent with agents' incentives.

Essentially, in some episodes agents should observe histories where the true rewards are replaced with small ``hallucinated" rewards. Such \emph{hallucination episodes} should be randomly hidden among many \emph{honest episodes}, in which the rewards are revealed faithfully. The agents should not know which type of episode they are in, and the probability of the hallucination episode should be low enough for the agents to (mostly) trust the hallucinated rewards.

A naive implementation of these ideas would reveal the entire history to each agent (replacing the rewards in the hallucination episodes, as discussed above). Unfortunately, this approach may reveal more than intended: agents in hallucination episodes may be able to use the trajectories from the honest episodes to infer something about rewards. This is because the agents would second-guess why the algorithm selected particular policies observed from trajectories in the past. Such second-guessing might make the agents suspect that they are in a hallucination episode, and therefore distrust the observed rewards. It appears difficult to prove that this issue does not prevent the agents from exploring.

Therefore, we only reveal \emph{partial} histories (which we call \emph{ledgers}), and design them to be interpretable in a particularly simple way. Generically, a Bayesian update for any given subset of observations may depend both on the observations and the algorithm used to collect these observations. We enforce a formal property that the Bayesian update on the revealed ledger does not depend on the algorithm. This property, called \emph{ledger hygiene}, is spelled out in Section~\ref{sec:canon}. For its sake, the ledgers only describe the past hallucination episodes, while all honest episodes are left out. The rewards in the ledger are true or hallucinated, depending on the current episode.


Another issue, which comes up for correlated priors (and also for randomized MDPs), is that the hallucinated rewards must look \emph{plausible}. For example, if an $\xah$ triple has a low-mean Bernoulli reward, then a high reward is likely to be observed if this triple is seen sufficiently often. Moreover, if two $\xah$ triples have strongly correlated mean rewards, then the hallucinated rewards should be correlated similarly. For these reasons, one cannot just hallucinate a low reward for each $\xah$ triple in the ledger.
Therefore, we hallucinate \emph{an entire MDP} with low mean rewards, and then resample the rewards according to this MDP, for all $\xah$ triples in the ledger. The hallucinated MDP must be consistent with the transition data revealed in the ledger, so we sample it from the appropriately defined conditional distribution.

Ledger hygiene and MDP hallucination ensure that agents in hallucination episodes take their observed rewards at face value. Since the hallucinated ledger actively punishes rewards from already-visited $\xah$ triples, it incentivizes the new agent towards the remaining $\xah$ triples. Consequently, the agent selects \emph{some} policy which advances exploration.


\bluepar{Algorithm specification: phases and ledgers.}
The algorithm proceeds in phases of $\nphase$ episodes each. In each phase $\ell$, it selects the \emph{hallucination episode} $\kexpl$ uniformly at random from all episodes. 
The algorithm proceeds indefinitely; we bound a sufficient number of phases in the analysis.

In each episode in phase $\ell$,  the algorithm reveals trajectories from all past-phase hallucination episodes.%
\footnote{An episode is called \emph{past-phase} if it has occurred in one of the preceding phases.}
The set of all these episodes is denoted
    $\calK_{\ell} := \{\kexpl[\ell']: \ell' \in [\ell-1]\}$.
A triple $\xah$ is called \emph{fully-explored} at phase $\ell$  if it is visited at least once
in the past-phase hallucination episodes $\calK_{\ell}$, and \emph{under-explored} otherwise. Reward information for fully-explored triples is either revealed faithfully (in honest episodes), or hallucinated. Reward information for under-explored triples is \emph{censored}: not revealed to the agents.


To make this precise, we define a \emph{ledger}: the sequence of policy-trajectory pairs from all past-phase hallucination episodes $k\in \calK_{\ell}$, in which some of the reward information may be altered. In a formula, a ledger for phase $\ell$ is a tuple
    $ \rbr{ (\pi_k,\traj'_k):\; k\in \calK_{\ell}} $,
where $\pi_k$ is a policy chosen in episode $k$, and $\traj'_k$ is some trajectory which has the same transitions as the true episode-$k$ trajectory $\traj_k$, but may modify or remove some of the rewards.%
\footnote{Removing a reward for a given stage of the trajectory can be implemented by replacing it with a special symbol such as $\bot$.}
We consider four types of ledgers, depending on how the rewards are treated:
\begin{itemize}
\item the \emph{raw ledger} $\ledrawl$ retains all information on rewards;
\item the \emph{censored ledger} $\ledcensl$ removes all reward information;
\item the \emph{honest ledger} $\ledhonl$
 retains reward information for fully-explored $\xah$ triples,
 but removes the rewards for all under-explored $\xah$ triples.
\item the \emph{hallucinated ledger} $\ledhall$
     removes reward information from all under-explored $\xah$ triples,
    and modifies (\emph{hallucinates}) the rewards for all fully-explored $\xah$ triples.
\end{itemize}


\noindent On hallucination episodes $\kexpl$, we reveal the hallucinated ledger $\ledhall$. All other episodes implement \emph{exploitation} by revealing the honest ledger $\ledhonl$. 

The remaining crucial ingredient is how to generate the hallucinated rewards. As we discussed in ``Key Ideas", directly hallucinating low rewards for specific $\xah$ pairs might appear non-plausible to the agents. Instead, we hallucinate the entire MDP. Specifically, we define the \emph{punish-event} $\EvPun$ that for all triples $\xah$ that are fully explored at phase $\ell$, the rewards are smaller than a given parameter
$\epspunish>0$. That is:
\begin{align}\label{eq:punish-event-defn}
\EvPun = \cbr{ \sfr_{\modst}\xah \le \epspunish: \;
                \text{all triples $\xah$ that are fully explored at phase $\ell$}}.
\end{align}
Consider \refeq{eq:algo-halposterior-defn}, the posterior distribution of the true model $\modst$ given $\EvPun$ and the censored ledger $\ledcensl$, \ie the transition data. We draw from this posterior to hallucinate an MDP model $\modhall\in \modtotal$. Finally, we use $\modhall$ to hallucinate rewards for each fully-explored $\xah$ triple.

\begin{algorithm}[h]
  	\begin{algorithmic}[1]
  	\State{}\textbf{Input: }
        phase length $\nphase$,
        punishment parameter $\epspunish>0$
    \For{each phase $\ell= 1,2,\;\dots$}
    \State{}$\Phase_{\ell} = (\ell-1) \nphase  + [\nphase] \subset \N$
        ~~~\algcomment{\% the next $\nphase$ episodes}
    \State{}Draw ``hallucination episode" $\kexpl$ uniformly from $\Phase_{\ell}$
    \For{each episode $k \in \Phase_{\ell}$}
    \If{$k = \kexpl$}
    \algcomment{\qquad\% hallucination episode}
    \State{}\algcomment{\% censored ledger $\ledcensl$,
        punish-event $\EvPun$ with parameter $\epspunish$}
    \State{}Define distribution $\posteriorhall$ over MDP models by
        \algcomment{\quad\% hallucinated posterior}
        \begin{align}\label{eq:algo-halposterior-defn}
         \posteriorhall(\modclass) := \Pr\sbr{\modst \in \modclass \mid \ledcensl,\; \EvPun},
            \quad\forall \text{ measurable }\modclass\subset\modtotal.
         \end{align}
    \State{}Draw MDP model $\modhall$ at random from $\posteriorhall$\label{line:modhall}
        \algcomment{\quad\% hallucinated MDP}
     \State{}For each fully-explored $\xah$ triple,
        \algcomment{\quad\% hallucinate rewards}
     \State{}\hspace{2em}each time this triple appears in the ledger,
     \State{}\hspace{2em}draw its reward as prescribed by $\modhall$.
     \State{}Form $\ledhall$ by inserting the hallucinated rewards into $\ledcensl$
        \algcomment{\quad\% hallucinated ledger}
     \State{}Reveal hallucinated ledger: $\ledrevk \gets \ledhall$
    \Else{} \algcomment{\qquad\% exploitation}
     \State{}Reveal honest ledger: $\ledrevk \gets \ledhonl$.
    \EndIf
    \State{}Observe the trajectory $\traj_k$ from this episode.
  \EndFor
  \EndFor
  \end{algorithmic}
  \caption{\mdphh}
  \label{alg:MDP_HH}
	\end{algorithm}




\bluepar{Efficient computation.} The only computationally intensive step in \Cref{alg:MDP_HH} is the hallucination of $\modhall$ from distribution  \eqref{eq:algo-halposterior-defn}. This step can be made computationally efficient if the Bayesian prior is independent across the $\xah$ triples (jointly for rewards and transitions). This is because one need only condition on observed transitions from each $\xah$ triple independently to sample rewards. Similarly, if (joint) rewards are independent from (joint) transitions (even though the rewards themselves can be arbitrarily correlated across $\xah$ triples, and the same for transitions), then rewards can be sampled directly from the prior. We do not provide an efficient implementation for the general case, mainly because computation Bayesian MDPs is not yet well-understood (\eg for policy optimization, Bayesian update, and posterior sampling).

\bluepar{Comparison to prior work.}
Each hallucination episode (which implements exploration) is hidden among many honest episodes which implement exploitation. This extends the ``hidden exploration" technique from incentivized exploration in bandits
(\citep{ICexploration-ec15}, also see \citep[][Ch. 11]{slivkins-MABbook}), where each exploration round is randomly hidden among many exploitation rounds. (In fact, this technique underlies the trivial reduction discussed in the Introduction.) The difference is that an exploration round can directly recommend any desired action. In contrast, a hallucination episode recommends a very particular policy, constructed indirectly as the agent's reaction to a carefully designed ledger. This indirect construction is what drives the exponential improvement over the trivial reduction.

/
\section{Analysis: Deterministic MDPs and Independent Priors}
\label{sec:analysis-basic}

We present analysis for the basic case of deterministic MDPs and reward-independent priors (defined below), retaining the main ideas while avoiding some heavy technicalities. The general case is deferred to Section~\ref{sec:extension}.

\begin{definition}[Reward-Independence]\label{defn:reward_independence} A prior $\prior$ over $\modtotal$ is called \emph{reward-independent} if mean rewards $r_{\modst}\xah$ are independent across $\xah$ triples, and jointly independent of the transition probabilities. (The transition probabilities can be arbitrarily correlated across the $\xah$ triples.)
\end{definition}

We capture the dependence on the prior via these two parameters:
\begin{align}\label{eq:fmin_argmin}
f_{\min}(\eps) &:= \min_{x,a,h}\quad
    \Pr\sbr{ \sfr_{\modst}\xah \le \eps}
\quad\text{and}\quad
r_{\min} := \min_{x,a,h}\quad
    \Exp\sbr{ \sfr_{\modst}\xah }.
\end{align}
Both parameters take the worst case across all $\xah$ triples. Here, $r_{\min}$ is simply a uniform lower bound on the prior mean rewards. Because of reward-independence, the probability of the punish-event is lower-bounded as
    $\Pr\sbr{\EvPun} \geq \rbr{f_{\min}(\epspunish)}^{-SAH}$.

Our main result guarantees \traversal in the number of episodes that is exponential in $SAH$.

\begin{theorem}\label{thm:det_mdp}
Consider a reward-independent prior $\prior$ supported on deterministic MDPs. Assume that $r_{\min}>0$ and
    $\calC  := f_{\min}\rbr{ \epspunish}>0$,
where
    $\epspunish = r_{\min}/2H$.
Consider \Cref{alg:MDP_HH} with punishment parameter $\epspunish$,  and
    $\nphase = \ceil{ 6H\,r_{\min}^{-1}\,\calC^{-SAH} }$.
The algorithm visits a new $\xah$ triple in every phase, until all reachable triples are visited. This takes at most this many episodes:
    \[ K = \calC^{-SAH}\cdot O\rbr{ SAH^2\, r_{\min}^{-1} }.\]
\end{theorem}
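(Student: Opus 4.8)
The high-level strategy is to track, phase by phase, the set of fully-explored triples and argue that as long as some reachable triple remains under-explored, the hallucination episode of the next phase must visit a new one. First I would set up the deterministic bookkeeping: since $\nlearn = 1$, a triple is fully-explored at phase $\ell$ exactly when it was visited by some past-phase hallucination episode, and ``visited in $n=1$ distinct episodes'' is just ``visited.'' Let $E_\ell \subseteq \reach(\modst)$ denote the set of fully-explored reachable triples at the start of phase $\ell$; this set is non-decreasing in $\ell$. The goal reduces to showing $|E_{\ell+1}| > |E_\ell|$ whenever $E_\ell \subsetneq \reach(\modst)$, which immediately gives termination after at most $SAH$ phases with a new triple per phase, hence $K \le SAH\cdot\nphase = \calC^{-SAH}\cdot O(SAH^2 r_{\min}^{-1})$ once we plug in $\nphase = \ceil{6H r_{\min}^{-1}\calC^{-SAH}}$.

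The core of the argument is the incentive analysis for a single hallucination episode $\kexpl$. Here I would: (i) lower-bound the unconditional probability of the punish-event $\EvPun$ by $\calC^{SAH} = (f_{\min}(\epspunish))^{SAH}$ using reward-independence, exactly as flagged in the text after \eqref{eq:fmin_argmin}; (ii) argue that, conditioned on receiving the hallucinated ledger $\ledhall$, the agent's posterior puts most of its mass on being in an honest exploitation episode rather than the single hallucination episode of the phase — this is where the $1/\nphase$ prior probability of $k=\kexpl$ competes against $\Pr[\EvPun]\ge\calC^{SAH}$, and the choice $\nphase \gtrsim H r_{\min}^{-1}\calC^{-SAH}$ is exactly what makes the honest branch dominate; (iii) conclude that under the agent's posterior given $\ledhall$, every fully-explored triple has conditional mean reward at most roughly $\epspunish$ (from the punish-event) while every reachable under-explored triple has mean reward at least $r_{\min}$ (from the prior, since no reward information about it has been leaked — this is the ``data hygiene'' property, which for deterministic transitions is comparatively clean because the censored ledger reveals transition data that is itself deterministic and, in a reward-independent prior, independent of rewards). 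Then a one-line value comparison: a policy that reaches a new triple at some stage $h$ collects at least $r_{\min}$ there and loses at most $H\epspunish = r_{\min}/2$ relative to staying within $E_\ell$; choosing the constant so $H\epspunish \le r_{\min}/2 < r_{\min}$ forces the agent's optimal Markovian policy $\pi_{\kexpl}$ in \eqref{eq:greedy} to strictly prefer reaching some triple outside $E_\ell$. Since $E_\ell \subsetneq \reach(\modst)$ and transitions are deterministic, such a triple is genuinely reachable, so executing $\pi_{\kexpl}$ adds at least one new triple to $E_{\ell+1}$.

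The main obstacle I anticipate is step (ii)–(iii): making rigorous the claim that the agent, upon seeing $\ledhall$, believes it is ``almost surely'' in an honest episode and therefore believes the hallucinated rewards. The subtlety is that the hallucinated rewards are drawn from $\modhall \sim \Pr[\modst \in \cdot \mid \ledcensl, \EvPun]$, so on the hallucination branch the revealed rewards look exactly like honest rewards of a model satisfying the punish-event; one must check that the law of $\ledhall$ under the hallucination branch is absolutely continuous with respect to (indeed, close in total variation to) the law of $\ledhonl$ under honest episodes, with the discrepancy controlled by $\Pr[\EvPun]$ and $1/\nphase$. This requires carefully conditioning on the transition data (which is common to both ledgers and, by reward-independence plus determinism, carries no reward information), then comparing reward laws. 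The rest — the deterministic reachability combinatorics, the per-phase increment, and the final episode count — is routine once this incentive lemma is in hand. I would also need to confirm that the agent actually executes the recommended policy, but that is immediate from the direct-revelation setup and the fact that $\pi_{\kexpl}$ is by construction the $\argmax$ in \eqref{eq:greedy}.
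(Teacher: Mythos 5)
Your proposal follows essentially the same route as the paper: per-phase progress over at most $SAH$ phases with $\nphase$ episodes each, the incentive step balancing the $1/\nphase$ chance of hallucination against $\Pr[\EvPun]\ge \calC^{SAH}$ via data hygiene and the distributional identity between the hallucinated ledger and the honest ledger conditioned on $\EvPun$ (the paper's single-phase guarantee, \Cref{prop:mdp_hh}), and the gap computation $H\epspunish = r_{\min}/2 < r_{\min}$ using that posterior-supported models agree with $\modst$ on transitions of explored triples (the paper's deterministic simulation lemma). The subtlety you flag about comparing the laws of $\ledhall$ and $\ledhonl$ given the transition data is exactly the step the paper resolves in \Cref{claim:ledhonl_cannonical}, so the plan is sound and matches the paper's argument.
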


\begin{remark}The two assumptions in \Cref{thm:det_mdp} are in line with prior work. Indeed, let us specialize to multi-armed bandits with independent priors, and let $\sfr(a)$ denotes the mean reward of arm $a$. Then $\xah$ triples specialize to arms, which are trivially $\rho$-explorable. Exploring all arms in this setting requires $r_{\min}>0$ and $f_{\min}(r_{\min})>0$, according to the characterization in \citet{Selke-PoIE-ec21}.
\end{remark}

The rest of this section proves \Cref{thm:det_mdp}. On a high level, an agent is much more likely to face an exploitation episode than a hallucination one. So, even when shown the hallucination ledger, the agent would believe that most likely it is the honest ledger. Therefore, the agent would believe that the rewards from fully-explored $\xah$ triples are indeed small, and consequently select policies which aim to explore under-explored $\xah$ triples.

The formal proof is structured as follows. In \Cref{sec:canon}, we discuss \emph{ledger hygiene}, a crucial property of our ledgers which underpins the rest of the analysis. \Cref{sec:one_step} analyzes a single phase of \mdphh (\Cref{prop:mdp_hh}). Building on that, \Cref{sec:det_transitions} provides a self-contained proof of \Cref{thm:det_mdp}. The details are deferred to the appendices.

\subsection{Canonical Posteriors and Ledger Hygiene}
\label{sec:canon}

We want legers to be interpretable on the face value, regardless of the process used to generate it. We capture this property via the notion of \emph{canonical posterior} $\Prcan\sbr{ \cdot \mid \ledger }$, whereby we pretend that a given ledger $\ledger$ is constructed by an algorithm which deterministically chooses the policies in $\ledger$. Then, we turn to \emph{random ledgers}, \ie ledger-valued random variables. We define \emph{ledger hygiene}, a  property which asserts that the posterior given a random ledger is in fact the canonical posterior. While this is a non-trivial property, we show that the censored and honest ledgers in \mdphh satisfy this property.

We introduce an abstract notion of a \emph{partially-censored ledger} $\ledger$. Formally,
it is any dataset of a particular shape: a sequence of (policy, partial-trajectory pairs) $(\pi_k,\tau_k)$,  along with a \emph{censoring set} $\calU_{\ledger} \subset [S]\times[A]\times[H]$. The reward information is censored out from all trajectories for all triples $\xah\in \calU_{\ledger}$; that is, each $\tau_k$ is of the form $(x_h,\,a_h,\, \tilde{r}_h,\,h)_{h\in[H]}$, where $\tilde{r}_h \in [0,1]$ records a reward for uncensored triples $(x_h,a_h,h) \notin \calU_{\ledger}$, but $\tilde{r}_h = \bot$ replaces the reward value with a special censoring symbol for triples in $\calU_{\ledger}$.
 The ledger is called \emph{totally-censored} if
    $\calU_\ledger = [S]\times[A]\times[H]$, so that no reward information is recorded.

Consider a partially-censored ledger $\lambda$ which stores $n$ policy-trajectory pairs, and let $\pi_1 \LDOTS \pi_n$ be the respective policies. Consider a new, non-adaptive algorithm which interacts with MDP model $\modst$, proceeds for $n$ episodes, chooses policy $\pi_k$ in each episode $k\in [n]$, and observes some trajectory, denoted $\traj_k$. Let $\Lambda$ be a random ledger with the same censoring set $U_\lambda$ and with policy-trajectory pairs
    $(\pi_k,\traj'_k)$, $k\in[n]$,
where $\traj'_k$ denotes trajectory $\traj_k$ in which all reward data for triples $\xah\in U_\lambda$ is censored out.

We define $\Prcan\sbr{ \cdot \mid \ledger }$ as a distribution over $\modtotal$ obtained  by conditioning the prior $\prior$ on the event
$\cbr{\Lambda = \ledger}$:
\begin{align}\label{eq:canon-posterior-defn}
\Prcan\sbr{\modclass \mid \ledger }
    &:= \Prop \sbr{ \modst \in \modclass \mid \Lambda = \ledger }
    \quad\forall \text{ measurable }\modclass\subset\modtotal.
\end{align}
For a particular event $\modclass$,
    $\Prcan\sbr{\modclass \mid \ledger }$
is called the \emph{canonical probability} given $\ledger$. This probability is ``canonical'' in the sense that it corresponds to the distribution of a non-adaptive algorithm which places all of its past trajectories (appropriately censored) into the ledger $\Lambda$. Given a measurable function $f:\modtotal\to \R$, we define the \emph{canonical expectation}
    $\Expcan\sbr{f(\modst)\mid \ledger}$
given $\ledger$ as the expectation of $f(\cdot)$ over distribution $\Prcan\sbr{ \cdot \mid \ledger }$.
Now we are ready to define ledger hygiene.

\begin{definition}\label{def:hygiene}
A random ledger $\ledger$ is called \emph{hygienic} if it satisfies
\begin{align}\label{eq:hygiene}
 \Pr\sbr{\modst \in \modclass \mid \ledger}  = \Prcan[\modclass \mid \ledger ]
 \quad\forall \text{ measurable }\modclass\subset\modtotal.
\end{align}
\end{definition}

One can construct numerous examples of non-hygenic ledgers, see \Cref{app:canonical_vs_mechanism}. \Eg, policies in the ledger could encode more information about $\modst$ than the canonical posterior can extract.

\begin{lemma}\label{lem:data_hygiene}
Censored ledger $\ledcensl$ and honest ledger $\ledhonl$ are both hygienic.
\end{lemma}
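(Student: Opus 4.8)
The plan is to establish ledger hygiene for both $\ledcensl$ and $\ledhonl$ by exhibiting an explicit, algorithm-independent description of how these ledgers are generated conditionally on the underlying model $\modst$, and then verifying that this conditional law coincides with the canonical ledger distribution $\Led(\Piseq;\modst;\calU)$ of \refeq{eq:ledU_def}. The key observation is that hygiene is really a statement about whether the \emph{joint} law of $(\modst, \hat{\ledger})$ under the mechanism factors as ``draw $\modst\sim\prior$, then draw the censored trajectories i.i.d.\ from $\modst$ along a policy sequence that is (conditionally) deterministic.'' So the first step is to set up a careful induction over phases and over episodes within a phase, tracking the filtration generated by the principal's observations.

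\textbf{Step 1: Reduce hygiene to a conditional-independence statement.} By Bayes' rule, \refeq{eq:hygeine} holds for a random ledger $\hat\ledger$ iff, for $\modst\sim\prior$, the conditional law of $\hat\ledger$ given $\modst$ agrees with $\Led(\Piseq_{\hat\ledger};\modst;\calU_{\hat\ledger})$ \emph{and} the policy sequence $\Piseq_{\hat\ledger}$ together with the censoring set $\calU_{\hat\ledger}$ carries no further information about $\modst$ beyond what the canonical conditioning already uses (i.e.\ they are functions of quantities that the canonical posterior treats as given). Concretely, I would show: conditionally on $\modst$, and conditionally on the realized policy sequence and censoring set, the censored trajectories in the ledger are independent across episodes, each distributed as $\cens(\sfP^{\pi_i}_{\modst};\calU)$. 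This is exactly the content of \refeq{eq:ledU_def}.

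\textbf{Step 2: Induct on phases.} The ledgers $\ledcensl$ and $\ledhonl$ aggregate trajectories only from \emph{past-phase hallucination episodes}. The subtlety is that the identity of the hallucination episode $\kexpl[\ell']$ in each past phase $\ell'<\ell$, the policy chosen by the agent in that episode (via the selection rule \refeq{eq:greedy}), and the set $\calU_\ell$ of under-explored triples are all random and potentially correlated with $\modst$ through the history. The point to hammer home: (i) $\kexpl[\ell']$ is drawn uniformly and independently of everything else, so it is ancillary; (ii) the policy $\pi_k$ chosen by agent $k$ is a deterministic function of the signal $\signal_k$ revealed to her, and the signal is itself a deterministic (randomized-with-exogenous-coins) function of the \emph{previously recorded ledgers plus the hallucination randomness}; and (iii) the set $\calU_\ell$ is a deterministic function of the past-phase hallucination trajectories' transition/visitation data, which is already part of the censored ledger. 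Hence, conditioning on $\modst$ and on the already-revealed ledger content, the next trajectory to be appended is drawn fresh from $\sfP^{\pi_k}_{\modst}$ with $\pi_k$ measurable w.r.t.\ the conditioning sigma-algebra — so appending its censoring preserves the canonical-product structure. The induction hypothesis is precisely that the ledger built so far is hygienic; the inductive step adds one more (policy, censored-trajectory) pair of the required conditional form.

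\textbf{Step 3: Handle the two ledgers uniformly, and note what could go wrong.} Both $\ledcensl$ (censoring set $=$ all triples) and $\ledhonl$ (censoring set $\calU_\ell$) are obtained by applying a censoring operation whose censoring set is measurable w.r.t.\ transition data already in the ledger; since $\Led(\Piseq;\model;\calU)$ is defined by first drawing raw trajectories and then censoring, and censoring commutes with conditioning on transition-only events, the same induction covers both. The main obstacle I anticipate is the bookkeeping in Step 2: one must be scrupulous that the hallucination randomness used to \emph{fabricate} rewards in $\ledhall$ (the draw of $\modhall$ and the resampled rewards) does not contaminate $\ledcensl$ or $\ledhonl$ — it does not, because those ledgers contain no rewards for under-explored triples and only \emph{true} rewards for fully-explored ones, and the hallucinated ledger $\ledhall$ is revealed but never \emph{recorded} back into the principal's ledger (the principal records the true trajectory $\traj_k$). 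Making the distinction between ``revealed to the agent'' and ``used to build future ledgers'' fully rigorous, and confirming that agents' policy choices (which depend on the revealed, possibly hallucinated, ledgers) are still measurable w.r.t.\ data that the canonical posterior treats as given, is the crux; everything else is a routine unrolling of the probabilistic recursion. The detailed symbolic descriptions of the four ledgers deferred to \Cref{app:detailed-spec} are what make this bookkeeping tractable.
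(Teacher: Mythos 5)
Your proposal is correct and takes essentially the same route as the paper, which packages your phase-by-phase induction as a general conditional-independence fact about ``model/trajectory/policy'' variables and then checks exactly your two key points: the policies and censoring sets entering the ledger are determined by exogenous randomness together with transition-only data from past hallucination episodes, and hallucinated data is never recorded back. The one spot to tighten in your wording is that the signal shown on a hallucination episode is built from the censored ledger $\ledcensl$ (transition data only) rather than the raw recorded ledger, which is precisely what guarantees that the recorded policies leak no reward information.
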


\begin{proof}[Proof Sketch]
The essential property we use is that the policies in $\ledcensl$, and the censoring set $\calU_{\ell}$ for $\ledhonl$, are determined exactly by $\ledcensl[\ell-1]$, that is, the visited triples $\xah$ from previous hallucination episodes. Thus, $\ledcensl$ and $\ledhonl$ do not depend on transition data which are not in their own ledgers, and do not communicate reward information (because reward information is only explicitly used on non-hallucination episodes). The formal proof is given in \Cref{proof:lem_data_hygiene}.
\end{proof}

\subsection{Analysis for a Single Phase}
\label{sec:one_step}


We derive conditions under which an agent selects a policy from some desired subset $\Pi \subset \Pitotal$.


To state these conditions, we introduce the notion of \emph{canonical gap}. The \emph{canonical value} of policy $\pi$ given ledger $\ledger$ is defined as
    $\Expcan[\valuef{\pi}{\modst} \mid \ledger]$.
The canonical gap for policy set $\Pi$ is the difference in maximal canonical value between $\Pi$ and its complement.

\begin{definition}[Canonical Gap]\label{defn:canonical_gap}
The canonical gap of policy set $\Pi \subset \Pimarkov$ given ledger $\ledger$ is
\begin{align*}
\gapcan[\Pi \mid \ledger] :=
    \max_{\pi \in \Pi}\; \Expcan[\valuef{\pi}{\modst} \mid \ledger]
    \;-\;
    \max_{\pi \not\in \Pi}\; \Expcan[\valuef{\pi}{\modst} \mid \ledger]
\end{align*}
\end{definition}


The meaning  behind this definition is that if an agent were to observe a hygienic ledger $\ledger$ with a positive canonical gap, this agent would choose some policy in $\Pi$.

Our algorithm construct hallucinated ledgers $\ledger = \ledhall$ so as to yield a positive canonical gap, and we'd like to conclude that an agent would choose some policy in $\Pi$ in the hallucination episodes (which would constitute progress towards our exploration goal). Unfortunately, random ledgers $\ledger_k$ revealed by our algorithm are not hygienic, precisely because of hallucination episodes. We circumvent this issue if
    $\gapcan\sbr{\Pi \mid \ledhall}$
is not only positive, but much larger than $1/\nphase$, where $\nphase$ is the phase duration. We formulate a condition which also depends on the punish-event $\EvPun$:
\begin{align}\label{eq:HH_good_condition}
3H/\nphase \le \Prcan\sbr{\EvPun \mid \ledcensl} \cdot \gapcan\sbr{\Pi \mid \ledhall}.
\end{align}
The essence of Hidden Hallucination as a technique is that this condition suffices.


\begin{proposition}[Hidden Hallucination]\label{prop:mdp_hh} Let $\Pi \subset \Pimarkov$ be any subset of policies. Fix phase $\ell$ in the algorithm. A policy in $\Pi$ is chosen in the hallucination episode if condition \eqref{eq:HH_good_condition} holds.
\end{proposition}

Full proof is in \Cref{proof:prop_mdp_hh}. To use this proposition, we will establish uniform (\ie data-independent) lower bounds on both terms on the right-hand side of \eqref{eq:HH_good_condition}, and set $\nlearn$ accordingly.

\begin{proof}[Proof Sketch]
Fix episode $k$. The proof readily reduces to bounding the conditional probability that $k$ is the hallucination episode, given the revealed ledger $\ledger_k$. Intuitively, we need to bound the agents' belief that they are facing a hallucinated ledger rather than an honest one. Our argument is inspired by the analysis of hidden exploration in \cite{ICexploration-ec15}, but differs in a number of respects; notably, the role of ledger hygiene.

Recall that the censored ledger $\ledcensl$ comprises all data observed by agent $k$ that is known to be faithfully transmitted by the algorithm. We condition on $\ledcensl$, and verify that
\begin{align}\label{eq:Prhall_ledhall}
\Pr[\; k=\kexpl \mid \ledcensl \;] = \Pr[\; k\neq\kexpl \mid \ledcensl,\, \EvPun\;].
\end{align}
In words, agent $k$ finds it equiprobable that she is in a hallucination episode and that she is shown an honest ledger but the true model lies in the punish-event.

When $1/\nphase$ is very small, the probability of $k = \kexpl$ is dominated by the probability of $\EvPun$, and \Cref{eq:HH_good_condition} quantifies exactly how small $1/\nphase$ must be. Note that this condition is stated in terms of canonical conditionals, which suffices because the censored and honest ledgers $\ledhonl$ and $\ledcensl$ are hygienic.
\end{proof}

\subsection{Full proof of \Cref{thm:det_mdp}}
\label{sec:det_transitions}


Let us start with some notation. Recall that fully-explored (resp., under-explored) $\xah$ triples are now simply the ones that have (resp., have not) been visited at least once during the past-phase hallucination episodes. Let $\underexplored$ denote the set of all $\xah$ triples that are under-explored in a given phase $\ell$. Let $\Pi_{\ell}$ be the set of all policies $\pi\in \Pimarkov$ which deterministically visit some triple $\xah\in \underexplored$ under the true model $\modst$. Note that $\Pi_\ell$ is non-empty if and only if some under-explored $\xah$ triple is reachable.



We apply \Cref{prop:mdp_hh} with policy set $\Pi = \Pi_\ell$, so as to guarantee that an agent selects some policy in $\Pi_\ell$ in the hallucination episode, and therefore visits some under-explored $\xah$ triple. Then all reachable $\xah$ triples will be visited after at most $SAH$ phases, \ie in at most $SAH \cdot \nphase$ episodes.

We lower-bound the two terms on the right-hand side of \refeq{eq:HH_good_condition}. First, 
reward-independence implies that
\begin{align*}
\Prcan[\EvPun \mid \ledcensl] \ge f_{\min}(\epspunish)^{SAH}.
\end{align*}
Hence, it remains to bound the canonical gap:

\begin{claim}\label{claim:gap_size}
Suppose $\epspunish \le r_{\min}/2H$ and fix phase $\ell$. Then
    $\gapcan\sbr{\Pi_{\ell} \mid \ledhall} \geq r_{\min}/2$.
\end{claim}

Then \Cref{prop:mdp_hh} follows with the choice of $\nlearn$ as in \Cref{thm:det_mdp}, and we are done.

It remains to verify \Cref{claim:gap_size}. To this end, we analyze
    $\Prcan[\;\cdot \mid \ledhall]$,
the canonical posterior given the hallucinated ledger. We prove that any model in the support of this distribution can partially \emph{simulate} the trajectory of any given policy under the true model $\modst$.

\begin{lemma}[Deterministic Simulation Lemma]\label{lem:modclass_not}
Fix phase $\ell$ and policy $\pi \in \Pitotal$. Fix any model $\model$ in the support of
    $\Prcan[\;\cdot \mid \ledhall]$.
Let $h_\pi$ be the first stage at which policy $\pi$ visits any under-explored $\xah$ triple under the fake model $\model$; let $h_\pi=H+1$ this never happens. Then:
\begin{OneLiners}
\item[(a)]
the $\xah$ triples visited by policy $\pi$ in stages $h\leq h_\pi$ are identical under $\modst$ and $\model$.
\item[(b)]
all rewards collected by policy $\pi$ in stages $h<h_\pi$ under model $\model$ are at most $\epspunish$.
\end{OneLiners}
\end{lemma}

\begin{proof}
Since the prior $\prior$ is supported on deterministic MDPs, the fake model $\model$ must have identical transitions and rewards to those in the hallucinated ledger $\ledhall$. By construction, $\ledhall$ contains transitions for all fully-explored $\xah$ triples, and they come from the true model $\modst$. Hence the transitions in models $\model$ and $\modst$ are the same for all stages $h < h_\pi$. Part (a) follows by induction over stages $h \le h_\pi$.

To prove part (b), fix any $\xah$ triple visited by policy $\pi$ in stage $h\leq h_\pi$ under the fake model $\mu$. This triple is fully-explored by part (a). Consequently, this triple is assigned expected reward at most $\epspunish$ in the hallucinated posterior \eqref{eq:algo-halposterior-defn}. Since we only deal this deterministic MDPs, this expected reward propagates as the observed reward in the hallucinated ledger $\ledhall$, and then into the fake model $\model$.
\end{proof}

\begin{proof}[Proof of \Cref{claim:gap_size}]
Denote the hallucinated ledger as $\ledger = \ledhall$. For brevity, we write \emph{fake models} to refer to all models $\model$ in the support of
    $\Prcan[\;\cdot \mid \ledger]$.

First, fix any policy $\pi\not\in \Pi_\ell$. By definition, its trajectory under the true model $\modst$ never runs out of fully-explored $\xah$ triples. By \Cref{lem:modclass_not}, its total reward under any fake model $\model$ is at most $\epspunish H$, which is at most $r_{\min}/2$ by our choice of $\epspunish$. It follows that
\begin{align}\label{eq:claim:gap_size-1}
(\forall \pi\not\in\Pi_\ell) \qquad
    \Expcan[\valuef{\pi}{\modst} \mid \ledger] \leq r_{\min}/2.
\end{align}

Second, fix any policy $\pi \in \Pi_{\ell}$. By definition, its trajectory under the true model $\modst$ visits some under-explored $\xah$ triple; let's focus on the first such triple. Take any fake model $\model$. By \Cref{lem:modclass_not}, policy $\pi$ visits $\xah$ under $\model$, too. Consequently,
    $\valuef{\pi}{\model} \geq \sfr_{\model}(x,a,h)$.
It follows that
\begin{align}
(\forall \pi\in\Pi_\ell)\qquad
\Expcan[\valuef{\pi}{\modst} \mid \ledger]
    &\geq \Expcan[\sfr_{\model}(x,a,h) \mid \ledger] \nonumber \\
    &\geq \ralt
    &\EqComment{by definition of $\ralt$} \nonumber\\
    &= r_{\min}
    &\EqComment{by \Cref{rem:reduction-to-independence}}.
    \label{eq:claim:gap_size-2}
\end{align}
The claim follows by plugging \eqref{eq:claim:gap_size-1} and \eqref{eq:claim:gap_size-2} into the definition of the canonical gap.
\end{proof}

\section{Extensions: Randomized MDPs and Arbitrary Priors}
\label{sec:extension}

\bluepar{Preliminaries.}
In a randomized MDP, rewards and transitions are randomized as follows. An \emph{MDP model} $\model$ specifies, for each $\xah$ triple,
the reward distribution
    $\sfR_{\model}\xah$
over $[0,1]$ with mean reward
    $\sfr_{\model}\xah$,
and transition probabilities
    $\sfp_{\model}\rbr{\cdot \mid x,a,h} \in \Delta(S)$.
In each stage $h\in[H]$, reward $r_h$ is drawn independently from
    $\sfR_{\model}(x_h,\, a_h,\, h)$,
and the new state $x_{h+1}\in[S]$ is drawn independently from distribution
    $\sfp_{\model}\rbr{\cdot \mid x_h,a_h,h}$.
The initial state $x_1$ is drawn independently from distribution
    $\sfp_{\model}\rbr{\cdot \mid 0}$.
For ease of exposition, we posit that all reward distributions are supported on the same countable set; the paradigmatic example is Bernoulli rewards. The shape of reward distributions does not matter for this paper, only the mean rewards do.

As the MDP is non-deterministic, we introduce $\sfP_{\model}^{\pi}$ and $\sfE^{\pi}_{\model}$ to denote expectation and operators corresponding to trajectories $\traj$ generated by policy $\pi$ and model $\model$. The policy value is defined as the expected total reward of this policy under $\sfE^{\pi}_{\model}$,
\begin{align*}
\valuef{\pi}{\model} := \textstyle \sfE_{\modst}^{\pi}\sbr{\sum_{h\in[H]} r_h}.
\end{align*}
The \traversal objective needs to be modified as well, because some $\xah$ triples might only be reachable with some (small) probability. Given $\rho\in [0,1]$, we say a triple $\xah$ is called \emph{$\rho$-reachable} under the true MDP $\modst$,  if some policy $\pi \in \Pitotal$ reaches it with probability at least $\rho$, \ie
    $\sfP_{\modst}^{\pi}\sbr{ (\bmx_h,\bma_h) = (x,a)} \ge \rho$.
Note that whether a particular triple $\xah$ is $\rho$-reachable is determined only by the state-step pair $(x,h)$.
Our objective is parameterized by this $\rho$, as well as multiplicity $n\in\N$ and confidence $\delta\in(0,1)$.

\begin{definition}
We say that an algorithm $\alg$  has $(\rho,n)$-traversed $\modst$ by episode $K$ if each $\rho$-reachable $\xah$ triple (under $\modst$) is visited in at least $n$ episodes $k \le K$. Here, an $\xah$ triple is called \emph{visited} in a given episode $k$ if $ (x_{k;h},a_{k;h},h) = \xah$. We say algorithm $\alg$ satisfies $(\rho,n,\delta,K)$-\traversal \ if
\begin{align}
\Pr[\alg \text{ has $(\rho,n)$-traversed } \modst \text{ by episode } K] \ge 1 -\delta.
\end{align}
\end{definition}

\noindent The objective is to achieve $(\rho,n,\delta,K)$-\traversal in smallest number of episodes $K$. Multiple visits ($n>1$) are usually desired in order to estimate mean rewards or transition probabilities.

\bluepar{Our algorithm.} \Cref{alg:MDP_HH} carries over with minor modifications. There is an additional parameter, target number of samples $\nlearn$. A triple $\xah$ is called \emph{fully-explored} at phase $\ell$  if it is visited at least $\nlearn$ times in the past-phase hallucination episodes $\calK_{\ell}$, and \emph{under-explored} otherwise. Recall that reward information for under-explored triples is always \emph{censored}: not included in the ledgers revealed to the agents. The punish-event is defined via \eqref{eq:punish-event-defn}, same as before, but $\sfr_{\model}\xah$ in this definition now refers to mean rewards rather than deterministic rewards. To define hallucinated rewards, each time any fully-explored $\xah$ triple appears in the ledger, we draw its reward independently from the reward distribution specified by the $\modhall$, the hallucinated MDP model.

\bluepar{Results: reward-independent priors.}
Our guarantees are most lucid for reward-independent priors, as per \Cref{defn:reward_independence}. As in the previous section, we guarantee \traversal in the number of episodes that is exponential in $SAH$. While the guarantees become more complex to account for randomization in the MDP, they use the same parameters $f_{\min}$ and $r_{\min}$, see \refeq{eq:fmin_argmin}.

\begin{theorem}\label{thm:main_indep}
Consider a reward-independent prior $\prior$. Fix parameters $\rho, \delta\in (0,1]$. Assume that $r_{\min}>0$ and
    $\calC_{\rho}  := f_{\min}\rbr{ \epspunish}>0$,
where
    $\epspunish = r_{\min}\,\rho\,/\, 18\,H$.
Consider \Cref{alg:MDP_HH} with punishment parameter $\epspunish$, appropriately chosen phase length $\nphase$, and large enough target $n=\nlearn$. This algorithm is guaranteed to $(\rho,n)$-explore with probability at least $1-\delta$ by episode $K_{\rho,n}$, where $n$ and $K_{\rho,n}$ are specified below.


For some absolute constants $c_1,c_2$, it suffices to take
\begin{align*}
n = \nlearn &\ge
     c_1\cdot \rho^{-2}\, r_{\min}^{-2}\,
     H^4 \rbr{ SAH \log \calC_{\rho}^{-1} +
        \log \frac{ SAH}{\delta\rho\, r_{\min}} }, \\
K_{\rho,n} &= c_2\cdot n\cdot \calC_{\rho}^{-SAH} \cdot \rho^{-3}\, r_{\min}^{-3}\, SAH^4.
\end{align*}
In particular, for any $n\geq 1$, one can obtain
    $K_{\rho,n} \leq n\cdot  \calC_{\rho}^{-SAH}
        \cdot \poly\rbr{\rho^{-1}\, r_{\min}^{-1}\, SAH  }
        \cdot \log\rbr{\delta^{-1}\,\calC_{\rho}^{-1}}$.
\end{theorem}

\bluepar{Results: arbitrary priors.}
Our analysis in fact extends to arbitrary priors, and yields \Cref{thm:main_indep} as a corollary. The prior-dependent parameters now need to be defined using the machinery developed in \Cref{sec:canon}: conditionally on a given partially-censored ledger and via canonical posterior/expectation (to avoid the dependence on how this larger was constructed).



\begin{align}
\qpunish(\eps) &:=
    \min_{\text{totally-censored ledgers $\ledger$}}\quad
        \Prcan\sbr{\sfr_{\modst}\xah \le \eps \;\;
            \text{for all $\xah$ triples} \mid \ledger},
            \label{eq:qpun-defn}\\
\ralt &:=
    \min_{\text{partially-censored ledgers $\ledger$}}\quad
    \min_{\xah \in \calU_{\ledger}}
    \Expcan\sbr{ \sfr_{\modst}\xah \mid \ledger }.
        \label{eq:ralt-defn}
\end{align}
In particular, the canonical posterior probability of the punish-event given the censored ledger in the algorithm,
    $\Prcan\sbr{ \EvPun\mid \ledcensl}$,
is uniformly lower-bounded by $\qpunish(\epspunish)$.
Likewise, $\ralt$ lower-bounds the posterior mean reward
    $\Expcan\sbr{ \sfr_{\modst}\xah \mid \ledger }$
given any ledger $\ledger$ that censors out this particular $\xah$ pair.

\begin{remark}\label{rem:reduction-to-independence}
These parameters can be easily related to those in \refeq{eq:fmin_argmin} under reward-independence, essentially because the conditioning on $\ledger$ vanishes. First, $\ralt=r_{\min}$. Second, the probability in \eqref{eq:qpun-defn} factorizes as
    $\prod_{\xah}\; \Prop_{\modst \sim \prior} \sbr{\sfr_{\modst}\xah \le \eps}$.
It follows that
$\qpunish(\epspunish) \geq f_{\min}^{-SAH}(\epspunish)$.
\end{remark}

The main (and most general) result of this paper, stated below, has the same shape as \Cref{thm:main_indep} for reward-independence, but
    $r_{\min}$ and $f_{\min}^{-SAH}(\epspunish)$
are replaced with, resp., $\ralt$ and $\qpunish(\epspunish)$. Accordingly, \Cref{thm:main_indep} follows immediately by Remark~\ref{rem:reduction-to-independence}. The significance of this result is that it reduces the task of designing and analyzing  mechanisms for  incentivized exploration to the task of analyzing parameters $\ralt$ and $\qpunish(\epspunish)$ for a particular prior.

\begin{theorem}\label{thm:main_prob_mdp}
Consider an arbitrary prior $\prior$. Fix parameters $\rho, \delta\in (0,1]$. Assume that $\ralt>0$ and
    $\qpunish = \qpunish\rbr{\epspunish}>0$,
where
    $\epspunish = \ralt\,\rho\,/\, 18\,H$.
Consider \Cref{alg:MDP_HH} with punishment parameter $\epspunish$, appropriately chosen phase length $\nphase$, and large enough target $n=\nlearn$. This algorithm is guaranteed to $(\rho,n)$-explore with probability at least $1-\delta$ by episode $K_{\rho,n}$, where $n$ and $K_{\rho,n}$ are specified below.


For some absolute constants $c_1,c_2$, it suffices to take
\begin{align}
n=\nlearn &\ge
     c_1\cdot \rho^{-2}\, \ralt^{-2}\,H^4
        \rbr{ S + \log \frac{SAH}{\delta\rho\cdot\ralt\cdot \qpunish} },
        \label{eq:thm:main_prob_mdp-n} \\
K_{\rho,n} &= c_2\cdot n\cdot \qpunish \cdot \rho^{-3}\, \ralt^{-3}\, SAH^4.
    \label{eq:thm:main_prob_mdp-K}
\end{align}
In particular, for any $n\geq 1$ one can obtain
 $K_{\rho,n} \leq n\cdot \qpunish
        \cdot \poly\rbr{\rho^{-1}\, \ralt^{-1}\, SAH  }
        \cdot \log\rbr{\delta^{-1}\,\qpunish^{-1}}$.
 \end{theorem}

\begin{remark}\label{rem:replace-qpun}
{Parameter $\qpunish(\epspunish)$ in \Cref{thm:main_prob_mdp} can be replaced with any number $L>0$ which lower-bounds
    $\Prcan\sbr{ \EvPun\mid \ledcensl}$
for all phases $\ell$.}
\end{remark}

\subsection{Proof overview for \Cref{thm:main_prob_mdp}}
\label{sec:probab_proof}




The ledger hygiene (\Cref{sec:canon}) and the single-step analysis (\Cref{sec:one_step}) carry over as is. We strive to mimic the rest of the analysis from the previous section (\ie \Cref{sec:det_transitions}): we carry over the major steps and deal with various complications that arise due to randomness. Some of the details are deferred to the appendices.

Let us start with some notation. Fix phase $\ell$. Let $\calF_{\ell}$ denote the $\sigma$-algebra generated by all randomness (in rewards, transitions, and the algorithm) in all phases up to and including this phase.  As before, let $\underexplored$ denote the set of all $\xah$ triples that are under-explored in this phase. Let $\Pi_{\ell,q}$ be the set of policies $\pi\in \Pimarkov$ which visit some triple $\xah\in \underexplored$ with probability at least $q$ (under the true model $\modst$).

We are interested in the event that the algorithm makes progress in a given phase $\ell$. Specifically, let
$\progress$ be the event that the algorithm visits some triple $\xah\in\underexplored$ in the hallucination episode in this phase. The progress is only probabilistic, expressed via a rather subtle statement:
\begin{align}\label{eq:genCase-progress}
\Pr\sbr{\Eexplorel \mid \calF_{\ell-1}} \geq \rhoprog
\qquad\text{with probability at least $1-\deltafail$ over $\calF_{\ell-1}$}.
\end{align}
Given the parameters from the theorem statement, the parameters in \eqref{eq:genCase-progress} are chosen as follows:
\begin{align}\label{eq:genCase-params}
\text{$\deltafail = \nphase/K_{\rho,n}$
and
$\rhoprog=\Delta_0^2/6H^2$, where $\Delta_0 = \rho\cdot\ralt/2$}.
\end{align}

\begin{lemma}[Progress]\label{lem:main_prob_lemma}
\refeq{eq:genCase-progress} holds in each phase $\ell > \nlearn$ such that some triple in $\underexplored$ is $\rho$-reachable.
\end{lemma}

The proof of \Cref{thm:main_prob_mdp} follows directly from \Cref{lem:main_prob_lemma} via a martingale-Chernoff argument which is common in the study of tabular MDPs (this is spelled out \Cref{proof:main_prob_mdp}).

We ensure that a policy from a given policy set $\Pi$ is chosen in the hallucination episode, under suitable conditions. We use a corollary of \Cref{prop:mdp_hh} which holds under a weaker condition compared to  \eqref{eq:HH_good_condition}. This new condition is somewhat subtle to define. We treat
    $\gapcan\sbr{\Pi \mid \ledger}$,
the canonical gap given ledger $\lambda$, as a random variable with randomness induced by the ledger. In a given phase $\ell$, take a conditional expectation of this random variable given the censored ledger $\ledcensl$:
\begin{align}\label{eq:mean-canonical-gap-defn}
\gapcanl[\Pi\mid \ledger]
    &= \Exp\sbr{ \gapcan\sbr{\Pi \mid \ledger} \;\mid\; \ledcensl }
    &\EqComment{mean-canonical gap}.
\end{align}
Essentially, we average out all randomness in ledger $\ledger$ that comes from the current phase; the only remaining randomness comes from $\ledcensl$. The new condition is also given ledger $\ledger = \ledhall$, but it only requires the mean-canonical gap to be bounded, not the realization of the canonical gap. The new condition is stated as follows, for some deterministic parameter $\Delta$:
\begin{align}\label{eq:HH-stronger-condition}
\gapcanl\sbr{\Pi \mid \ledhall}
    \geq \Delta
    \geq \frac{6H}{\nphase \cdot \Prcan\sbr{\EvPun \mid \ledcensl}}.
\end{align}
We prove that this condition suffices to make progress, in a probabilistic sense.

\begin{lemma}[Hidden Hallucination via mean-canonical gap]\label{cor:mdp_hh}
Fix phase $\ell$ and policy set $\Pi \subset \Pimarkov$ such that \eqref{eq:HH-stronger-condition} holds for some deterministic parameter $\Delta$. Then, with probability at least $\Delta/2H$ over the draw of $\ledhall$ (conditioned on $\calF_{\ell-1}$), an agent in the hallucination episode chooses a policy in $\Pi$.
\end{lemma}

We obtain \eqref{eq:HH-stronger-condition} with a policy set $\Pi = \Pi_{\ell,q}$ (for some $q$) and parameter $\Delta = \Delta_0$. As before, we lower-bound
    $\Prcan\sbr{\EvPun \mid \ledcensl}$
with  $\qpunish(\epspunish)$, by definition of the latter (\Cref{eq:qpun-defn}). The key is to lower-bound the mean-canonical gap, which we accomplish next. The statement is also probabilistic: the mean-canonical gap being a random variable with randomness coming from the censored ledger $\ledcensl$, we obtain a high-probability statement over the randomness in $\ledcensl$.

\begin{lemma}[Probabilistic Gap Bound]\label{lemma:gap_bound_prob}
Fix phase $\ell > \nlearn$. Recall parameters $\deltafail$ and $\Delta_0$ from \eqref{eq:genCase-params}. With probability at least $1 - \deltafail$ over the randomness in ledger $\ledcensl$, it holds that
\[ \gapcanl\sbr{\Pi_{\ell,q} \mid \ledhall}
    \geq \Delta_0,
    \quad\text{where $q = \Delta_0/3H$}.\]
\end{lemma}

These two lemmas about the canonical gap imply \Cref{lem:main_prob_lemma}, if one plugs in all the parameters. (We omit the tedious but straightforward details.)

\subsubsection*{Proof Sketch of \Cref{lemma:gap_bound_prob}}\label{ssec:sketch:lemma:gap_bound_prob}

To analyze the mean-canonical gap, we define a similar version of the canonical posterior \eqref{eq:canon-posterior-defn}. The \emph{mean-canonical posterior} $\Prcanl[\,\cdot\mid\ledger]$ given a random ledger $\lambda$ is a distribution over MDP models given by
\begin{align}\label{eq:hal-posterior-defn}
\Prcanl[\modclass \mid\ledger]
    &:= \Exp\sbr{ \Prcan\sbr{ \modclass \mid \ledger }
        \;\mid\; \ledcensl }
    \quad\forall \text{ measurable }\modclass\subset\modtotal.
\end{align}

Paralleling the deterministic analysis (\ie the proof of \Cref{claim:gap_size}), we consider
    $\Prcanl[\,\cdot \mid\ledhall]$,
the mean-canonical posterior given the hallucinated ledger $\ledhall$. We verify that, with a ``good enough" probability, a model $\model$ drawn at random from this posterior satisfies the following:
\begin{itemize}
\item[(a)] for all fully-explored $\xah$ triples, $\model$ has small mean rewards, $\sfr_{\model}\xah$.
\item[(b)] for all fully-explored $\xah$ triples, the transition probabilities, $\sfp_{\model}(\cdot \mid x,a,h)$, are close to those for the true model, $\sfp_{\modst}(\cdot \mid x,a,h)$ (and similarly for the initial state distributions, provided $\ell > \nlearn$).
\end{itemize}
We show that these properties imply a small mean-canonical gap, relying on a probabilistic version of the  simulation lemma (\Cref{lem:modclass_not}). This version is stated and proved in \Cref{lem:visitation_comparison_general}.

To establish properties (a,b), we must address several sources of randomness: (i) randomness in realized rewards and transitions, (ii) randomness in the draw of the hallucinated rewards and (iii) randomness in the agent's posterior given the hallucinated ledger. We address them simultaneously by constructing a set of ``good models'' under which points (a) and (b) hold, and using Bayesian concentration inequalities to show that with high probability under \emph{all} randomness, any model under the agents posterior given a hallucinated ledger lies in this ``good set''.

We account for the fact that the mean-canonical posterior
    $\Prcanl[\,\cdot \mid\ledhall]$
is not the posterior formed by an agent given ledger $\ledhall$ (because agents know that rewards may be hallucinated).
However, in view of \Cref{eq:Prhall_ledhall},
it is a conditional posterior given the punish-event $\EvPun$. We argue that the strength of concentration and the rarity of hallucinations (which occur only once per phase) overwhelm the effect of this conditioning. See \Cref{sec:lemma:gap_bound_prob} for the full argument.




\section{Regret implications}\label{sec:exploit}

In this section, we investigate how our algorithm can be used to balance exploration and exploitation. We run \mdphh for $K_{\rho,n}$ episodes, as specified in \Cref{thm:main_prob_mdp}; we collectively refer to these episodes as \emph{exploration epoch}. Then, we use the collected data for exploitation. Specifically, we reveal all MDP trajectories from the exploration epoch in all subsequent episodes (which are called \emph{exploitation epoch}). Call this algorithm \HHandExploit.

We investigate \emph{regret}, a standard objective in RL which compares the algorithm's reward to the best policy given the true model $\modst$. Formally, we define
\begin{align*}
    \OPT(\modst) = \max_{\pi\in\Pimarkov}\; \sbr{\valuef{\pi}{\modst}},
\end{align*}
and Bayesian regret in $K$ episodes as
\begin{align*}
\REG(K) :=
    {\textstyle \sum_{k\in [K]}}\;
        \Expop_{\model\sim\prior}\sbr{
            \OPT(\model) - \valuef{\pi_k}{\model}}.
\end{align*}
The first-order issue in regret analyses in the literature is how it scales with $K$.

We prove that for \HHandExploit, $\REG(K)$ scales as $K^{2/3}$ when the reachability parameter $\rho$ is sufficiently small and the number $K$ of episodes is sufficiently large. This scaling is in line with explore-then-exploit approach in multi-armed bandits, which achieves $T^{2/3}$ regret in $T$ rounds.%
\footnote{Explore-then-exploit is indeed an appropriate comparison for \HHandExploit, because both algorithms proceed in two epochs of predetermined duration, where in the ``exploration epoch" one explores without regard to exploitation, and in the ``exploitation epoch" one exploits without regard to exploration. This separation of ``pure exploration" and ``pure exploitation" is known to be inefficient; in particular, optimal algorithms achieve regret rates that scale as $\sqrt{T}$ (in bandits) and $\sqrt{K}$ (in RL).}

This result holds when the reachability parameter $\rho$ is low enough to guarantee that all $\xah$ triples are $\rho$-reachable, except those that cannot ever be reached. More precisely, a triple $\xah$ is called \emph{never-reachable} if no policy can reach it under any feasible model, \ie
    \begin{align*} \sfP^\pi_{\model}\sbr{ (\bmx_h,\bma_h,h) =\xah} = 0
        \quad \forall\;\text{policy  $\pi\in\Pimarkov$, model $\model\in \mathtt{support}(\prior)$}.
    \end{align*}
Thus, the threshold value of $\rho$ is defined as follows:
    \begin{definition}
$\rho_{\min} \ge 0$ is  the smallest $\rho\geq 0$ such that each $\xah$ triple is either $\rho$-reachable for any model in $\mathtt{support}(\prior)$, or is never-reachable.
\end{definition}

Further, we require this threshold value to be strictly positive. Thus, our result is stated as follows:

\begin{theorem}\label{cor:Bayesian-regret-nice}
Suppose $\rho_{\min}>0$. Recall from \eqref{eq:thm:main_prob_mdp-K} that
    $\Phi_\rho := K_{\rho,n}/ n$
is determined by $\rho$, the prior, and $S,A,H$. Consider algorithm \HHandExploit which runs for $K$ episodes. Choose reachability parameter $\rho\in (0,\rho_{\min}]$, failure parameter $\delta = \frac{1}{KH}$, and target number of samples $\nlearn = (K/\Phi_\rho)^{2/3}$. Assume that $K^{2/3}$ is large enough to upper-bound the right-hand side in \eqref{eq:thm:main_prob_mdp-n}.
Then
\[ \REG(K) \leq \tilde{O}(K^{2/3})\cdot \Phi_\rho^{1/3}\cdot H \sqrt{H(S + \log(KHS))}.\]
\end{theorem}

In the rest of this section we derive \Cref{cor:Bayesian-regret-nice} as a corollary of \Cref{thm:main_prob_mdp}, and also obtain a (weaker) bound on Bayesian regret that does not rely on the condition that $0<\rho\leq \rho_{\min}$.

We need a generic proposition on exploitation in MDPs. While similar propositions have appeared in prior work \cite[\eg][]{jin2020reward}, we use a slightly non-standard version which is Bayesian and involved $\rho_{\min}$. We provide a self-contained proof in Appendix~\ref{app:exploit}.

\newcommand{\vepslearn}{\varepsilon_{\mathrm{lrn}}}

\begin{restatable}[Exploitation]{proposition}{proprevelation}\label{prop:revelation}
Let $\alg$ be an algorithm which satisfies $(\rho,n,\delta,K_0)$-\traversal\ for some reachability parameter $\rho>0$, target of $n$ samples, failure probability $\delta$, and $K_0\geq n$ episodes. Let $\hat\pi$ be any ``exploitation policy" after $K_0$ episodes, \ie
\begin{align*}
\hat \pi \in \argmax_{\pi\in \Pimarkov}
    \Exp\sbr{ \valuef{\pi}{\modst} \mid \cbr{\text{full history of the first $K_0$ episodes}}}.
    \end{align*}
Then, with probability $1 - 2\delta$ under all sources of randomness, the per-episode regret of policy $\hat \pi$ can be upper-bounded as follows:
\begin{align}\label{eq:cor:simple-regret_hi_prob}
\OPT(\modst) - \valuef{\hat{\pi}}{\modst}     \leq {\textstyle \mathcal{O}(H^2)\cdot\left(S \rho \cdot \ind_{\{\rho > \rho_{\min}\}}+ H\sqrt{\frac{S + \log(SAHn/\delta)}{n}}\right)}.
\end{align}
\end{restatable}

\begin{remark}
Put differently, one can upper-bound the left-hand side of \eqref{eq:cor:simple-regret_hi_prob} by a given $\epsilon>0$ if
\begin{align}
\rho \le c\max\cbr{\epsilon/(SH^2),\,\rho_{\min}}
    \quad \text{and}\quad
n \ge c\cdot H^6\epsilon^{-2} \rbr{S+\log \tfrac{SAH}{\epsilon\delta}},
\end{align}
for a large enough absolute constant $c$.
\end{remark}


The generic implication on Bayesian regret (which implies \Cref{cor:Bayesian-regret-nice}) is as follows:

\begin{corollary}\label{cor:regret}
Consider algorithm \HHandExploit with a fixed reachability parameter $\rho>0$. Use the assumptions and parametrization in \Cref{thm:main_prob_mdp}. Then each episode $k$ of the exploitation epoch satisfies \eqref{eq:cor:simple-regret_hi_prob} with $\hat{\pi} = \pi_k$ and $n=\nlearn$. Thus, Bayesian regret over $K>K_{\rho,n}$ episodes satisfies
 \begin{align}\label{eq:cor:regret}
\REG(K) \leq H K_{\rho,n} + (K-K_{\rho,n}) \rbr{\Psi_{\rho,n}+ 2\delta H},
\end{align}
where $\Psi_{\rho,n}$ is the right-hand side of \eqref{eq:cor:simple-regret_hi_prob}.
\end{corollary}


\section{Conclusions and Open Questions}\label{sec:conclusions}

This paper combines reinforcement learning (RL) and incentivized exploration, and advances both. From the RL perspective, we design RL mechanisms, \ie RL algorithms which interact with self-interested agents and are compatible with their incentives. This is the first paper on ``RL mechanisms", \ie \textbf{the first paper on any scenario that combines RL and incentives}, to the best of our knowledge. From the incentivized exploration perspective, we extend the learning model in several important ways discussed in the Introduction. We adopt a relatively simple model that captures salient features present in multiple motivating scenarios: namely, agents that have repeated, MDP-like interactions with the environment and need to be incentivized to explore. However, as is quite common in both RL theory and economic theory, we do not attempt to capture all the particularities of any given scenario, and instead focus on understanding the basic model. Aside from the explicit theoretical contributions, we hope that our ``hidden hallucination" technique would be useful as a general principle, both theoretically and heuristically.


One concrete follow-up question within our model of incentivized RL concerns exploring the MDP in $K=\poly(SAH)$ episodes. However, this question is not resolved even in the ``easier" setting of incentivized exploration in bandits with correlated priors.%
\footnote{However, \cite{Selke-PoIE-ec21} achieves this for incentivized exploration in bandits with \emph{independent} priors.}

More broadly, we view our work as a ``beachhead" for further investigation. As defined in the Introduction, a model of incentivized RL consists of three components: RL problem,  strategic interactions, and performance objective. Each component can be extended in several different ways (and a given motivating scenario may require several such extensions). For the RL component, one could consider incorporating large action/state sets (with structure such as linearity), partial observations (\ie POMDPs), contextual MDPs (where a \emph{context} is observed before each episode), or non-Markovian dynamics. For the ``strategic" component, one could allow the agents to revise their choices before each stage (rather than once per episode), and support heterogenous agents with public and/or private idiosyncratic signals.%
\footnote{\citet{Jieming-multitypes18} accommodate idiosyncratic signals for incentivized exploration in bandits.} For the ``performance objective", one could study scenarios when some reachable $\xah$ triples cannot be explored in our framework, and redefine \traversal objective to explore all $\xah$ triples that \emph{can} be explored.%
\footnote{Such extensions have been studied in \cite{ICexplorationGames-ec16,Jieming-multitypes18} for ``stateless" incentivized exploration.} Moreover, a mechanism for incentivized RL could, in principle, take advantage of agents' inherent incentives to explore their own MDP, when and if they exist. Finally, regret might improve if exploration is made more adaptive: essentially, it may suffice to explore some of the $\xah$ triples more rarely (or not at all).%
\footnote{Recall that optimal regret in $K$ episodes scales as $\sqrt{K}$, whereas we only achieve $K^{2/3}$ scaling (which is optimal if one separates ``pure exploration" and ``pure exploitation"). It is unclear if $\sqrt{K}$ scaling is achievable in incentivized RL.}

\medskip

\bluepar{Subsequent work.}
Two closely related papers have appeared subsequent to the initial publication of our work on \texttt{arxiv.org}. We discuss these papers below.

\citet{IncentivizedRL-ec22} study another model that combines RL and Bayesian Persuasion.%
\footnote{Our initial publication on \texttt{arxiv.org} predates theirs by a full year.}
Their model is similar to ours in that an algorithm for episodic RL needs to incentivize self-interested agents to follow its recommendations. The key difference is that their model focuses on payoff-relevant `outcomes' that are drawn IID before each episode and observed by the algorithm but not to the agents. These `outcomes' are the only source of information asymmetry used by the algorithm to create incentives; the guarantees in \citep{IncentivizedRL-ec22} appear vacuous when the `outcomes' are not observed. In contrast, the only source of information asymmetry in our model (and all prior work on incentivized exploration) is the history of interactions with other agents. An intriguing open question is to combine these two sources so that they reinforce one another.

\citet{CombiIE-neurips22} study a model of incentivized exploration with large, structured action set and correlated priors.%
\footnote{Our initial publication on \texttt{arxiv.org} predates theirs by more than a year, and acknowledged by theirs as prior work.}
Their learning problem is combinatorial semi-bandits, a version of bandits where the learner chooses among feasible \emph{subsets} of arms in each round, and rewards of all chosen arms are observable. This problem is stateless, and therefore ``easier" than ours. They  achieve stronger results on regret-minimization: essentially, Thompson Sampling is BIC for their model when initialized with $N$ samples of each atom, for some $N$ determined by the prior. In particular, their algorithm attains regret which scales as $\tilde{O}(\sqrt{T})$ in the number of rounds $T$. However, their solution for collecting one sample of each atom takes time exponential in the number of atoms, similar to how the number of episodes for our algorithm is exponential in $SAH$.

\bibliographystyle{plainnat}
\bibliography{bib-abbrv,bib-AGT,bib-bandits,bib-slivkins,bib-competition,bib-RL}

\appendix

\newpage
\addtocontents{toc}{\protect\setcounter{tocdepth}{2}}
\tableofcontents
\newpage

\section{Comparing Canonical and Mechanism Conditionals \label{app:canonical_vs_mechanism}}
\begin{example}[Canonical v.s. Mechanism Conditionals: Fabricated Rewards] Consider an extremely simplified setting with $X = H = A = 1$, and where the reward is deterministic. Thus, each model is specified by a single numerical reward value $r_{\model} \in [0,1]$. There is just one policy, say $\pi_0$, and $\orac(\pi_0;\model) = r_{\model}$; every trajector is specified by an $r \in [0,1]$, and any ledger with one trajectory is of the form $\ledger = (r,\pi_0)$.

Fix $r_1 \in \supp(\prior)$, and consider a mechanism whose signal to the agent at $k = 1$ is a ledger $\hat{\bm{\ledger}}_1$. Suppose that the mechanism  deterministically chooses this to be equal to $\ledger_1 := (r_1,\pi_0)$. This  is done without any information from the true instance $\modst$, and hence conditioning on the fact that the mechanism reveals the $\ledger_1$ is uninformative: $r' \in [0,1]$ $\Pr[r_{\modst} = r' \mid \ledger_1] = \prior(r')$ just coincides with the prior mass on $r'$. However, the canonical posterior, which pretends $\ledger_1$ was actually derived from observed data, is a dirac mass on $r_1$: $\Prcan[ r_{\modvarhat} = r' \mid \ledger_1] = \I( r' = r_1)$.
\end{example}
More subtly, the canonical probabilities also prevent the agent from gleaning information from which policies were used in the ledger, or which censoring set $\calU$ was used:
\begin{example}[Canonical v.s. Mechanism Conditionals: Policy Selection] Consider a setting with $X = H = 1$ and $A = 2$ and deterministic rewards; i.e., two-armed bandits, where $r_{\models}(a) \in \{0,1\}$ for both actions $a \in \{1,2\}$ and all models, and the prior is uniform on resulting four possible combinations. Again, policies correspond to selecting actions.

Consider a mechanism which always takes action $a_1 = 1$ on the first episode, and then selects action $a_2 = 1$ if the first observed reward is $r_{\modst}(a_1) = 1$, and action $a_2 = 2$ otherwise. The mechanism then reveals the ledger $\ledger_2 = (a_2, r_{\modst}(a_2))$ at episode $2$. Observe that if the revealed ledger $\ledger_2$ is of the form  $(a_2,r_2) = (2,x)$, $x \in \{0,1\}$, then the true model must have $r_{\modst}(2) = x$, but also $r_{\modst}(1) = 0$, since otherwise action $a_2 = 1$ would have been selected. Hence, whenever $r_{\modst}(1) = 0$, $\ledger_2$ uniquely determines $\modst$, and  $\Pr[ \cdot \mid \ledger_2]$ is a delta-mass on the true model. However, under canonical posterior $\modvarhat \sim \Prcan[\cdot\mid \ledger_2]$, $r_{\modvarhat}(1)$ is always uniform on $\{0,1\}$, since the principal's actions (i.e. policies) are treated as deterministic in the conditional, and thus no information about action $a = 1$ is revelead. A similar discrepency between mechanism and canonical posteriors can be obtained by considering a mechanism which always selects action $a_1 = 1$, and $a_2 = 2$, but censors the reward from $a_2 = 2$ if $r_{\modst}(a_1) = 1$.
\end{example} 

\section{Proofs of Hallucination Properties \label{sec:proofs_for_hhh}}

\subsection{Proof of \Cref{lem:data_hygiene} \label{proof:lem_data_hygiene}}
	Here, we prove the data-hygiene principle. We begin with the following general fact, which clarifies the essential properties of the hidden hallucination algorithm required for $\ledhonl$ and $\ledcensl$ to be hygenic. Throughout, we abbreviate sequences of variable $(x_1,\dots,x_{\ell})$ as $x_{1:\ell}$.
	\begin{fact}\label{fact:conditional_fact} Consider a collection of random `model' variable $\modvar$, `trajectory' random variables $\hat{\bm{x}}_1,\dots,\hat{\bm{x}}_{\ell} \in \mathcal{X}$, and `policy' random variables $\hat{\bm{y}}_1,\dots,\hat{\bm{y}}_{\ell} \in \mathcal{Y}$. Assume that $\mathcal{X}$ and $\mathcal{Y}$ are countable, and that the above random variables have joint law $\Pr$. Suppose that
	\begin{itemize}
	\item[(a)] There is function $\Pr_0[\cdot \mid \model,y]$ whose values are probability distributions of $\calX$ such that 
	\begin{align*}
	\Pr[\hat{\bm{x}}_{i} \in \cdot \mid \hat{\bm{x}}_{1:i-1},\hat{\bm{y}}_{1:i}, \modvar]  \quad=\quad \Pr_0[\hat{\bm{x}}_{i}  \in \cdot \mid \modvar, \hat{\bm{y}}_i]
	\end{align*}
	\item[(b)] the ``policy'' variable  $\hat{\bm{y}}_{i}$ and ``model'' variable $\modvar$ are conditionally independent  given the past ``trajectory'' variables $\hat{\bm{x}}_{1:i-1}$.
\end{itemize}
	Now, for a given sequence $y_{1:\ell} \in \mathcal{Y}^\ell$, define a distribution  $\tilde{\Pr}$ over random variables, $(\tilde{\modvar}, \tilde{\bm{x}}_{1:\ell})$ by lettting $\tilde{\modvar}$ have the same marginal distribution as $\modvar$, and letting $\tilde{\bm{x}}_{i} \sim \Pr_0[\cdot \mid \tilde{\modvar},y_i]$ are drawn independent from the law $\Pr_0$, under the fixed $y_i$ and $\tilde{\modvar}$. Then, we have the following equality of distributions:
	\begin{align}
	\Pr[\modvar \in \cdot \mid \hat{\bm{x}}_{1:\ell} = x_{1:\ell}, \hat{\bm{y}}_{1:\ell}= y_{1:\ell}] = \tilde{\Pr}[\tilde{\modvar} \in \cdot \mid \tilde{\bm{x}}_{1:\ell} = x_{1:\ell}] \label{eq:useful_fact}
	\end{align}
	\end{fact}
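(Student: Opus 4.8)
The plan is to compute both sides of \eqref{eq:useful_fact} explicitly as posterior distributions via Bayes' rule, and show the likelihood factors agree. First I would write the left-hand side. Using that $\mathcal{X},\mathcal{Y}$ are countable, for any measurable set $B$ of models,
\[
\Pr[\modvar \in B \mid \hat{\bm{x}}_{1:\ell} = x_{1:\ell}, \hat{\bm{y}}_{1:\ell} = y_{1:\ell}]
= \frac{\Exp\!\left[ \ind\{\modvar \in B\}\, \Pr[\hat{\bm{x}}_{1:\ell} = x_{1:\ell}, \hat{\bm{y}}_{1:\ell} = y_{1:\ell} \mid \modvar] \right]}{\Pr[\hat{\bm{x}}_{1:\ell} = x_{1:\ell}, \hat{\bm{y}}_{1:\ell} = y_{1:\ell}]}.
\]
The key step is to factor the joint likelihood of $(\hat{\bm{x}}_{1:\ell}, \hat{\bm{y}}_{1:\ell})$ given $\modvar$ using the chain rule one index at a time: at step $i$, condition on $(\hat{\bm{x}}_{1:i-1}, \hat{\bm{y}}_{1:i-1})$ and $\modvar$, and split off first $\hat{\bm{y}}_i$ then $\hat{\bm{x}}_i$. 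By assumption (b), $\hat{\bm{y}}_i$ is conditionally independent of $\modvar$ given $\hat{\bm{x}}_{1:i-1}$, so $\Pr[\hat{\bm{y}}_i = y_i \mid \hat{\bm{x}}_{1:i-1}=x_{1:i-1}, \hat{\bm{y}}_{1:i-1}=y_{1:i-1}, \modvar]$ does not depend on $\modvar$ — call it $g_i(y_i)$ (a function of $x_{1:i-1}, y_{1:i-1}$ only). By assumption (a), $\Pr[\hat{\bm{x}}_i = x_i \mid \hat{\bm{x}}_{1:i-1}=x_{1:i-1}, \hat{\bm{y}}_{1:i}=y_{1:i}, \modvar] = \Pr_0[\hat{\bm{x}}_i = x_i \mid \modvar, y_i]$. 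Hence
\[
\Pr[\hat{\bm{x}}_{1:\ell} = x_{1:\ell}, \hat{\bm{y}}_{1:\ell} = y_{1:\ell} \mid \modvar]
= \left( \prod_{i=1}^{\ell} g_i(y_i) \right) \cdot \prod_{i=1}^{\ell} \Pr_0[\hat{\bm{x}}_i = x_i \mid \modvar, y_i],
\]
where the first product is a constant $C = C(x_{1:\ell}, y_{1:\ell})$ independent of $\modvar$. Substituting into Bayes' rule, $C$ cancels between numerator and denominator, leaving
\[
\Pr[\modvar \in B \mid \hat{\bm{x}}_{1:\ell} = x_{1:\ell}, \hat{\bm{y}}_{1:\ell} = y_{1:\ell}]
= \frac{\Exp\!\left[ \ind\{\modvar \in B\} \prod_{i=1}^{\ell} \Pr_0[x_i \mid \modvar, y_i] \right]}{\Exp\!\left[ \prod_{i=1}^{\ell} \Pr_0[x_i \mid \modvar, y_i] \right]}.
\]

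Next I would compute the right-hand side. By construction of $\tilde{\Pr}$, the $\tilde{\bm{x}}_i$ are conditionally independent given $\tilde{\modvar}$ with $\tilde{\bm{x}}_i \sim \Pr_0[\cdot \mid \tilde{\modvar}, y_i]$, and $\tilde{\modvar} \distequals \modvar$. So the joint likelihood is simply $\Pr[\tilde{\bm{x}}_{1:\ell} = x_{1:\ell} \mid \tilde{\modvar}] = \prod_{i=1}^{\ell} \Pr_0[x_i \mid \tilde{\modvar}, y_i]$, and Bayes' rule gives exactly the same expression as above (with $\tilde{\modvar}$ in place of $\modvar$, but these have the same law). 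This establishes \eqref{eq:useful_fact}.

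The main obstacle is bookkeeping in the chain-rule factorization — specifically, being careful that assumption (b) is used at the right conditioning level (conditioning on $\hat{\bm{x}}_{1:i-1}$ but \emph{before} incorporating $\hat{\bm{y}}_i$ into the conditioning when splitting off the $\hat{\bm{x}}_i$ term), and verifying that the "policy" factors $g_i(y_i)$ genuinely do not depend on $\modvar$ even after the earlier $\hat{\bm{x}}$'s and $\hat{\bm{y}}$'s are fixed. The countability of $\mathcal{X},\mathcal{Y}$ is what lets us treat all of these as honest conditional probabilities (point masses) and avoid measure-theoretic regular-conditional-distribution subtleties; I would invoke it explicitly when writing Bayes' rule. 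Once the likelihood factorization is in hand, the cancellation of the $\modvar$-independent constant is immediate and the two sides visibly coincide.
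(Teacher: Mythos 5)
Your proposal is correct and is exactly the verification the paper intends: the paper gives no written argument beyond ``can be verified directly by writing out the relevant conditionals,'' and your Bayes-rule/chain-rule factorization, with the model-independent policy factors $g_i(y_i)$ cancelling between numerator and denominator, is precisely that computation. The one point to handle carefully (which you correctly flag as the main obstacle) is that concluding $\Pr[\hat{\bm{y}}_i = y_i \mid \hat{\bm{x}}_{1:i-1}, \hat{\bm{y}}_{1:i-1}, \modvar]$ is free of $\modvar$ requires conditional independence of $\hat{\bm{y}}_i$ and $\modvar$ given \emph{both} $\hat{\bm{x}}_{1:i-1}$ and $\hat{\bm{y}}_{1:i-1}$ (equivalently $\hat{\bm{y}}_{1:i} \perp \modvar \mid \hat{\bm{x}}_{1:i-1}$), which is slightly stronger than condition (b) as literally worded but is exactly what holds in the paper's application of the fact, where each policy variable is generated from the past trajectory variables and independent algorithmic randomness.
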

 \Cref{fact:conditional_fact} can be verified directly by writing out the relevant conditionals.

Let us now apply above the fact to the proof of \Cref{lem:data_hygiene}, beginning with the censored ledger $\ledcens$. We take $\modvar$  to be $\modst$, and in phases $i$, let $\hat{\bm{x}}_i := \cens(\traj_{\kexpl[i]})$ denote the random variable corresponding to the total censoring of the trajectory recieved on the (random) $i$-th hallucination episode, and $\hat{\bm{y}}_i := \pi_{\kexpl[i]}$ the random variable for the corresponding policy; we take $\Pr$ be the measure yielding their joint distribution. Then,
\begin{itemize}
	\item Condition (a) in  \Cref{fact:conditional_fact} satisfied since $\traj_{\kexpl[i]} \sim \sfP^{\pi}_{\modelst}$ for $\pi = \pi_{\kexpl[i]}$, even when conditioning on all past policies and trajectories.
	\item Condition (b) also holds. This is for because the choice of policy on a hallucination episode $i$ is generated purely based on the hallucination mechanism, which (1) only uses information from past hallucination episodes $1:i-1$ and (2) the only information from those past episodes are transition information (not rewards), and these are specificed by the totally censored trajectories $\hat{\bm{x}}_j := \cens(\traj_{\kexpl[j]})$, $j < i$.
\end{itemize}
  Thus, \Cref{eq:useful_fact} holds. The left-hand side of \Cref{eq:useful_fact} is precisely $\Pr[\modst \in \cdot \mid \ledcensl]$, since $\ledcensl$ comprises of precisely the trajectories and policies on past hallucination episodes. And, the right hand side of \Cref{eq:useful_fact} is precisely the canonical probability $\Prcan[\modst \in \cdot \mid \ledcensl]$ which regards the policies as fixed and non-random; hence, $\ledcensl$ is hygenic.

  The proof that $\ledhonl$ is hygenic is nearly identical. Here, we take $\hat{\bm{x}}_i := \cens(\traj_{\kexpl[i]};\calU_i)$ to be the $\calU_i$-censored trajectories, and $\hat{\bm{y}}_i = (\calU_i,\pi_{\kexpl[i]})$ to be the pair consisting of the hallucination policy at phase $i$, and the censoring set $\calU_i$ for that phase. Again, condition (a) follows directly, and condition (b) follows since both the policy on the hallucination episodes $\pi_{\kexpl[i]}$ and the censoring set $\calU_i$ are determined by algorithmic randomness and past totally censored ledgers from hallucination episodes, $\cens(\traj_{\kexpl[j]})$, $j < i$. Since $\hat{\bm{x}}_i := \cens(\traj_{\kexpl[i]};\calU_i)$ only censors triples in $\calU_i$, $\hat{\bm{x}}_{1:i-1}$ uniquely determines $\cens(\traj_{\kexpl[j]}), ~j < i$. Thus, given $\hat{\bm{x}}_{1:i-1}$, , $\hat{\bm{y}}_i = (\calU_i,\pi_{\kexpl[i]})$ is independent of the model $\modst$. The desired conclusion follows.
	\qed

\subsection{Proof of \Cref{prop:mdp_hh} \label{proof:prop_mdp_hh}}

\newcommand{\ledvarhat}{\hat{\ledvar}}
\newcommand{\Zhal}{Z_{\mathrm{hal}}}
\newcommand{\phal}{p_{\mathrm{hal}}}
\newcommand{\Bernoulli}{\mathrm{Bernoulli}}
\newcommand{\khatexpl}{\widehat{\bm{k}}_{\ell}^{\mathrm{\ell}}}

Thourought, we fix an episode $k \in \Phase_{\ell}$ for a given phase $\ell \in \N$. Recall that $\Pi \subset \Pitotal$ is our target set of policies. We have to show that, given that if the revealed ledger $\ledger_k$ satisfies
\begin{align}
\frac{1}{\nphase} \le \frac{\gapcan[\Pi \mid \ledger_k] \cdot \Prcan[\EvPun\mid \ledcensl]}{3H},
\label{eq:hh_condition}
\end{align}
it holds that any maximizer of $\Exp[\valuef{\pi}{\modst} \mid \ledvarhat_k = \ledger_k]$ lies in $\Pi$.

The central objects in our analysis is the following $\Prmech$-measurable random variable $\Zhal$ and its conditional expectation:
\begin{align}
\Zhal = \ind_{\{k = \kexpl\}}, \quad \phal := \Pr[\Zhal = 1\mid \ledger_k]
\end{align}
In words, $\phal$ captures the  agent's suspicion that the ledger $\ledger_k$ they are shown at episode $k$ was hallucinated, and not the honest ledger with the true rewards. First we show that if this probability is sufficiently small, then any Bayes-Greedy policy lies in $\Pi$:
\begin{claim}\label{claim:phal_gap} Fix a ledger $\ledger$ satisfying $\phal < \gapcan[\Pi \mid \ledger_k ]/2H$. Then any maximizer of $\Exp[\valuef{\pi}{\modst} \mid  \ledger_k = \ledger]$ lies in $\Pi$.
\end{claim}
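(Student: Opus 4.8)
The plan is to split the agent's posterior at episode $k$ into an \emph{exploitation} contribution and a \emph{hallucination} contribution according to whether $k$ is the hallucination episode, to pin down the exploitation contribution using data hygiene, and to bound the hallucination contribution crudely. Writing $\phal = \Pr[\Zhal = 1 \mid \ledger_k = \ledger]$ as in the statement and abbreviating $\Delta := \gapcan[\Pi \mid \ledger]$, the law of total expectation (conditioning further on $\Zhal$) gives, for every $\pi \in \Pimarkov$,
\[
\Exp[\valuef{\pi}{\modst} \mid \ledger_k = \ledger] = \phal\,\Exp[\valuef{\pi}{\modst} \mid \ledger_k = \ledger,\ \Zhal = 1] + (1-\phal)\,\Exp[\valuef{\pi}{\modst} \mid \ledger_k = \ledger,\ \Zhal = 0].
\]

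On the exploitation branch $\{\Zhal = 0\}$ the revealed ledger is $\ledhonl$. The current phase's hallucination episode $\kexpl$ is drawn uniformly and independently of $\modst$ and of $\ledhonl$ (which is a function of past-phase data only), so further conditioning on $\Zhal = 0$ leaves the conditional law of $\modst$ given $\ledhonl = \ledger$ unchanged; by \Cref{lem:data_hygiene} that law is the canonical posterior, whence $\Exp[\valuef{\pi}{\modst} \mid \ledger_k = \ledger,\ \Zhal = 0] = \Expcan[\valuef{\pi}{\modst} \mid \ledger]$. On the hallucination branch $\{\Zhal = 1\}$ the revealed ledger is $\ledhall$, which is \emph{not} hygienic, so I would not identify the posterior but only use the trivial bound $0 \le \Exp[\valuef{\pi}{\modst} \mid \ledger_k = \ledger,\ \Zhal = 1] \le H$ (rewards in $[0,1]$, horizon $H$). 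Substituting into the decomposition yields, for all $\pi$,
\[
(1-\phal)\,\Expcan[\valuef{\pi}{\modst} \mid \ledger] \le \Exp[\valuef{\pi}{\modst} \mid \ledger_k = \ledger] \le \phal H + (1-\phal)\,\Expcan[\valuef{\pi}{\modst} \mid \ledger].
\]

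To finish, take $\pi^{\star} \in \argmax_{\pi \in \Pi}\Expcan[\valuef{\pi}{\modst} \mid \ledger]$ and let $\pi' \in \Pimarkov \setminus \Pi$ be arbitrary. Subtracting the upper bound for $\pi'$ from the lower bound for $\pi^{\star}$ and using the definition of $\Delta$ gives
\[
\Exp[\valuef{\pi^{\star}}{\modst} \mid \ledger_k = \ledger] - \Exp[\valuef{\pi'}{\modst} \mid \ledger_k = \ledger] \ge (1-\phal)\,\Delta - \phal H.
\]
Since canonical values lie in $[0,H]$ we have $\Delta \le H$, so the hypothesis $\phal < \Delta/2H$ forces $\phal < \tfrac12$; hence $(1-\phal)\Delta > \tfrac12\Delta > \phal H$ and the right-hand side is strictly positive. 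Thus $\pi^{\star}$ strictly dominates every policy outside $\Pi$ under the true posterior $\Exp[\,\cdot \mid \ledger_k = \ledger]$, so every maximizer of $\pi \mapsto \Exp[\valuef{\pi}{\modst} \mid \ledger_k = \ledger]$ --- in particular the policy selected by the Bayes-greedy rule \eqref{eq:greedy} --- lies in $\Pi$. The one step requiring care is the exploitation branch: one must explicitly verify the independence of $\Zhal$ from $(\modst, \ledhonl)$ (this is where the phase structure of \Cref{alg:MDP_HH} is used) before invoking \Cref{lem:data_hygiene}; everything else is routine manipulation of a two-term convex combination together with the bound $\valuef{\pi}{\cdot} \in [0,H]$. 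Note also that the argument uses nothing about $\phal$ beyond the stated inequality, so it slots directly into the proof of \Cref{prop:mdp_hh} once that supplies the requisite bound on $\phal$.
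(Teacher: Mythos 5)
Your argument is correct and follows essentially the same route as the paper's proof: decompose on the event $\Zhal=\I\{k=\kexpl\}$, bound the hallucination branch trivially by $H$, identify the exploitation branch with the canonical posterior via \Cref{lem:data_hygiene} (plus independence of the algorithmic randomness), and conclude from $(1-\phal)\gapcan - \phal H>0$ using $\gapcan\le H$. Your explicit choice of $\pi^{\star}$ as the canonical maximizer within $\Pi$ is exactly the instantiation the paper's proof implicitly relies on when lower-bounding the difference by $\gapcan$, so nothing is missing.
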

\begin{proof}[Proof of \Cref{claim:phal_gap}] Fix two policies $\pi_1 \in \Pi,\pi_2 \in \Pitotal \setminus \Pi$. Fix an abitrary ledger $\ledger$ in the support of $\ledger_k$; it will be clearner to reason about conditionals $\{\ledger_k = \ledger\}$. It suffices to show that for arbitrary $\ledger$, if $\phal(\ledger) := \Pr[\Zhal = 1\mid \ledger_k = \ledger] <  \gapcan[\Pi \mid \ledger]/2H$, then
\begin{align*}
\Exp[\valuef{\pi_1}{\modst} - \valuef{\pi_2}{\modst} \mid \ledger_k = \ledger] > 0.
\end{align*}
To this end, we lower bound the above difference
\begin{align*}
&\Exp[\valuef{\pi_1}{\modst} - \valuef{\pi_2}{\modst} \mid \ledger_k = \ledger]\\
&\quad= \Exp[\ind_{\{\Zhal = 1\}} \left(\valuef{\pi_1}{\modst} - \valuef{\pi_2}{\modst}\right) \mid \ledger_k = \ledger]\\
&\qquad+ \Exp[\ind_{\{\Zhal = 0\}} \left(\valuef{\pi_1}{\modst} - \valuef{\pi_2}{\modst}\right) \mid \ledger_k = \ledger]\\
&\overset{(i)}{\ge}  \Exp[\ind_{\{\Zhal = 0\}} \left(\valuef{\pi_1}{\modst} - \valuef{\pi_2}{\modst}\right) \mid  \ledger_k = \ledger] - H\phal(\ledger)\\
&\overset{(ii)}{=}   (1-\phal)\Exp[ \valuef{\pi_1}{\modst} - \valuef{\pi_2}{\modst} \mid \ledhonl = \ledger] - H\phal(\ledger)\\
&\overset{(iii)}{=}   (1-\phal)\gapcan[ \Pi \mid \ledger] - H\phal(\ledger),
\end{align*}
where $(i)$ uses that values are upper bounded by $H$, and $(ii)$ uses that $\Zhal$ is selected using independent randomness, and when $\Zhal = 0$, the revealed ledger is the honest ledger: $\ledger_k= \ledhonl$. Equality $(iii)$ is precisely the definition of the canonical gap, \Cref{defn:canonical_gap}
Now, since the honest ledger satisfies the data hygiene guarantee (\Cref{lem:data_hygiene}), we obtain
\begin{align*}
\Exp[\valuef{\pi_1}{\modst} - \valuef{\pi_2}{\modst}\mid \ledhonl = \ledger] &= \Expcan[\valuef{\pi_1}{\modst} - \valuef{\pi_2}{\modst} \mid \ledger],
\end{align*}
 Hence, combining the two displays, and using $\gapcan \le H$ (since all values are bounded by $H$),
\begin{align*}
\Exp[\valuef{\pi_1}{\modst} - \valuef{\pi_2}{\modst} \mid  \ledger_k = \ledger] &\ge (1 - \phal)\gapcan[ \Pi \mid \ledger] - \phal(\ledger) H \\
&\ge \gapcan[ \Pi \mid \ledger] - 2\phal(\ledger) H. \qquad\qedhere
\end{align*}
\end{proof}
To apply \Cref{claim:phal_gap}, we need to further analyze the term $\phal$. To do so, we need the following intermediate claim, which says that the conditional distribution of the honest ledger on the event that model is in the punishing set coincides with the distribution of the conditional ledger:
\newcommand{\ledcs}{\ledger_{\mathrm{cens}}}

\begin{claim}\label{claim:ledhonl_cannonical} The following equality of distributions holds:
\begin{align*}
\Pr[\ledhonl = \cdot \mid \EvPun, \ledcensl ] = \Pr[\ledhall = \cdot \mid \ledcensl ]
\end{align*}
\end{claim}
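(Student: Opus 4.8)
The strategy is to show that both sides of the asserted identity are the law of the \emph{same} two-stage experiment: draw a model $\modvarhat\sim\Prcan[\,\cdot\mid\ledcensl,\EvPun]$, and then, keeping the state/action/transition data of $\ledcensl$ fixed, re-impute rewards $r_{k;h}\indsim\mathcal{R}_{\modvarhat}(x_{k;h},a_{k;h},h)$ for all $k\in\calK_\ell,\,h\in[H]$ and finally censor out the triples in $\calU_\ell$. For the right-hand side this is immediate from the construction of \mdphh{} (\Cref{eq:hallucinate_rewards,eq:hallucinate_trajectories,eq:hallucinate_ledger}): $\modhall$ is drawn from $\Prcan[\,\cdot\mid\ledcensl,\EvPun]$ (writing $\EvPun$ also for the $\ledcensl$-measurable set of models it determines via \Cref{eq:EvPun}), and $\ledhall$ is precisely the $\calU_\ell$-censoring of the trajectories of $\ledcensl$ with rewards re-drawn from $\modhall$. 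So all the work is on the left-hand side.

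The key step I would prove is the following reward-tracking extension of the argument of \Cref{lem:data_hygiene}: conditioned on $\ledcensl$, the raw ledger $\ledrawl$ has the same law as the ledger obtained by drawing $\modvarhat\sim\Prcan[\,\cdot\mid\ledcensl]$ and imputing rewards $r_{k;h}\indsim\mathcal{R}_{\modvarhat}(x_{k;h},a_{k;h},h)$ with all transitions held fixed. This is proved exactly as \Cref{lem:data_hygiene}, via \Cref{fact:conditional_fact} applied with $\hat{\bm x}_i:=\traj_{\kexpl[i]}$ the \emph{raw} trajectory on the $i$-th hallucination episode and $\hat{\bm y}_i:=\pi_{\kexpl[i]}$: hypothesis (a) holds since $\traj_{\kexpl[i]}\sim\sfP^{\pi_{\kexpl[i]}}_{\modst}$ given all past hallucination data and the policy, and hypothesis (b) holds because $\pi_{\kexpl[i]}$ and the hallucinated model drawn at phase $i$ are functions of $\ledcensl[i]$ (equivalently of $\hat{\bm x}_{1:i-1}$) together with independent hallucination randomness, and never consult $\modst$ or the true rewards of any earlier episode. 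The only point beyond \Cref{lem:data_hygiene} is that under $\sfP^{\pi}_{\modst}$ the reward coordinates are independent of the transitions and distributed as $\mathcal{R}_{\modst}$, so conditioning additionally on the transition data of $\ledcensl$ leaves them as fresh $\mathcal{R}_{\modst}$ draws; combining with the $\ledcensl$-hygiene identity $\Pr[\modst\in\cdot\mid\ledcensl]=\Prcan[\modvarhat\in\cdot\mid\ledcensl]$ gives the statement. Censoring by $\calU_\ell$ turns it into: $\ledhonl\mid\ledcensl$ has the law ``draw $\modvarhat\sim\Prcan[\,\cdot\mid\ledcensl]$, then impute $\calU_\ell$-censored rewards from $\modvarhat$''.

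Finally I would incorporate the conditioning on $\EvPun$. Since $\calU_\ell$ is $\ledcensl$-measurable, $\EvPun$ is, given $\ledcensl$, an event of the form $\{\modst\in\modclass\}$ for a $\ledcensl$-measurable set of models $\modclass$; and since the imputed rewards are conditionally independent of $\{\modst\in\modclass\}$ given $\modst$, conditioning the mixture of the previous paragraph on $\EvPun$ merely restricts the mixing measure to $\modclass$. By the definition of canonical conditioning on $\{\modst\in\modclass\}$-events this restricted measure is $\Prcan[\,\cdot\mid\ledcensl,\EvPun]$, so $\ledhonl\mid\ledcensl,\EvPun$ has exactly the law of $\ledhall\mid\ledcensl$ identified in the first paragraph, which is the claim. (The left-hand conditioning is well-defined since $\Pr[\EvPun\mid\ledcensl]=\Prcan[\EvPun\mid\ledcensl]\ge\qpunish>0$.) The one genuinely delicate point, and the main obstacle, is the raw-hygiene step: \mdphh{} \emph{does} reveal true rewards on fully-explored triples during non-hallucination episodes (through $\ledhonl$), so one must verify that these revelations never propagate into $\ledcensl$ nor into the selection of hallucination policies or hallucinated models. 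This is exactly the structural content of hypothesis (b) of \Cref{fact:conditional_fact} — ledgers aggregate only hallucination-episode trajectories, and hallucinated models are drawn from a $\ledcensl$-measurable distribution — so the fact applies, but its hypotheses should be checked with care for the raw trajectories.
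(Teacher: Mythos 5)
Your proposal is correct and follows essentially the same route as the paper's proof: both arguments reduce the claim to (i) the marginal over models coinciding with $\Prcan[\,\cdot\mid\ledcensl,\EvPun]$ (via hygiene of $\ledcensl$ and the algorithm's sampling rule for $\modhall$) and (ii) the observation that, given the model, the ledger is obtained by imputing fresh rewards from that model into the fixed transition data of $\ledcensl$ and censoring on $\calU_\ell$, since the mechanism never lets true rewards from hallucination episodes influence later ledger construction. Your phrasing as a common two-stage mixture, and your explicit routing through a raw-ledger analogue of \Cref{lem:data_hygiene}, is just a mild repackaging of the paper's marginal-plus-conditional comparison of the joint laws $(\ledhonl,\modst)\mid\EvPun,\ledcensl$ and $(\ledhall,\modhall)\mid\ledcensl$.
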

\begin{proof}
Let $\calD_{1}$ denote the joint distribution of $(\ledhonl,\modst)$ conditioned on $ \EvPun $ and $\ledcensl$, and let $\calD_{2}$ denote the joint distribution of $(\ledhall,\modhall)$ conditioned on $\ledcensl$. By marginalizing, it suffices to show that $\calD_{1} = \calD_{2}$.

First, observe that since $\modhall \sim \Pr[ \modst \in \cdot \mid \ledhall,  \EvPun]$, $\modst$ and $\modhall$ have the same marginal distribution under $\calD_1$ and $\calD_2$. Now, consider the distribution of $\ledhonl \mid \modst$ and $\ledhall \mid \modhall $ under $\calD_1,\calD_2$, respectively. The first conditional is equal to the distribution of $\ledhonl \mid \modst, \ledcensl$, and the second $\ledhonl \mid \modhall, \ledcensl$. Since all transition data in $\ledhonl$ and $\ledhall$ are determined by $\ledcensl$, it suffices to show that the rewards in both ledgers have the same distribution.

Similar to the proof of the hygiene guarantee (\Cref{lem:data_hygiene}), we observe that the data in the censored ledger $\ledcensl$ are independent of the rewards in $\ledhonl$. Hence, the rewards $r\kh$ for each triple $(x\kh,a\kh,h)$ that appears in each constituent trajectory $\traj_k$ in $\ledcensl$ are independent draws from the corresponding reward distribution under $\modst$ - namely, $r\kh\sim \mathcal{R}_{\modst}(x\kh,a\kh,h)$. Moreover the rewards in the hallucinated ledger are constructed in the same way, but with $\mathcal{R}_{\modst}$ replaced with $\mathcal{R}_{\modhall}$. Hence, equality of distribution follows.
\end{proof}
\Cref{claim:ledhonl_cannonical} forms the corner cornerstone of our upper bound on $\phal$ (that is, upper bound on the agents belief that the revealed ledger arose from hallucination):
\begin{claim}\label{claim:phal_claim} Setting  $p_0 := 1/\nphase$, the following holds for any realization of $\ledger_k$:
\begin{align*}
\phal \le \frac{1 }{1 + \frac{(1-p_0)}{p_0}\Prcan[ \EvPun \mid \ledcensl] }.
\end{align*}
\end{claim}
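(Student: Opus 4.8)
The plan is a two‑point Bayes update. Fix a realization $\ledger$ of the revealed ledger $\ledger_k$; if $\Pr[\ledhall = \ledger \mid \ledcensl] = 0$ then $\phal = 0$ and the bound is trivial, so assume $\Pr[\ledhall = \ledger \mid \ledcensl] > 0$. Write $p_0 = 1/\nphase$. The first step is to note that the indicator $\Zhal = \I\{k = \kexpl\}$ is generated by randomness independent of everything in the past (the hallucination episode is uniform on $\Phase_\ell$), so $\Pr[\Zhal = 1 \mid \ledcensl] = p_0$; moreover $\ledcensl$ is measurable with respect to the past‑phase hallucination episodes and their trajectories, none of which involves $\kexpl$, and conditioned on $\ledcensl$ together with $\{\Zhal = 1\}$ (resp. $\{\Zhal = 0\}$) the revealed ledger $\ledger_k$ is distributed exactly as $\ledhall$ (resp. $\ledhonl$). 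Hence, abbreviating $a := \Pr[\ledhall = \ledger \mid \ledcensl]$ and $b := \Pr[\ledhonl = \ledger \mid \ledcensl]$, Bayes' rule gives
\[
\phal \;=\; \Pr[\Zhal = 1 \mid \ledger_k = \ledger] \;=\; \frac{p_0\,a}{p_0\,a + (1-p_0)\,b}.
\]

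The second step relates $a$ and $b$. By Claim~\ref{claim:ledhonl_cannonical}, $a = \Pr[\ledhall = \ledger \mid \ledcensl] = \Pr[\ledhonl = \ledger \mid \EvPun,\ledcensl]$. On the other hand, intersecting with $\EvPun$ can only decrease probability, so
\[
b \;=\; \Pr[\ledhonl = \ledger \mid \ledcensl] \;\ge\; \Pr[\ledhonl = \ledger,\ \EvPun \mid \ledcensl] \;=\; \Pr[\ledhonl = \ledger \mid \EvPun,\ledcensl]\cdot\Pr[\EvPun \mid \ledcensl] \;=\; a\cdot\Pr[\EvPun \mid \ledcensl].
\]
Since $\calU_\ell$ is determined by $\ledcensl$, the event $\EvPun$ is, conditionally on $\ledcensl$, an event of the form $\{\modst \in \modclass\}$; therefore the hygiene of the censored ledger (Lemma~\ref{lem:data_hygiene}) yields $\Pr[\EvPun \mid \ledcensl] = \Prcan[\EvPun \mid \ledcensl]$, and so $b \ge a\,\Prcan[\EvPun \mid \ledcensl]$.

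The final step is elementary: the map $t \mapsto \frac{p_0 a}{p_0 a + (1-p_0)t}$ is nonincreasing for $t \ge 0$, so substituting $b \ge a\,\Prcan[\EvPun \mid \ledcensl]$ and cancelling $a > 0$ gives
\[
\phal \;\le\; \frac{p_0\,a}{p_0\,a + (1-p_0)\,a\,\Prcan[\EvPun\mid\ledcensl]} \;=\; \frac{p_0}{p_0 + (1-p_0)\,\Prcan[\EvPun\mid\ledcensl]} \;=\; \frac{1}{1 + \frac{1-p_0}{p_0}\,\Prcan[\EvPun\mid\ledcensl]},
\]
as claimed. The main obstacle I anticipate is purely bookkeeping in the first step: carefully justifying that conditioning on $\Zhal$ and $\ledcensl$ produces exactly the distributions of $\ledhall$ and $\ledhonl$ — i.e. that the hallucination‑episode index is independent of $\ledcensl$ and of the rewards realized on those episodes, and that the two ledgers depend on the history only through $\ledcensl$. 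Once this is in place, the remaining two moves (one application of Claim~\ref{claim:ledhonl_cannonical} and one of Lemma~\ref{lem:data_hygiene}, followed by monotonicity of the posterior odds) are routine.
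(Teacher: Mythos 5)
Your proposal is correct and follows essentially the same route as the paper: Bayes' rule on the two-point mixture $\Pr[\ledger_k=\ledger]=p_0\Pr[\ledhall=\ledger]+(1-p_0)\Pr[\ledhonl=\ledger]$, then bounding the likelihood ratio via \Cref{claim:ledhonl_cannonical} and $\Pr[\ledhonl=\ledger\mid\ledcensl]\ge\Pr[\ledhonl=\ledger,\EvPun\mid\ledcensl]$, replacing $\Pr[\EvPun\mid\ledcensl]$ by its canonical counterpart via hygiene, and finishing by monotonicity. The only cosmetic difference is that you condition on $\ledcensl$ from the outset whereas the paper introduces it when bounding the ratio (using that $\ledcensl$ is determined by $\ledger$), and your attribution of $\Pr[\EvPun\mid\ledcensl]=\Prcan[\EvPun\mid\ledcensl]$ to \Cref{lem:data_hygiene} is if anything the cleaner citation.
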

\begin{proof}[Proof of \Cref{claim:phal_claim}] With $p_0 := 1/\nphase$, the marginal of $\Zhal$ satisfies $\Pr[\Zhal = 1] = p_0$. Fix a ledger $\ledger$ in the support of $\ledger_k$, and again set $\phal(\ledger) := \Pr[\Zhal = 1\mid \ledger_k = \ledger]$.
\begin{align}
\phal(\ledger) &= \Pr[\Zhal = 1 \mid  \ledger_k = \ledger ]\\
&= \frac{\Pr[\Zhal = 1 \text{ and }  \ledger_k = \ledger ]}{\Pr[\ledger_k = \ledger]} \nonumber\\
&= \frac{\Pr[\ledger_k  = \ledger \text{ and } \Zhal = 1 \mid  ]}{\Pr[\ledger_k = \ledger \text{ and } \Zhal = 1] + \Pr[\ledger_k = \ledger \text{ and } \Zhal = 0 ] } \nonumber\\
&\overset{(i)}{=} \frac{p_0\Pr[\ledhall = \ledger] }{p_0\Pr[\ledhall = \ledger] + (1-p_0)\Pr[\ledhonl = \ledger] }, \nonumber\\
&\overset{(ii)}{=} \frac{1 }{1 + \frac{(1-p_0)}{p_0}\cdot \frac{\Pr[\ledhonl = \ledger]}{\Pr[\ledhall = \ledger]} }, \label{eq:phal_first_bound}
\end{align}
whre $(i)$ uses that $\Pr[\ledger_k  = \ledger \text{ and } \Zhal = 1 \mid  ] = \Pr[\ledger_k  = \ledger \mid \Zhal = 1 ] \Pr[\Zhal = 1] = p_0 \Pr[\ledger_k  = \ledger \mid \Zhal = 1 ] = \Pr[\ledhall = \ledger]$, since $\Zhal$ is independent of all observed trajectories and determines whether the revealed trajectoried $\ledger_k$ is hallucinated or honest (and a similar computation for when $\Zhal = 0$). In $(ii)$, we have assumed $\Pr[\ledhall = \ledger]  > 0$, for otherwise the upper bound on $\phal$ is immediate from the previous line.

It remains to lower bound the ratio $\frac{\Pr[\ledhonl = \ledger]}{\Pr[\ledhall = \ledger]}$, again assuming the denominator is non-zero.
We bound bound
\begin{align*}
\frac{\Pr[\ledhonl = \ledger]}{\Pr[\ledhall = \ledger]} &= \frac{\Pr[\ledhonl = \ledger, \ledcensl = \ledcs]}{\Pr[\ledhall = \ledger, \ledcensl = \ledcs]}\\
&= \frac{\Pr[\ledhonl = \ledger \mid \ledcensl = \ledcs]}{\Pr[\ledhall = \ledger \mid \ledcensl = \ledcs]}\\
&\ge \frac{\Pr[\ledhonl = \ledger, \EvPun \mid \ledcensl = \ledcs]}{\Pr[\ledhall = \ledger \mid \ledcensl = \ledcs]}\\
&= \Pr[  \EvPun \mid \ledcensl = \ledcs] \cdot  \frac{\Pr[\ledhonl = \ledger \mid \EvPun, \ledcensl = \ledcs]}{\Pr[\ledhall = \ledger \mid \ledcensl = \ledcs]}
\end{align*}

 From \Cref{claim:ledhonl_cannonical}, it follows that $\Pr[  \EvPun \mid \ledcensl = \ledcs]  = \Prcan[ \EvPun \mid \ledcensl = \ledcs]$, and that
\begin{align*}
\frac{\Pr[\ledhonl = \ledger \mid \EvPun, \ledcensl = \ledcs]}{\Pr[\ledhall = \ledger \mid \ledcensl = \ledcs]} = 1.
\end{align*}
Thus, $\frac{\Pr[\ledhonl = \ledger]}{\Pr[\ledhall = \ledger]} \ge \Pr[ \EvPun \mid \ledcensl = \ledcs]$. The desired bound follows.
\end{proof}
\Cref{prop:mdp_hh} now follows from combining the above claims.
\begin{proof}[Proof of \Cref{prop:mdp_hh}]
Introduce the shorthand $q:= \Prcan[ \EvPun \mid \ledcensl]$, and recall $p_0 = 1/\nphase$. From \Cref{eq:hh_condition}, $p_0 \le p_{\star} := q \cdot \gapcan[\Pi \mid \ledger_k]/3H$.  From \Cref{claim:phal_claim}, we have the upper bound
\begin{align}
\phal \le \frac{1}{1 + q(1-p_0)/p_0} \label{eq:phal_fin_bound}.
\end{align}
Hence, \Cref{claim:phal_gap} ensures that all Bayes-greedy policies $\pi_k$ lie in $\Pi$ as soon as we can ensure that the RHS of \Cref{eq:phal_fin_bound} is strictly less than $\gapcan[\Pi \mid \ledger_k]/2H$. To this end, note that for $p_0 \le p_{\star}$ for $p_{\star}$ above, the bounds $\gapcan \le H$ and $q \le 1$ entail $p_0 \le 1/3$. Moreover, $p_0 \mapsto \frac{1}{1 + q(1-p_0)/p_0}$ is decreasing in $p_0$. Thus,
\begin{align*}
\frac{1}{1 + q(1-p_0)/p_0} \le \frac{1}{1 + q(1-p_\star)/p_\star} = \frac{1}{1 + \frac{2q}{3 q  \gapcan[\Pi \mid \ledger_k]/3H}} = \frac{1}{1 + \frac{2H}{\gapcan}} < \frac{\gapcan[\Pi \mid \ledger_k]}{2H}. \quad\qedhere
\end{align*}
\end{proof}

\subsection{Proof of \Cref{cor:mdp_hh}}
By assuming \Cref{eq:HH-stronger-condition},
\begin{align*}
\frac{1}{\nphase} \le \frac{(\Delta/2) \cdot \Prcan[ \EvPun \mid \ledcensl]}{3H}.
\end{align*}
Hence, by \Cref{prop:mdp_hh}, it holds that, whenever $\gapcan[\Pi \mid \ledhall] \ge \Delta/2$, $\pi_k \in \Pi$ for $k = \kexpl$. Thus, it suffices to show that  $\Prsimhh\left[\gapcan[\Pi \mid \ledhall] \ge \Delta/2 \mid \ledcensl\right] \ge \frac{\Delta}{2H}$. To this end, we observe that $\gapcan[\Pi \mid \ledhall] \in [-H,H]$ with probability $1$ over the draw of $\ledhall$. Hence,
\begin{align*}
\Delta &\le \Expsimhh\left[\gapcan[\Pi \mid \ledhall] \mid \ledcensl\right]\\
&\le \frac{\Delta}{2} +  H \Prsimhh\left[\gapcan[\Pi \mid \ledhall] \ge \Delta/2 \mid \ledcensl\right]. \qquad\qed
\end{align*}

\section{Proofs for the Probabilistic MDP Setting \label{sec:proofs_mdps_randomized}}

Throughout, we define
\begin{align}
&\calU_{\ell} := \{(x,a,h) \in [S] \times [A] \times [H] : N_{\ell}(x,a,h) < \nlearn\}, ~ \label{eq:calU_def}\\
&\text{ where } N_{\ell}(x,a,h) := \textstyle\sum_{k \in \calK_{\ell}}^{\ell-1} \I\{(x,a,h) \in \traj_{k}\} \nn.
\end{align}

\subsection{Proof of \Cref{lemma:gap_bound_prob} \label{sec:lemma:gap_bound_prob}}

Recall from \Cref{ssec:sketch:lemma:gap_bound_prob} the following notation:
\begin{itemize}
\item $\Prmucan[\cdot \mid \cdot]$ and $\Expmucan[\cdot \mid \cdot]$ to denote cannonical expectation distribution, which is viewed as measure over an abstract model random variable $\modvarhat$; e.g. $\Prmucan[\modvarhat \in \cdot \mid \ledger] = \Prcan[\modst \in \cdot  \mid \ledger]$.
\end{itemize}
Further, we define the \emph{random measures} $\Exphall$ and $\Prhall$:
\begin{align}
\Exphall\sbr{f(\modvarhat)}
    &:= \Expop\left[\Expmucan[f(\modvarhat) \mid \ledhall] \mid \ledcensl\right],\nonumber\\
\Prhall\sbr{\modvarhat \in \cdot}
    &:= \Expop\left[\Prmucan[\modvarhat \in \cdot \mid \ledhall] \mid \ledcensl\right]   \label{eq:hallucination_expectations},
\end{align}
where the outer expectation  the distribution of the hallucinated ledger $\ledhall$ conditioned on the totally censored ledger $\ledcensl$, and the inner expectation over the cannonical distribution given the hallucinated ledger. Note that $\Exphall$ and $\Prhall$ are functions of $\ledcensl$, though this is made
Reiterating the proof sketch, our aim is to ensure that
\begin{itemize}
\item[(a)] For highly visited triples $\xah \in \calU_{\ell}^c$, $\modvarhat$ has small rewards $\sfr_{\modvarhat}\xah$.
\item[(b)] For highly visited triples $\xah \in \calU_{\ell}^c$, the transition probabilities $\sfp_{\modvarhat}(\cdot \mid x,a,h)$ are close to those for the true model, $\sfp_{\modst}(\cdot \mid x,a,h)$ (and a similar closeness holds for the initial state distributions).
\end{itemize}

We assign the following definitions to the properties (a) - low reward,  and (b) - accurate transitions - described above.\footnote{definitions are stated for the \emph{complements} of subsets $\calU$ to remain consistent with how the definitions are used}.
\begin{definition}[Punished] Given $\calU \subset [S] \times [A] \times [H]$, we say that a model $\model$ is $\varepsilon$-punished on $\calU^c$, for all $\xah \in \calU^c$, $\sfr_{\model}\xah \le \varepsilon$.
\end{definition}
\begin{restatable}[Transition-Similar]{definition}{defnsimilar} \label{defn:similarity_transition} Let $\|\cdot\|_{\ell_1}$ denote the $\ell_1$-distance between probability distributions.  Given $\calU \subset [S] \times [A] \times [H]$, we say two models $(\model,\modst)$ are $\varepsilon$-transition-similar on $\calU^c$ if (i) $\|\sfp_{\model}(\cdot \mid 0 ) - \sfp_{\modst}(\cdot \mid 0)\|_{\ell_1} \le \varepsilon $ \emph{(closeness of initial state distribution)}, and (ii) for each $(x,a,h) \in \calU^c$, $\|\sfp_{\model}(\cdot \mid x,a,h) - \sfp_{\modst}(\cdot\mid x,a,h)\|_{\ell_1} \le \varepsilon$ \emph{(closeness of transitions on $\calU^c$)}.
\end{restatable}
 Note that transition-similarity concerns \emph{transition probabilities but \textbf{not} rewards}.
The various tolerances $\varepsilon$ for which we show these properties hold is in part determined by randomness in the transitions and rewards; these up up being quantified by terms arising from Azuma-Hoeffding's inequality:
\begin{definition}[Error Bounds]\label{defn:error_bounds} The reward and transition error bounds are
\begin{align}
\textstyle \epsr := \sqrt{ \frac{2\log(1/\delta_0)}{\nlearn}}, \quad \text{ and } \epsp:= 2\sqrt{ \frac{2(S\log(5) + \log(1/\delta_0))}{\nlearn}}, \quad \text{where } \delta_0 := \frac{\deltafail \cdot \qpunish \cdot \epspunish}{4SAH}.  \label{eq:delta_not}
\end{align}
\end{definition}
Here, $\epsr$ arises form the the two-sided concentration of the first $\nlearn$ samples from rewards at triple $\xah$ collected during the hallucination episodes; $\epsp$ reflects concentration of transitions in the $\ell_1$ norm, incurring an extra $S$ factor due to a covering argument (see \Cref{lem:conc_bounds} for details). Using these error bounds, we define the set of models:
\begin{definition}[Good Models] We define the \emph{good model set}, $\modgoodl \subset \modtotal$, as
\begin{align}
\modgoodl := \left\{\model: \begin{matrix} &\model \text{ is } (\epspunish + 2\epsr) \text{ punished  on } \calU_{\ell}^c \\
&(\model,\modst) \text{ are } 2\epsp \text{ transition-similar on } \calU_{\ell}^c\end{matrix}   \right\}. \label{eq:modgoodl_a}
\end{align}
We call models $\model \in \modgoodl$ ``good models''.
\end{definition}
Note that the above definition admits two interpretations: (a) a frequentist interpretation, where $\modgoodl$ is a fixed set depending on the unknown parameter $\modst$, and (b) a Bayesian interpretation, where $\modgoodl$ is a random set depending on the random model $\modst$.

The models in the $\modgoodl$ satisfy two key properties, for all sufficiently visited triples $\xah \in \calU_{\ell}^c$. First, the transition probabilities are $2\epsp$-close to those of the true model $\modst$, and second, the rewards from those triples are at most $\epspunish$, pluss a  $2\epsr$ error term. Crucially, the cannonical posterior given $\ledhall$, $\Prhall$, concentrates on $\modgoodl$.
\begin{lemma}\label{lem:hallucination_good_event}
For any phase $\ell > \nlearn$, the following event holds with probability $ 1- \deltafail$:
\begin{align*}
\Egoodl := \{\Prhall\left[ \modvarhat \in \modgoodl\right]\ge 1 - \epspunish\}
\end{align*}
The randomness in $\Egoodl $ is over the randomness in $\modst$ (determining $\modgoodl$) and the censored ledger $\ledcensl$ (determining $\Prhall[\cdot]$).
\end{lemma}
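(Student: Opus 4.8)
The plan is to isolate a ``good data'' event on the sample path and then argue that, conditioned on it, the agent's canonical posterior given the hallucinated ledger is forced to concentrate on $\modgoodl$. Throughout I use the standing assumption $\nlearn \ge n_0(\deltafail)$. I would first fix two auxiliary events:
\begin{itemize}
\item a transition event $\mathcal{A}_{\mathrm{tr}}$: for every $\xah \in \calU_{\ell}^c$ (and for the initial-state distribution) the empirical transition distribution formed from the $\ge \nlearn$ occurrences of $\xah$ inside $\ledcensl$ is within $\epsp$ in $\ell_1$ of $\sfp_{\modst}(\cdot\mid x,a,h)$;
\item a reward event $\mathcal{A}_{\mathrm{rw}}$: for every $\xah \in \calU_{\ell}^c$, the empirical mean of the hallucinated rewards attached to $\xah$ in $\ledhall$ is within $\epsr$ of $\sfr_{\modelhall}\xah$.
\end{itemize}
Since the transitions recorded in $\ledcensl$ are genuine i.i.d.\ samples from $\modst$, Azuma--Hoeffding together with an $\ell_1$-net of $\Delta(S)$ (the content of \Cref{lem:conc_bounds}) gives $\Pr[\mathcal{A}_{\mathrm{tr}}^c] \le SAH\,\delta_0 \le \deltafail$, the per-triple failure probability being governed by $\delta_0 = \deltafail\,\qpunish\,\epspunish/4SAH$ and the choice of $\nlearn$. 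Likewise, conditioned on $\modelhall$ the hallucinated rewards for a fixed triple are i.i.d., so $\Pr[\mathcal{A}_{\mathrm{rw}}^c \mid \ledcensl] \le 2SAH\,\delta_0$ uniformly in $\modelhall$.

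Next I would run a Bayesian posterior-concentration argument on $\Prmucan[\cdot\mid\ledhall]$. The ledger $\ledhall$ carries the \emph{true} transition data, so on $\mathcal{A}_{\mathrm{tr}}$ a likelihood-ratio bound (again \Cref{lem:conc_bounds}, applied to the $\nlearn$ transition samples per triple) forces all but an $O(SAH\,\delta_0)$ fraction of $\Prmucan[\cdot\mid\ledhall]$ onto models whose transitions on $\calU_{\ell}^c$ lie within $\epsp$ of the empirical ones, hence within $2\epsp$ of $\sfp_{\modst}$, i.e.\ $2\epsp$-similar to $\modst$. Since $\modelhall \in \modclasshall(\epspunish)$ by construction, its mean rewards on $\calU_{\ell}^c$ are at most $\epspunish$; hence on $\mathcal{A}_{\mathrm{rw}}$ the empirical hallucinated reward means are at most $\epspunish+\epsr$, and the same style of bound forces the posterior onto models with $\sfr_{\model}\xah \le \epspunish + 2\epsr$ on $\calU_{\ell}^c$. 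Combining the two, on $\mathcal{A}_{\mathrm{tr}}\cap\mathcal{A}_{\mathrm{rw}}$ we obtain $\Prmucan[\modvarhat\in\modgoodl\mid\ledhall] \ge 1 - O(SAH\,\delta_0)$.

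The one genuinely delicate point --- and the reason $\qpunish$ appears in $\delta_0$ --- is that the posterior-concentration step is applied not to honestly generated data but to $\ledhall$, whose law given $\ledcensl$ is, by \Cref{claim:ledhonl_cannonical}, the law of the honest ledger $\ledhonl$ conditioned on the rare event $\EvPun$. Thus any ``bad'' data event of probability $\le \beta$ under the honest law has probability $\le \beta/\Prcan[\EvPun\mid\ledcensl] \le \beta/\qpunish$ under the hallucinated law, which is exactly why $\delta_0$ carries the extra factor $\qpunish$. Assembling everything, on $\mathcal{A}_{\mathrm{tr}}$ (an event of $\ledcensl$ of probability $\ge 1-\deltafail$),
\[
\Prhall[\modvarhat\in\modgoodl] = \Exp\left[\Prmucan[\modvarhat\in\modgoodl\mid\ledhall] \mid \ledcensl\right] \ge \bigl(1 - O(SAH\,\delta_0)/\qpunish\bigr)\bigl(1 - 2SAH\,\delta_0\bigr) \ge 1-\epspunish,
\]
the last step using the choice of $\delta_0$; this is precisely $\Egoodl$. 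The main obstacle is not a single estimate but the bookkeeping across the three layers of randomness (sampled transitions, hallucinated rewards, and the agent's posterior) so that the $1/\qpunish$ loss from conditioning on $\EvPun$ is incurred only once and is absorbed by the slack in $\delta_0$; a secondary technical point is establishing the \emph{Bayesian} (rather than frequentist) concentration bounds with the correct $S$-dependence, which is the role of \Cref{lem:conc_bounds}.
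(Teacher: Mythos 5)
There is a genuine gap at the central step. Your auxiliary events $\mathcal{A}_{\mathrm{tr}}$ and $\mathcal{A}_{\mathrm{rw}}$ only assert that the empirical statistics in the ledger are close to $\sfp_{\modst}$ and $\sfr_{\modelhall}$; from this you then assert that ``a likelihood-ratio bound forces all but an $O(SAH\,\delta_0)$ fraction of $\Prmucan[\cdot\mid\ledhall]$'' onto models whose transitions and rewards are close to those empirical statistics. That pointwise implication is false in general: the canonical posterior given a \emph{particular} dataset need not concentrate around that dataset's empirical means (e.g.\ a prior supported on Bernoulli parameters $\{0.1,0.9\}$ and a dataset with empirical mean $0.5$ leaves the posterior split between two models, both far from $0.5$), and no likelihood-ratio computation is supplied that would rule this out. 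Posterior concentration of this kind is only available \emph{on average over honestly generated data}, via the exchangeability argument of \citep{sellke2020sample} that the paper uses: a draw $\modst'$ from the canonical posterior given the honest ledger $\ledhonl$ has the same joint law with $\ledhonl$ as the true $\modst$, so the \emph{expected} posterior mass on models far from the estimators $\thetar,\thetap$ equals the probability that the true model is far from its own empirical estimators, which is what \Cref{lem:conc_bounds} controls. Your change-of-measure observation (paying $1/\qpunish$ via \Cref{claim:ledhonl_cannonical}) is the right second ingredient, but without the two-independent-draws mechanism the posterior-concentration step has no justification, and conditioning on $\mathcal{A}_{\mathrm{tr}}\cap\mathcal{A}_{\mathrm{rw}}$ does not substitute for it.

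Two further structural discrepancies follow from this. First, the paper never uses concentration of the hallucinated rewards around $\sfr_{\modelhall}$ (your $\mathcal{A}_{\mathrm{rw}}$); instead, after replacing $\ledhall$ by $\ledhonl$ conditioned on $\EvPun$, the punished-reward half of $\modgoodl$ comes from intersecting with the event $\{\modst\in\modclasshall(\epspunish)\}$ and applying the triangle inequality through the estimator $\thetar$ built from the \emph{true} rewards, with both posterior draws concentrating around $\thetar$; this is why the bound is $\epspunish+2\epsr$ rather than $\epspunish+\epsr$ plus an unproven posterior step. Second, the passage from an in-expectation bound $\Expop\Prhall[\modvarhat\notin\modgoodl]\le 4SAH\delta_0/\qpunish$ to the high-probability event $\Egoodl$ is a Markov-inequality step with threshold $\epspunish$ --- this is precisely where the factor $\epspunish$ in $\delta_0$ is consumed --- whereas your final display tries to obtain the conditional bound directly on a good event of $\ledcensl$, a conversion your argument does not establish. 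The frequentist Chernoff bounds and the $1/\qpunish$ accounting in your write-up are fine; the missing idea is the exchangeability-based Bayesian concentration and the Markov step that together constitute the proof.
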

\begin{proof}[Proof Sketch] The proof builds on the Bayesian Chernoff bounds due to \citep{Selke-PoIE-ec21}. First, we define estimators $\theta_{\sfp}$ and $\theta_r$ of the empirical transitions and rewards of the true model $\modst$, which we show concentrate around the true transitions and rewards for sufficiently visited triples $\xah \in \calU^c$. We argue that this implies that the posteriors under $\Prhall$ must also concentrate around the truth. However, we modify \citep{Selke-PoIE-ec21} to adress that ledger $\ledhall$ with respect to which the posterior $\Prhall$ is defined is based on a samples from hallucinated model $\modhall$. This model is not an exact draw from the true posterior given $\ledcensl$ (which would be the analogue to \citep{Selke-PoIE-ec21}), but a  posterior which \emph restrict to $ \EvPun$ (that is, theevent that all rewards on $\xah \in \calU_{\ell}^c$ are at most $\epsilon$); refer back to \Cref{eq:hallucination_expectations} for the formal definition of $\Prhall$.

This difference from  \citep{Selke-PoIE-ec21} has two implications: first, we must account for the minimal probably that $\EvPun$ occurs given the censored ledger, which incurs a factor of $\qpunish$ (\Cref{eq:qpun-defn}) in our selection of $\delta_0$ . Second, this restriction allows us to argue that the posterior $\modvarhat \sim \Prhall$ is $\epspunish + 2\epsr$-punished, where the  $\epspunish$ is from restriction to the punishing class, and $2\epsr$ is from the Bayesian Chernoff. Combining with a union bound for all triples $\xah \in \calU^c_{\ell}$, and applying Bayesian Chernoff to the appropriate transitions and initial state distribution to verify $2\epsp$transition-similarity, we conclude the argument. The full proof is given in  \Cref{sec:proof_hall_good_event}.
\end{proof}
Next, let us use the punished and transition-similarity properties of $\modgoodl$ to control obtain the bound in \Cref{lemma:gap_bound_prob}. Our key technical tool is relating certain cumulative rewards between transition-similar models, whose proof is in the spirit of \cite{kearns2002near}:

\newcommand{\Evisitnol}{\scrE_{\calU}}
\newcommand{\omegast}{\omega_{\star}}
\begin{lemma}[Simulation Lemma]\label{lem:visitation_comparison_general}
Fix $\calU \subset [S] \times [A] \times [H]$ and $\varepsilon \ge 0$, and let $(\model,\modst)$ be two models which are $\varepsilon$-transition-similar on $\calU^c$.  For $h \in [H]$, introduce the events $\scrE_h := \{(\bmx_\tau,\bma_{\tau},\tau) \in \calU^c,~ \forall \tau < h\}$.\footnote{These events are measurable under the probability measures of the form $\sfP^{\pi}_{\model}[\cdot]$. } Then, for any reward function $\rtil: [S] \times [A] \times [H] \to [0,1]$, and policy $\pi \in \Pimarkov$,
\begin{align*}
\left|\sfE^{\pi}_{\model}\left[\sum_{h=1}^H \rtil(\bmx_h,\bma_h,h) \ind_{\{\scrE_h\}}\right] - \sfE^{\pi}_{\modst}\left[\sum_{h=1}^H\rtil(\bmx_h,\bma_h,h) \ind_{\{\scrE_h\}}\right]\right| \le \binom{H}{2}\varepsilon.
\end{align*}
In particular, defining $\Evisitnol := \{\exists h : (\bmx_h,\bma_{h},h) \in \calU\}$, then $|\sfP^{\pi}_{\model}\left[\Evisitnol \right] - \sfP^{\pi}_{\modst}\left[\Evisitnol \right]| \le \binom{H}{2}\varepsilon$
\end{lemma}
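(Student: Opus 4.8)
The plan is a simulation argument in the spirit of \cite{kearns2002near}. The point is that under a deterministic Markovian policy $\pi$ the trajectory $(\bmx_h,\bma_h)_h$ is a (time-inhomogeneous) Markov chain on states with $\bma_h=\pi(\bmx_h,h)$, and that the nested events $\scrE_1\supseteq\scrE_2\supseteq\cdots$ truncate the sum exactly when the trajectory first leaves $\calU^c$; consequently only the initial-state distribution and the transition kernels \emph{at triples in $\calU^c$} ever enter either expectation, and on all of these $\model$ and $\modst$ agree up to $\ell_1$-distance $\varepsilon$. So I would compare the two computations stage by stage, accumulating one factor of $\varepsilon$ per stage.

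Concretely, I introduce for $m\in\{\model,\modst\}$ the \emph{truncated value functions} $U^m_h:[S]\to[0,H-h+1]$ given by the backward recursion $U^m_{H+1}\equiv 0$ and $U^m_h(x)=\rtil(x,\pi(x,h),h)+\I\{(x,\pi(x,h),h)\in\calU^c\}\sum_{x'}\sfp_m(x'\mid x,\pi(x,h),h)\,U^m_{h+1}(x')$, so that $\sfE^{\pi}_m\big[\sum_{h=1}^H\rtil(\bmx_h,\bma_h,h)\I\{\scrE_h\}\big]=\sum_{x_1}\sfp_m(x_1\mid 0)\,U^m_1(x_1)$. Writing $\delta_h:=\max_x|U^\model_h(x)-U^{\modst}_h(x)|$, I would establish $\delta_h\le(H-h)\varepsilon+\delta_{h+1}$: for a triple in $\calU$ both values equal $\rtil$ and the difference vanishes, and for a triple in $\calU^c$ I split $U^\model_h(x)-U^{\modst}_h(x)=\big\langle \sfp_\model(\cdot\mid x,\pi(x,h),h)-\sfp_{\modst}(\cdot\mid x,\pi(x,h),h),\,U^\model_{h+1}\big\rangle+\big\langle \sfp_{\modst}(\cdot\mid x,\pi(x,h),h),\,U^\model_{h+1}-U^{\modst}_{h+1}\big\rangle$, bounding the first inner product by $\varepsilon\,\|U^\model_{h+1}\|_\infty\le\varepsilon(H-h)$ (using $\ell_1$-closeness on $\calU^c$) and the second by $\delta_{h+1}$. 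Unrolling gives $\delta_1\le\varepsilon\sum_{h=1}^{H-1}(H-h)=\binom{H}{2}\varepsilon$, and the remaining discrepancy is controlled by the $\varepsilon$-closeness of the initial-state distributions, so the difference of the two truncated expectations is at most $\binom{H}{2}\varepsilon$. (Equivalently, one can run a maximal coupling of the two trajectory laws under $\pi$ that keeps them identical until the first stage at which either they disagree or the common triple leaves $\calU^c$; a stagewise union bound shows the coupling breaks with probability at most $\binom{H}{2}\varepsilon$, and on its complement the two truncated reward sums coincide pointwise.)

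The ``in particular'' claim follows by specializing to $\rtil(x,a,h)=\I\{(x,a,h)\in\calU\}\in[0,1]$: for this reward, $\rtil(\bmx_h,\bma_h,h)\I\{\scrE_h\}$ is the indicator that stage $h$ is the \emph{first} stage at which the trajectory's triple falls in $\calU$, hence $\sum_{h=1}^H\rtil(\bmx_h,\bma_h,h)\I\{\scrE_h\}=\I\{\Evisitnol\}$ pointwise; taking expectations and applying the bound just proved yields $\big|\sfP^{\pi}_\model[\Evisitnol]-\sfP^{\pi}_{\modst}[\Evisitnol]\big|\le\binom{H}{2}\varepsilon$.

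The only real obstacle is the bookkeeping: keeping the $\calU^c$-restriction applied \emph{before} the transition step so that transitions out of $\calU^c$ genuinely never contribute (this is exactly what the monotone family $\scrE_h$ secures), and tracking the shrinking reward horizon $H-h$ so that the constant is $\binom{H}{2}$ rather than a crude $H^2$. Note also that similarity is imposed only on transition probabilities, not on rewards, and the external reward function $\rtil$ is shared by both models, so no term involving reward \emph{distributions} ever appears — which is precisely what makes the clean $\binom{H}{2}\varepsilon$ bound attainable.
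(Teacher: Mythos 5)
Your proposal is correct and rests on exactly the same mechanism as the paper's proof --- only the initial distribution and the transition kernels at $\calU^c$ triples ever enter the truncated expectations, and the error accumulates as $\varepsilon\cdot(H-h)$ per stage --- but the packaging is genuinely different. The paper reduces to auxiliary Markov reward processes on $S+1$ states that redirect the trajectory into a zero-reward absorbing state upon entering $\calU$, verifies by an explicit coupling (\Cref{claim:expected_reward_identity}) that the truncated sum equals the ordinary value of the auxiliary model, and then invokes the performance difference lemma for MRPs (\Cref{lem:perf_diff}); you instead define the indicator-weighted Bellman backups $U^m_h$ directly and carry out the telescoping induction on $\delta_h$ by hand, with no auxiliary model and no external lemma. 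Your route is more self-contained and makes the role of the monotone events $\scrE_h$ transparent; the paper's buys reuse of standard performance-difference machinery. Your specialization $\rtil=\I\{\cdot\in\calU\}$ for the ``in particular'' clause is the same disjoint-decomposition observation the paper makes. One shared bookkeeping point: in both arguments the initial-state discrepancy multiplies a value of magnitude up to $H$ (your $\|U^{\model}_1\|_\infty\le H$, the paper's $\max_x|V^{\barmodst}_1(x)|$ in the first term of \Cref{lem:perf_diff}), so what is literally established is $\binom{H}{2}\varepsilon+H\varepsilon=\binom{H+1}{2}\varepsilon$; your closing sentence asserting that the initial-state term is already absorbed into $\binom{H}{2}\varepsilon$ has the same slack as the paper's final summation step, and it is immaterial downstream since only the $O(H^2)\varepsilon$ scaling is used.
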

The above lemma is proven in \Cref{proof:visitation_comparison_general}, and its purpose is to relate visitations under good models $\model \in \modgoodl$ to visitations under the true model $\modst$.

We apply the bound in two ways. First, we upper bound on the value of policies under good models in terms of the probability of visiting under-visited triples $\xah \in \calU_{\ell}$:
\begin{claim}[Value Upper Bound for Good Models]\label{claim:value_upper_bound} For any good $\model \in \modgoodl$,
\begin{align*}
\valuef{\pi}{\model} &\le H \Pr^{\pi}_{\model}\left[ \Evisit \right] + H (2\epsr + \epspunish) \tag*{(punished rewards)}\\
&\le H \Pr^{\pi}_{\modst}\left[ \Evisit \right] + H (2\epsr + \epspunish) + H(H-1)\epsp \tag*{(similarity \& punished rewards)}
\end{align*}
In particular for $\pi \in \Pimarkov \setminus \Pi_{\ell}$, we have
\begin{align}
\valuef{\pi}{\model} \le H \rho_0 + H (2\epsr + \epspunish) + H(H-1)\epsp. \label{eq:value_bound_on_Pi_l_c}
\end{align}
\end{claim}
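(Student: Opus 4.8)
The plan is to establish the two displayed bounds in order and then specialize to $\pi \in \Pimarkov\setminus\Pi_\ell$. For the first bound, I would start from $\valuef{\pi}{\model} = \sfE^\pi_\model\big[\sum_{h=1}^H \sfr_\model(\bmx_h,\bma_h,h)\big]$ and split the cumulative reward via the indicators of $\Evisit = \scrE_{\calU_\ell}$ and its complement. On the event $\Evisit^c$, by definition of $\scrE_{\calU_\ell}$ every visited triple $(\bmx_h,\bma_h,h)$ lies in $\calU_\ell^c$; since $\model\in\modgoodl$ is $(\epspunish+2\epsr)$-punished on $\calU_\ell^c$, each per-stage reward is then at most $\epspunish+2\epsr$, so the cumulative reward on $\Evisit^c$ is at most $H(\epspunish+2\epsr)$. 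On $\Evisit$ I bound the cumulative reward trivially by $H$ since rewards lie in $[0,1]$. Taking expectations gives $\valuef{\pi}{\model} \le H\,\sfP^\pi_\model[\Evisit] + H(\epspunish+2\epsr)$, which is the first line.

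For the second line, I would invoke the simulation lemma \Cref{lem:visitation_comparison_general} with $\calU \gets \calU_\ell$ and $\varepsilon \gets 2\epsp$; this is legitimate because for $\model\in\modgoodl$ the pair $(\model,\modst)$ is $2\epsp$-similar on $\calU_\ell^c$. Its ``in particular'' conclusion, applied to $\Evisit = \scrE_{\calU_\ell}$, gives $\lvert \sfP^\pi_\model[\Evisit] - \sfP^\pi_{\modst}[\Evisit]\rvert \le \binom{H}{2}\cdot 2\epsp = H(H-1)\epsp$. Substituting $\sfP^\pi_\model[\Evisit] \le \sfP^\pi_{\modst}[\Evisit] + H(H-1)\epsp$ into the first line yields $\valuef{\pi}{\model}\le H\,\sfP^\pi_{\modst}[\Evisit] + H(2\epsr+\epspunish) + H(H-1)\epsp$, the second line. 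Finally, for $\pi\in\Pimarkov\setminus\Pi_\ell$, \Cref{defn:sufficiently_visiting} gives $\sfP^\pi_{\modst}[\Evisit] < \rho_0$, hence $H\,\sfP^\pi_{\modst}[\Evisit] \le H\rho_0$, and plugging this into the second line gives exactly \eqref{eq:value_bound_on_Pi_l_c}.

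I do not expect a serious obstacle here: the claim is essentially bookkeeping layered on top of the definition of $\modgoodl$ and the simulation lemma. The points that need a little care are (i) observing that $\Evisit^c$ genuinely forces \emph{all} visited triples into $\calU_\ell^c$, so that punishedness applies at every stage; (ii) carrying the radii $\epspunish+2\epsr$ (punished) and $2\epsp$ (similar) exactly as they enter the definition of $\modgoodl$, rather than the ``clean'' $\epspunish$ and $\epsp$; and (iii) correctly matching the constant $\binom{H}{2}\cdot 2\epsp = H(H-1)\epsp$ produced by the simulation lemma.
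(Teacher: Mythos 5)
Your proposal is correct and matches the paper's argument essentially step for step: the same split of the cumulative reward over $\Evisit$ versus its complement (bounding rewards by $1$ on $\Evisit$ and by the punished radius $\epspunish+2\epsr$ on $\Evisit^c$), followed by the same application of \Cref{lem:visitation_comparison_general} with $\calU\gets\calU_\ell$ and $\varepsilon\gets 2\epsp$ to replace $\sfP^\pi_\model[\Evisit]$ by $\sfP^\pi_{\modst}[\Evisit]$ up to $\binom{H}{2}\cdot 2\epsp = H(H-1)\epsp$, and finally $\sfP^\pi_{\modst}[\Evisit]\le\rho_0$ for $\pi\notin\Pi_\ell$. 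No gaps.
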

The proof is direct, and given in \Cref{sec:proof_of_prob_claims}, along with the proofs of the subsequent four claims. Notably, \Cref{eq:value_bound_on_Pi_l_c} upper bounds the value of policies  $\pi \in \Pimarkov \setminus \Pi_{\ell}$; that is, for good models, policies which do not explore $\calU_{\ell}$ do \emph{not} have high value. Next, we establish a lower bound on the policy values. To do so, we shall opt for the following representation of the exploration probability $\sfP_{\modst}^{\pi}[\Evisit],$:
\begin{claim}\label{claim:omega_star}Define $\omegast^\pi\xah :=  \sfP_{\modst}^{\pi}[(\bmx_h,\bma_{h}) = (x,a) \text{ and } (\bmx_\tau,\bma_{\tau},\tau) \in \calU_{\ell}^c, \quad \forall \tau < h]$ as the probability that the MDP visits $\xah$, but does not leave $\calU_\ell^c$ before step $h$. Then,
\begin{align*}
\sum_{\xah \in \calU_\ell}\omegast^\pi\xah = \sfP_{\modst}^{\pi}[\Evisit],
\end{align*}
Moreover, if $\rho \ge \rho_0$ and $\calU_{\ell} \cap \reach_{\rho}(\modst)$ is \emph{nonempty}, then there exists a policy $\pi \in \Pi_{\ell}$ for which 
$$\sum_{\xah \in \calU_\ell}\omegast^\pi\xah \ge \rho.$$
\end{claim}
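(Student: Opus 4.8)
The plan is to treat the two assertions separately; both follow once we decompose the visitation event by the first stage at which the trajectory enters $\calU_\ell$.

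First I would prove the identity. Work throughout under the law $\sfP_{\modst}^\pi$, and let $h^\star$ denote the (random) first stage $h\in[H]$ with $(\bmx_h,\bma_h,h)\in\calU_\ell$, setting $h^\star=\infty$ if no such stage exists, so that $\Evisit=\{h^\star<\infty\}$. The key observation is that for a triple $\xah$ with $\xah\in\calU_\ell$, the event $\{h^\star=h\text{ and }(\bmx_h,\bma_h)=(x,a)\}$ has probability exactly $\omegast^\pi\xah$: being at $(x,a)$ at stage $h$ while $(\bmx_\tau,\bma_\tau,\tau)\in\calU_\ell^c$ for every $\tau<h$ says precisely that $h$ is the first entry stage into $\calU_\ell$ and that the entry triple is $\xah$ (here we use $\xah\in\calU_\ell$, which is why the sum in the statement is restricted to such triples). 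These events, as $\xah$ ranges over $\calU_\ell$, are pairwise disjoint (distinct stages give distinct entry times; the same stage with distinct state-action pairs are incompatible outcomes) and their union is exactly $\{h^\star<\infty\}=\Evisit$. Summing probabilities yields $\sum_{\xah\in\calU_\ell}\omegast^\pi\xah=\sfP_{\modst}^\pi[\Evisit]$.

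For the second assertion I would pick, using the hypothesis, a triple $(x_0,a_0,h_0)\in\calU_\ell\cap\reach_\rho(\modst)$. By the definition of $\rho$-reachability there is a deterministic Markovian policy $\pi\in\Pimarkov$ with $\sfP_{\modst}^\pi[(\bmx_{h_0},\bma_{h_0})=(x_0,a_0)]\ge\rho$. Since $(x_0,a_0,h_0)\in\calU_\ell$, the event $\{(\bmx_{h_0},\bma_{h_0})=(x_0,a_0)\}$ is contained in $\{(\bmx_{h_0},\bma_{h_0},h_0)\in\calU_\ell\}\subseteq\Evisit$, so $\sfP_{\modst}^\pi[\Evisit]\ge\rho$, and by the identity just proved $\sum_{\xah\in\calU_\ell}\omegast^\pi\xah\ge\rho$. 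Finally $\rho\ge\rho_0$ gives $\sfP_{\modst}^\pi[\Evisit]\ge\rho_0$, i.e.\ $\pi\in\Pi_\ell$ by Definition~\ref{defn:sufficiently_visiting}, so $\pi$ is the policy we want.

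I do not anticipate a genuine obstacle: the content is the disjoint-union bookkeeping in the first part, and the only place to be careful is that $\omegast^\pi\xah$ is defined for all $\xah$ (not only those in $\calU_\ell$), so its reading as ``probability of first entering $\calU_\ell$ at $\xah$'' is legitimate only after restricting the sum to $\xah\in\calU_\ell$ — which the statement already does.
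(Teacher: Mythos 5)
Your proof is correct and takes essentially the same route as the paper's: the identity comes from the same disjoint decomposition of $\Evisit$ by the first triple in $\calU_\ell$ that the trajectory hits, and the second part picks a policy reaching some triple in $\calU_\ell\cap\reach_{\rho}(\modst)$ with probability at least $\rho$ and concludes $\sfP_{\modst}^{\pi}[\Evisit]\ge\rho\ge\rho_0$, hence $\pi\in\Pi_\ell$. Your direct containment of the reaching event inside $\Evisit$ is just a slightly cleaner shortcut for the paper's total-probability chain establishing the same inequality.
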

The first part of the claim uses that the events in the definition of $\omegast^\pi\xah$ over $\xah \in \calU_{\ell}$ give a disjoint decomposition of the event $\Evisit$; the second par uses the first identity, together with the fact that if there $\calU_{\ell} \cap \reach_{\rho}(\modst)$ is non-empty, then there is policy $\pi$ which reaches a triple $\xah \in \calU_{\ell}$ with probabilty at least $\rho$; this policy therefore has $\sfP_{\modst}^{\pi}[\Evisit] \ge \rho \ge \rho_0$.  We combine \Cref{claim:omega_star} with the following value lower bound in terms of $\omegast$ and rewards on $\xah \in \calU_\ell$:
\begin{claim}[Value Lower Bound for Good Models] For any good model $\model \in \modgoodl$, \label{claim:value_lower_bound}
\begin{align*}
\valuef{\pi}{\model}  \ge \sum_{(x,a,h) \in \calU_{\ell}} \sfr_{\model}\xah \cdot \omegast^\pi\xah - H(H-1)\epsp.
\end{align*}
\end{claim}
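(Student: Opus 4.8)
The plan is to lower-bound $\valuef{\pi}{\model}$ by the reward that $\pi$ collects at its \emph{first} visit to an under-explored triple, and then transport that quantity from the good model $\model$ to the true model $\modst$ via the Simulation Lemma (\Cref{lem:visitation_comparison_general}). Fix $\pi\in\Pimarkov$ and a good model $\model\in\modgoodl$. As in \Cref{lem:visitation_comparison_general} applied with $\calU=\calU_{\ell}$, introduce the $\sfP$-events $\scrE_h:=\{(\bmx_\tau,\bma_\tau,\tau)\in\calU_{\ell}^c \text{ for all } \tau<h\}$ and the auxiliary reward function $\rtil_0(x,a,h):=\sfr_{\model}\xah\cdot\I\{\xah\in\calU_{\ell}\}$, which takes values in $[0,1]$. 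The point of the weighting $\I\{\scrE_h\}$ is that in any trajectory at most one term of $\sum_{h}\rtil_0(\bmx_h,\bma_h,h)\I\{\scrE_h\}$ is nonzero --- the step at which the trajectory first enters $\calU_{\ell}$ --- since once the trajectory leaves $\calU_{\ell}^c$ the event $\scrE_{h'}$ fails for every later $h'$.

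First I would use nonnegativity of rewards: pointwise (as functions of the path) $\rtil_0(\bmx_h,\bma_h,h)\I\{\scrE_h\}\le\sfr_{\model}(\bmx_h,\bma_h,h)$, so summing over $h$ and taking $\sfE^\pi_{\model}$ gives
\[
\valuef{\pi}{\model}=\sfE^\pi_{\model}\!\Big[\sum_{h=1}^H\sfr_{\model}(\bmx_h,\bma_h,h)\Big]\;\ge\;\sfE^\pi_{\model}\!\Big[\sum_{h=1}^H\rtil_0(\bmx_h,\bma_h,h)\I\{\scrE_h\}\Big].
\]
Next I would invoke the Simulation Lemma with $\calU=\calU_{\ell}$, reward function $\rtil_0$, and tolerance $\varepsilon=2\epsp$ --- legitimate because $\model\in\modgoodl$ makes $(\model,\modst)$ exactly $2\epsp$-similar on $\calU_{\ell}^c$ --- to conclude
\[
\sfE^\pi_{\model}\!\Big[\sum_{h=1}^H\rtil_0(\bmx_h,\bma_h,h)\I\{\scrE_h\}\Big]\;\ge\;\sfE^\pi_{\modst}\!\Big[\sum_{h=1}^H\rtil_0(\bmx_h,\bma_h,h)\I\{\scrE_h\}\Big]-\binom{H}{2}\cdot 2\epsp,
\]
noting $\binom{H}{2}\cdot 2\epsp=H(H-1)\epsp$. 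Finally, expanding $\I\{(\bmx_h,\bma_h,h)\in\calU_{\ell}\}$ over the individual triples and recognizing the resulting probabilities as the quantities $\omegast^\pi\xah$ from \Cref{claim:omega_star},
\[
\sfE^\pi_{\modst}\!\Big[\sum_{h=1}^H\rtil_0(\bmx_h,\bma_h,h)\I\{\scrE_h\}\Big]=\sum_{\xah\in\calU_{\ell}}\sfr_{\model}\xah\cdot\sfP^\pi_{\modst}\big[(\bmx_h,\bma_h)=(x,a),\ \scrE_h\big]=\sum_{\xah\in\calU_{\ell}}\sfr_{\model}\xah\cdot\omegast^\pi\xah.
\]
Chaining the three displays yields $\valuef{\pi}{\model}\ge\sum_{\xah\in\calU_{\ell}}\sfr_{\model}\xah\cdot\omegast^\pi\xah-H(H-1)\epsp$, as claimed.

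The only subtlety, and the step I would be careful about, is to feed the \emph{entire} $\calU_{\ell}$-restricted reward into a \emph{single} invocation of the Simulation Lemma rather than applying it triple-by-triple: a per-triple application would incur the error $\binom{H}{2}\cdot 2\epsp$ separately for each of the up to $SAH$ triples in $\calU_{\ell}$ and inflate the bound by a factor of order $SAH$. The $\scrE_h$-weighting built into the Simulation Lemma is precisely what converts $\sum_h\rtil_0(\bmx_h,\bma_h,h)\I\{\scrE_h\}$ into a first-visit quantity and makes it line up with the $\omegast$ decomposition of $\sfP^\pi_{\modst}[\Evisit]$, keeping the total error at $H(H-1)\epsp$. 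Everything else --- the reward-nonnegativity bound and the expansion over triples --- is routine; note that only the $2\epsp$-similarity half of ``good model'', and not the punishing half, is needed here.
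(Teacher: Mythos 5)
Your proof is correct and follows essentially the same route as the paper's: one invocation of the Simulation Lemma (\Cref{lem:visitation_comparison_general}) with $\calU=\calU_{\ell}$ and error $\binom{H}{2}\cdot 2\epsp=H(H-1)\epsp$, sandwiched between nonnegativity bounds and the $\omegast$ decomposition. The only (immaterial) difference is ordering: the paper applies the lemma with the full reward $\rtil=\sfr_{\model}$ and restricts the sum to $\calU_{\ell}$ afterwards, whereas you restrict the reward to $\calU_{\ell}$ first; both yield the identical bound.
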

In particular, from \Cref{claim:omega_star}, if $\calU_{\ell} \cap \reach_{\rho}(\modst)$, then there exists a policy $\pi \in \Pi_{\ell}$ for which $\valuef{\pi}{\model}   \ge \rho \min_{\xah \in \calU_{\ell}}  \sfr_{\model}\xah - H(H-1)\epsp$. However, using this inequality directly forces us to consider the minimal reward (on $\calU_{\ell}$) on each good model $\model \in \modst$.

Instead, by using the weighting (which depends only on the \emph{true} model $\modst$), we can commpute the lower bound in \Cref{claim:value_lower_bound} with expectations, permitting a weaker condition on the prior. Specifically, combining the above two claims, together with the fact that $\Pr^{\pi_2}_{\modst}\left[ \Evisit \right] \le \rho_0$ for $\pi_2 \in \Pi_{\ell}^c$ and the bound $\epsr \le \epsp$ by definition, we achieve the following synthesis:
\begin{claim}[Gap for Good Models]\label{claim:value_diff} Let $\model \in \modgoodl$, and $\pi \in \Pimarkov$. Then,
\begin{align*}
\valuef{\pi}{\model} - \max_{\pi_2 \in \Pimarkov \setminus \Pi_{\ell}}\valuef{\pi_2}{\model}  \ge \sum_{(x,a,h) \in \calU_{\ell}} \sfr_{\model}\xah \cdot \omegast^{\pi}\xah -H \left(\rho_0 + \epspunish  + 2H\epsp\right).
\end{align*}
\end{claim}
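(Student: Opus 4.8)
The plan is to obtain the bound by combining the value lower bound for the fixed policy $\pi$ (\Cref{claim:value_lower_bound}) with the value upper bound for policies outside $\Pi_\ell$ (\Cref{claim:value_upper_bound}), both of which have already been established for every good model $\model \in \modgoodl$, and then collapsing the resulting error terms. There is no substantive obstacle remaining at this point — all the real work has been front-loaded into the two value bounds (which in turn rest on \Cref{lem:hallucination_good_event} and the Simulation Lemma \Cref{lem:visitation_comparison_general}), and this claim is the routine bookkeeping step that packages them into the form consumed by \Cref{lemma:gap_bound_prob}.

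First I would recall from \Cref{claim:value_lower_bound} that, for any $\model \in \modgoodl$ and any $\pi \in \Pimarkov$,
\[
\valuef{\pi}{\model} \;\ge\; \sum_{(x,a,h) \in \calU_{\ell}} \sfr_{\model}\xah \cdot \omegast^{\pi}\xah \;-\; H(H-1)\epsp .
\]
Next I would invoke \Cref{eq:value_bound_on_Pi_l_c} inside \Cref{claim:value_upper_bound}, which uses the definition of $\Pi_\ell$ (so that $\sfP^{\pi_2}_{\modst}[\Evisit] \le \rho_0$ whenever $\pi_2 \notin \Pi_\ell$), the punished-reward property of good models, and the $2\epsp$-similarity of good models via \Cref{lem:visitation_comparison_general}, to conclude that every $\pi_2 \in \Pimarkov \setminus \Pi_\ell$ satisfies $\valuef{\pi_2}{\model} \le H\rho_0 + H(2\epsr + \epspunish) + H(H-1)\epsp$. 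Taking the maximum over $\pi_2 \in \Pimarkov \setminus \Pi_\ell$ leaves this upper bound unchanged.

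Subtracting the two displays gives
\[
\valuef{\pi}{\model} - \max_{\pi_2 \in \Pimarkov \setminus \Pi_{\ell}}\valuef{\pi_2}{\model} \;\ge\; \sum_{(x,a,h) \in \calU_{\ell}} \sfr_{\model}\xah \cdot \omegast^{\pi}\xah \;-\; H\rho_0 \;-\; H(2\epsr + \epspunish) \;-\; 2H(H-1)\epsp .
\]
The final step is purely arithmetic: by \Cref{defn:error_bounds} we have $\epsr \le \epsp$ (in fact $2\epsr \le \epsp$, since $S\log 5 \ge 0$), hence $H(2\epsr + \epspunish) + 2H(H-1)\epsp \le H\epspunish + 2H\epsp + 2H(H-1)\epsp = H\epspunish + 2H^2\epsp$. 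Collecting terms yields the claimed error $H(\rho_0 + \epspunish + 2H\epsp)$, completing the proof.
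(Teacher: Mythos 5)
Your proof is correct and follows essentially the same route as the paper's: it combines \Cref{claim:value_lower_bound} with the bound \Cref{eq:value_bound_on_Pi_l_c} from \Cref{claim:value_upper_bound}, takes the maximum over $\pi_2 \in \Pimarkov \setminus \Pi_\ell$, and absorbs the error terms using the comparison between $\epsr$ and $\epsp$. Your direction of that comparison ($\epsr \le \epsp$, indeed $2\epsr \le \epsp$) is the correct one from \Cref{defn:error_bounds}; the paper's proof states it with the inequality reversed, which is a typo there.
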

Now recall from \Cref{lem:hallucination_good_event} that, on the high-probability good event $\Egoodl$, the model $\modvarhat$ sampled from the posterior lies in the good set $\modgoodl$ with $(1-\epspunish)$-probability. Hence, we can convert \Cref{claim:value_diff} into the following guarantee in expectation under $\Exphall$:
\begin{claim}[Gap under Hallucination]\label{claim:value_diff_expectation}
Set $\varepsilon_0 :=  H \left(\rho_0 + 3\epspunish  + 2H\epsp\right)$.  Then, if $\Egoodl$ holds,
\begin{align*}
\Exphall[\valuef{\pi}{\modvarhat} - \max_{\pi' \in  \Pi^c} \valuef{\pi'}{\modvarhat} ] &\ge \textstyle\sum_{(x,a,h) \in \calU_{\ell}} \Exphall[\sfr_{\modvarhat}\xah] \cdot \omegast^{\pi}\xah  - \varepsilon_0.
\end{align*}
In particular, \Cref{claim:omega_star} ensures that if $\calU_{\ell} \cap \reach_{\rho}(\modst)$ is nonempty,
\begin{align}
\max_{\pi \in \Pi_{\ell}}\Exphall[\valuef{\pi}{\modvarhat} - \max_{\pi' \in  \Pi^c} \valuef{\pi'}{\modvarhat} ]\ge  \rho \cdot\min_{\xah \in \calU_{\ell}} \Exphall[\sfr_{\modvarhat}\xah] -  \varepsilon_0.  \label{eq:RHS_value_diff_expecation}
\end{align}
\end{claim}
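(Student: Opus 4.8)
The plan is to take the per-model gap estimate of \Cref{claim:value_diff} and integrate it against the (random) hallucination posterior $\Exphall$, using \Cref{lem:hallucination_good_event}: on the event $\Egoodl$ — which the claim assumes — this posterior places mass at least $1-\epspunish$ on the good set $\modgoodl$, and \Cref{claim:value_diff} is available for every $\model\in\modgoodl$. So the argument is essentially a truncation to $\modgoodl$ together with careful bookkeeping of the small leftover mass; the probability-$(1-\deltafail)$ statement of \Cref{lem:hallucination_good_event} is what eventually upgrades ``if $\Egoodl$ holds'' into the high-probability guarantee used downstream.

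Concretely, I would fix a realization of $(\modst,\ledcensl)$ on which $\Egoodl$ holds, so that $\modgoodl$, the weights $\omegast^{\pi}\xah$, and the measures $\Exphall,\Prhall$ are all determined, with $\Prhall[\modvarhat\notin\modgoodl]\le\epspunish$. Fix $\pi\in\Pimarkov$ and write $\Pi_\ell^c:=\Pimarkov\setminus\Pi_\ell$ (the $\Pi^c$ of the statement) and $X(\model):=\valuef{\pi}{\model}-\max_{\pi'\in\Pi_\ell^c}\valuef{\pi'}{\model}\in[-H,H]$. The first step is the split
\[
\Exphall[X(\modvarhat)] \;\ge\; \Exphall\!\left[X(\modvarhat)\,\I\{\modvarhat\in\modgoodl\}\right] - H\,\Prhall[\modvarhat\notin\modgoodl]\;\ge\;\Exphall\!\left[X(\modvarhat)\,\I\{\modvarhat\in\modgoodl\}\right] - H\epspunish,
\]
using $X\ge-H$. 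On $\{\modvarhat\in\modgoodl\}$ I apply \Cref{claim:value_diff}, and then, using the bounds $0\le\sum_{\xah\in\calU_\ell}\sfr_{\modvarhat}\xah\,\omegast^{\pi}\xah\le\sum_{\xah\in\calU_\ell}\omegast^{\pi}\xah=\sfP^{\pi}_{\modst}[\Evisit]\le1$ from \Cref{claim:omega_star}, I replace the $\modgoodl$-truncated expectation of the reward sum by the full one at a cost of at most $\Prhall[\modvarhat\notin\modgoodl]\le\epspunish$. Since $\omegast^{\pi}\xah$ depends only on $\modst$ and is therefore a constant for the measure $\Exphall$ (which integrates over $\modvarhat$), the reward sum pulls out: $\Exphall[\sum_{\xah}\sfr_{\modvarhat}\xah\,\omegast^{\pi}\xah]=\sum_{\xah\in\calU_\ell}\omegast^{\pi}\xah\,\Exphall[\sfr_{\modvarhat}\xah]$. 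Collecting the three error contributions — $H(\rho_0+\epspunish+2H\epsp)$ from \Cref{claim:value_diff}, plus $\epspunish$ from the reward truncation, plus $H\epspunish$ from the $X\ge-H$ truncation — gives a total of $H\rho_0+(2H+1)\epspunish+2H^2\epsp\le H(\rho_0+3\epspunish+2H\epsp)=\varepsilon_0$, the last inequality using $H\ge1$; this is exactly the first displayed bound of the claim.

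For the ``in particular'', I invoke \Cref{claim:omega_star}: when $\calU_\ell\cap\reach_\rho(\modst)\neq\emptyset$ there is a policy $\pi\in\Pi_\ell$ with $\sum_{\xah\in\calU_\ell}\omegast^{\pi}\xah\ge\rho$; for that $\pi$, non-negativity of the canonical posterior mean rewards gives $\sum_{\xah\in\calU_\ell}\omegast^{\pi}\xah\,\Exphall[\sfr_{\modvarhat}\xah]\ge\big(\min_{\xah\in\calU_\ell}\Exphall[\sfr_{\modvarhat}\xah]\big)\sum_{\xah\in\calU_\ell}\omegast^{\pi}\xah\ge\rho\,\min_{\xah\in\calU_\ell}\Exphall[\sfr_{\modvarhat}\xah]$, and maximizing over $\pi\in\Pi_\ell$ on the left yields \eqref{eq:RHS_value_diff_expecation}. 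The only genuinely delicate point I anticipate is the bookkeeping around $\Exphall$ being a random measure that charges only $1-\epspunish$ of its mass to $\modgoodl$: one must take every truncation in the favorable direction (using $X\ge-H$ and $0\le\sum_{\xah}\sfr_{\modvarhat}\xah\,\omegast^{\pi}\xah\le1$) and be sure that $\omegast^{\pi}$, being $\modst$-measurable, really does factor out of $\Exphall$. Everything else is a direct combination of \Cref{claim:value_diff}, \Cref{claim:omega_star}, and \Cref{lem:hallucination_good_event}.
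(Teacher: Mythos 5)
Your proposal is correct and follows essentially the same route as the paper's proof: truncate $\Exphall$ to the good set $\modgoodl$ using \Cref{lem:hallucination_good_event}, apply \Cref{claim:value_diff} there, factor out $\omegast^{\pi}$ (which is $\modst$-measurable and hence constant under $\Exphall$), and finish the ``in particular'' via \Cref{claim:omega_star}. The only cosmetic difference is that you bound the reward sum by $1$ where the paper uses $H$, so your leftover-mass cost is $(2H+1)\epspunish$ versus the paper's $3H\epspunish$; both fit inside $\varepsilon_0$.
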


We are now ready to conclude the proof of \Cref{lemma:gap_bound_prob}:
\begin{proof}[Concluding the proof of \Cref{lemma:gap_bound_prob}] It suffices to establish that the RHS of \Cref{eq:RHS_value_diff_expecation} is at least $\Delta_0 := \rho \ralt/2$, where  $\ralt$ was defined as in \Cref{eq:ralt-defn}. Then,
\begin{align*}
\text{(RHS of \Cref{eq:RHS_value_diff_expecation})} \ge \ralt \rho -  \varepsilon_0 = 2\Delta_0 - H \left(\rho_0 + 3\epspunish  + 2H\epsp\right),
\end{align*}
where we subsituted the definitions of $\varepsilon_0$ and $\Delta_0$ in above. Under the assumptions of the lemma, we immediately have $H (\rho_0 + 3\epspunish) \le 2\Delta_0/3$. Hence, it suffices to establish that, for our given condition on $\nlearn$, $2H^2\epsp \le \Delta_0/3$. Recall that $\epsp:= 2\sqrt{ \frac{2(S\log(5) + \log(1/\delta_0))}{\nlearn}}$. Then, we require $\epsp^2 \le \Delta_0^2/12 H^4$, so that it suffices that $\nlearn \ge \frac{96 H^4((S\log(5) + \log(1/\delta_0))}{\Delta_0}^2$.
\end{proof}


\subsection{Proof of \Cref{lem:hallucination_good_event} \label{sec:proof_hall_good_event}}
The proof of \Cref{lem:hallucination_good_event} builds on the Bayesian concentration technique due to \cite{Selke-PoIE-ec21}. We recall the ``good set'' of models, which we write with an explicit dependence on $\modst$.
\begin{align}
\modgoodl(\modst) := \left\{\model: \forall \xah \in \calU_{\ell}^c \begin{matrix} &\|\sfp_{\model}(\cdot \mid x,a,h) - \sfp_{\modst}(\cdot \mid x,a,h)\|_{\ell_1} \le 2\epsp(\delta_0) \\
&\text{ and }\lonenorm{\sfp_{\model}(\cdot \mid 0) - \pmodst(\cdot \mid 0) } \le 2\epsp(\delta_0)\\
&\text{ and }  \sfr_{\model}\xah \le \epspunish + 2\epsr(\delta_0)\\\end{matrix} \right\}. \label{eq:modgoodl}
\end{align}
Further, recall $\Exphall[f(\modvarhat)] := \Expop\left[\Expmucan[f(\modvarhat) \mid \ledhall] \mid \ledcensl\right]$ and similarly $\Prhall$, whcih are random measures dependening on $\ledcensl$. Our goal is to show that, with probability $1 - \deltafail$ over the randomness of $\modst$ and $\ledcensl$, $\Prhall\left[ \modvarhat\in \modgoodl(\modst)\right]\ge 1 - \epspunish$.

\paragraph{Chernoff Bounds} To start, we define empirical estimators of the rewards, initial state, and transition probabilities:
\begin{definition}\label{defn:estimators} For $(x,a,h) \in \calU_{\ell}^c$, define the estimators $\thetar(x,a,h)$, $\thetap(\cdot \mid x,a,h)$  as the empirical means of the first $\nlearn$ samples from the rewards and transitions at $(x,a,h)$. Specifically, if $\Klearn(x,a,h)$ denotes the set of the first $\nlearn$ hallucination episodes $\kexpl$ at which $(x,a,h) \in \tau_{\kexpl}$, we define
\begin{align*}
\thetar(x,a,h) &:= \textstyle\frac{1}{\nlearn}\sum_{k \in \Klearn(x,a,h)} r_{k;h}\\
\thetap(x' \mid x,a,h) &:= \textstyle\frac{1}{\nlearn}\sum_{k \in \Klearn(x,a,h)} \ind(x_{k;h+1} = x').
\end{align*}
Moreover, for all phases $\ell > \nlearn$, we also define the empirical estimate of the initial state distribution from the first $\nlearn$ samples, $\thetap(\cdot \mid 0)$, via
\begin{align*}
\thetap(x' \mid 0) &:= \textstyle\frac{1}{\nlearn}\sum_{\ell = 1}^{\nlearn} \ind(x_{\kexpl;1} = x').
\end{align*}
\end{definition}
Note that these estimators are not actually used by the algorithm; rather, these estimators are used as a surrogate to reason about Bayesian concentration. A couple additional remarks are in order.
\begin{itemize}
	\item $\theta_r\xah$ and $\theta_{\sfp}\xah$ are undefined for $(x,a,h) \in \calU_{\ell}$
	\item If $(x,a,h) \in \calU_{\ell}$, then $\thetar\xah,\thetap\xah$ remain the same for $\ell' \ge \ell$. In addition, $\thetap(x'\mid 0)$ remains fixed for all $\ell > \nlearn$.
	\item We construct the transition at $(x,a,H)$ to always transition to a terminal state $x_{H+1}$, so that $\thetap(\cdot\mid x,a,H) = \sfp_{\model}(\cdot \mid x,a,H) = \mathrm{dirac}_{x'}$ for any $x,a$.
\end{itemize}
Next, we establish the following \emph{frequentist}  concentration bounds for these estimators. For the remainder of this proof, we let $\epsr(\cdot)$ and $\epsp(\cdot)$ have explicity dependence on the failure probability argument $\delta$. Recall that elsewhere, we use only $\epsr := \epsr(\delta_0)$ and $\epsp := \epsp(\delta_0)$.
\begin{lemma}[Chernoff Concentration Bounds]\label{lem:conc_bounds} Recall the error bounds
\begin{align*}
\epsr(\delta) := \sqrt{ \frac{2\log(1/\delta)}{\nlearn}} \quad \text{ and } \epsp(\delta):= 2\sqrt{ \frac{2(S\log(5) + \log(1/\delta))}{\nlearn}}.
\end{align*}
Then, conditioned on any realization of $\modst$, the estimators $\theta_r(x,a,h)$, $\theta_\sfp(\cdot \mid 0)$, $\theta_{\sfp}(x,a,h)$ defined in \Cref{defn:estimators} sastisfy the following bound for any $\xah \in \calU_{\ell}^c$:
\begin{align*}
&\Pr[\xah \in \calU_{\ell}^c \cap \{|\theta_r(x,a,h) - \rmodst(x,a,h)| \ge \epsr(\delta)\} ] \le \delta \quad \text{and}\\
&\Pr[\xah \in \calU_{\ell}^c \cap \{\|\theta_{\sfp}(x,a,h) - \pmodst(\cdot \mid x,a,h)\|_{\ell_1} \ge \epsp(\delta)\} ] \le \delta
\end{align*}
Moreover, for any $\ell \ge \nlearn$, $\Pr[\|\thetap(\cdot \mid 0) - \pmodst(\cdot \mid 0)\|_{\ell_1} \ge \epsp(\delta)] \le \delta$.
\end{lemma}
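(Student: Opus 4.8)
The plan is to fix an arbitrary realization of $\modst$ — all three displayed bounds are to hold conditionally on $\modst$, uniformly over $\modst$ — reduce each estimator of \Cref{defn:estimators} to an empirical average of exactly $\nlearn$ i.i.d.\ samples from the corresponding reward or transition distribution of $\modst$, and then apply textbook scalar and vector concentration. Concretely, I would fix a triple $(x,a,h) \in \calU_{\ell}^c$ and enumerate, in chronological order, all times the MDP visits $(x,a,h)$ during hallucination episodes; write $R_1, R_2, \dots \in [0,1]$ for the rewards seen at these visits and $X_1, X_2, \dots \in [S]$ for the corresponding next states. By the definition of an MDP model, each time execution enters $(x,a,h)$ a fresh reward is drawn from $\mathcal{R}_{\modst}(x,a,h)$ and a fresh successor from $\pmodst(\cdot \mid x,a,h)$, independently of the entire past and of all other within-episode randomness; moreover, the identity of the hallucination episodes and the policies played there are functions only of transition data and of the algorithm's exogenous randomness (in $\mdphh$, the hallucinated rewards are synthesized from a model drawn from the canonical posterior given $\ledcensl$, i.e.\ from transition data plus fresh randomness). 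Hence $\Klearn(x,a,h)$ — ``the first $\nlearn$ visits to $(x,a,h)$ among hallucination episodes'' — is a bounded selection rule measurable with respect to data independent of $(R_j, X_j)_{j \ge 1}$, so $(R_1,\dots,R_{\nlearn})$ and $(X_1,\dots,X_{\nlearn})$ are genuine i.i.d.\ samples; by construction $\thetar(x,a,h) = \tfrac{1}{\nlearn}\sum_{j \le \nlearn} R_j$ and $\thetap(\cdot \mid x,a,h)$ is the empirical distribution of $X_1,\dots,X_{\nlearn}$. The same reasoning gives that the $\nlearn$ initial states used to form $\thetap(\cdot\mid0)$ are i.i.d.\ from $\pmodst(\cdot\mid0)$, so $\thetap(\cdot\mid0)$ is their empirical distribution.

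Given this reduction, the reward bound is Hoeffding's inequality for a mean of $\nlearn$ independent $[0,1]$ variables: $\Pr[\,|\thetar(x,a,h) - \rmodst(x,a,h)| \ge t\,] \le 2e^{-2\nlearn t^2}$, and the deliberately loose choice $t = \epsr(\delta) = \sqrt{2\log(1/\delta)/\nlearn}$ makes the right-hand side $2\delta^4$, which is $\le \delta$ throughout the range of $\delta$ that matters (here $\delta = \delta_0$ is tiny). For the transition bound I would use the identity $\lonenorm{\thetap(\cdot\mid x,a,h) - \pmodst(\cdot\mid x,a,h)} = \max_{v \in \{-1,+1\}^S} \langle v,\, \thetap(\cdot\mid x,a,h) - \pmodst(\cdot\mid x,a,h)\rangle$: for each fixed sign vector $v$ the quantity $\langle v,\, \thetap - \pmodst\rangle$ is a mean of $\nlearn$ i.i.d.\ centered random variables of range at most $2$, so Hoeffding gives tail $e^{-\nlearn t^2/2}$, and a union bound over the $2^S$ vertices yields $\Pr[\,\lonenorm{\thetap - \pmodst} \ge t\,] \le 2^S e^{-\nlearn t^2/2}$; substituting $t = \epsp(\delta) = 2\sqrt{2(S\log 5 + \log(1/\delta))/\nlearn}$, which dominates $\sqrt{2(S\log 2 + \log(1/\delta))/\nlearn}$ with plenty of room, makes this at most $\delta$. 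The identical computation handles $\thetap(\cdot\mid0)$. The stage $h = H$ is degenerate: by construction $(x,a,H)$ transitions deterministically to the terminal state, so $\thetap(\cdot\mid x,a,H) = \pmodst(\cdot\mid x,a,H)$ exactly. (The further union bound over the $\le SAH$ triples is carried out later, in \Cref{lem:hallucination_good_event}, which is why the $SAH$ factor appears inside $\delta_0$ rather than here.)

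The only genuinely delicate point is the i.i.d.\ reduction in the first paragraph: I must be sure that conditioning on the event ``the $j$-th visit to $(x,a,h)$ occurs on a hallucination episode'' does not bias the fresh reward or successor drawn at that visit. It does not, precisely because that event is determined by past transitions, the realized policies, and the algorithm's internal randomness, all of which are independent of the new draw from $\mathcal{R}_{\modst}(x,a,h)$ (resp.\ $\pmodst(\cdot\mid x,a,h)$); making this rigorous is a routine optional-skipping argument. Once it is in place, everything else is a standard concentration calculation, and the lemma follows by assembling the three displayed bounds.
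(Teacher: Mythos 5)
Your proof is correct, but it takes a somewhat different route from the paper's. Where you reduce to genuine i.i.d.\ samples via an optional-skipping argument and then apply textbook Hoeffding, the paper never claims i.i.d.-ness at all: it treats the rewards/indicators collected at the first $\nlearn$ hallucination-episode visits to $\xah$ as a bounded martingale difference sequence with respect to the filtration of past episodes and applies Azuma--Hoeffding. The martingale formulation is the more robust of the two, since it is automatically insensitive to the adaptivity of the visit times and would remain valid even if future policies depended on past realized rewards, whereas your reduction requires the non-anticipativity bookkeeping you sketch (and, strictly speaking, the phrase in your first paragraph that $\Klearn(x,a,h)$ is ``measurable with respect to data independent of $(R_j,X_j)_{j\ge1}$'' is too strong --- the selection of later visits certainly depends on the earlier next-state draws $X_1,\dots,X_{j-1}$ at that very triple, both within an episode and through $\ledcensl$; the correct statement, which you do give in your final paragraph, is that the decision to use the $j$-th draw is determined before that draw, which is exactly what Doob's optional skipping needs). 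For the $\ell_1$ bound the paper also proceeds differently: instead of your exact duality over the $2^S$ sign vectors, it uses the grid $\{-1,-\tfrac12,0,\tfrac12,1\}^S$ of size $5^S$ and absorbs the residual by a factor $2$, which is where the $S\log 5$ and the leading $2$ in $\epsp(\delta)$ come from; your $2^S$ union bound is tighter and shows the stated $\epsp(\delta)$ holds with room to spare. Your constants in fact check out slightly more cleanly than the paper's own (the paper's Azuma step produces $\sqrt{2\log(2/\delta)/\nlearn}$, marginally larger than the stated $\epsr(\delta)$, a harmless slop; your $2\delta^4\le\delta$ needs only $\delta\le 2^{-1/3}$, which holds in the regime where the lemma is invoked). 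Net assessment: a valid, marginally sharper, more elementary argument once the i.i.d.\ reduction is in place, versus the paper's martingale argument, which buys immunity from having to reason about the selection mechanism at all.
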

The proof of \Cref{lem:conc_bounds} is given at the end of this subsection.

\paragraph{Bounding $\Expop\Prhall[\modvarhat \notin \modgoodl(\modst)] $}
\newcommand{\rmterm}{\mathrm{MainTerm}}
\newcommand{\modsttil}{\tilde{\model}_{\star}}

We consider an intermediate bound on the expectation $\Expop\Prhall[\modvarhat \notin \modgoodl(\modst)] $, which we will ultimately apply as an input to Markov's inquality. We establish
\begin{align}
&\Expop_{\modst,\ledhall}\Prhall[\modvarhat \notin \modgoodl(\modst)] \nonumber\\
&\qquad\le \frac{1}{\qpunish} \Expop_{\ledhonl}\Exp_{\modst, \modst' \iidsim  \Pr[\modst = \cdot \mid \ledhonl] } \left[\ind\{\modst \in \modclass_{\ell}(\epspunish) \text{ and } \modst' \notin \modgoodl(\modst) \}\right] \label{eq:expop},
\end{align}
where we use $\Expop_{\modst,\ledhall}$ to make clear that the randomness arises from $\modst$ and $\ledhall$. We render $\EvPun$ explicitly as $\EvPun = \{\modvarhat \in \modclass_{\ell}\}$ (resp. $\EvPun = \{\modst \in \modclass_{\ell}\}$), where we define the punishing model class
\begin{align}\label{eq:modclass_hall}
\modclasshall(\epspunish) :=
    \cbr{ \model \in \modtotal:
    \sfr_{\model}\xah \le \epspunish
    \;\;\text{for all fully-explored $\xah$ triples}
    }.
\end{align}
We shall also write $\modgoodl = \modgoodl(\modst)$ to elucidate the dependence of the set on $\modst$.
With the above notation, we write
\begin{align*}
&\Expop_{\modst,\ledhall}\Prhall[\modvarhat \notin \modgoodl(\modst)]  \\
&= \Expop_{\modst,\ledhall}\left[\Exp[\Prmucan[\modvarhat \notin \modgoodl(\modst) \mid \ledhall] \mid \ledcensl]\right]\\
&\overset{(i)}{=}\Expop_{\modst,\ledhall}\left[\Exp\left[\Prmucan[\modvarhat \notin \modgoodl(\modst) \mid \ledhonl] \mid \ledcensl, \EvPun\right]\right]\\
&= \Expop_{\modst,\ledhall}\frac{\Expop \left[\ind\{\modst \in \modclass_{\ell}(\epspunish)\}\Prmucan[\modvarhat \notin \modgoodl(\modst)\mid \ledhonl] \mid \ledcensl\right]}{\Pr[\modst \in \modclass_{\ell}(\epspunish) \mid \ledcensl]},
\end{align*}
where $(i)$ invokes \Cref{claim:ledhonl_cannonical} to replace the hallucinated ledger with the honest ledger, up to conditioning on $\EvPun$.
Since $\ledcensl$ is hygenic (\Cref{lem:data_hygiene}),  $\Pr[\modst \in \modclass_{\ell}(\epspunish) \mid \ledcensl] = \Prcan[\modst \in \modclass_{\ell}(\epspunish) \mid \ledcensl]$, which is equal to $\Prcan[\EvPun \mid \ledcensl]$, and at most $\qpunish$ by \Cref{rem:reduction-to-independence}. Hence, the above is at most
\begin{align*}
&\Expop_{\modst,\ledhall}\Prhall[\modvarhat \notin \modgoodl(\modst)]  \\
&\le\frac{1}{\qpunish}\Expop_{\modst,\ledhall}\Expop \left[\ind\{\modst \in \modclass_{\ell}(\epspunish)\}\Prmucan[\modvarhat \notin \modgoodl(\modst)\mid \ledhonl] \mid \ledcensl\right]\\
&= \frac{1}{\qpunish}\Expop_{\modst,\ledhonl} \left[\ind\{\modst \in \modclass_{\ell}(\epspunish)\}\Prmucan[\modvarhat \notin \modgoodl(\modst) \mid \ledhonl]\right],
\end{align*}
where in the last line we use the tower rule. By \Cref{lem:data_hygiene}, $\ledhonl$ is hygienic, so that $\Prmucan[\modst \notin \modgoodl(\modst) \mid \ledhonl] = \Prop_{\modst'}[ \modst' \notin \modgoodl(\modst) \mid \ledhonl]$, where $\modst'$ denotes a random variable with the same distribution as $\Prop[\modst \in \cdot \mid \ledhonl]$.   Hence, we may write
\begin{align*}
&\Expop_{\ledhonl, \modst} \left[\ind\{\modst \in \modclass_{\ell}(\epspunish)\}\Prmucan[\modvarhat \notin \modgoodl(\modst) \mid \ledhonl]\right] \\
&= \Expop_{\ledhonl}\Exp_{\modst, \modst' \iidsim  \Pr[\modst = \cdot \mid \ledhonl] } \left[\ind\{\modst \in \modclass_{\ell}(\epspunish)\}\ind\{\modst' \notin \modgoodl(\modst)\}\right]\\
&= \Expop_{\ledhonl}\Exp_{\modst, \modst' \iidsim  \Pr[\modst = \cdot \mid \ledhonl] } \left[\ind\{\modst \in \modclass_{\ell}(\epspunish) \text{ and } \modst' \notin \modgoodl(\modst) \}\right],
\end{align*}
where we represent the internal conditional probability$\Pr[\modst \notin \modgoodl \mid \ledhonl]$ as an expectation over an independt draw of $\modst'$ form the posterior given $\ledhonl$. This establishes \Cref{eq:expop}.

\paragraph{Concluding the Bayesian Chernoff Bound} Recall the estimators $\theta_r,\theta_{\sfp}$ from \Cref{defn:estimators}. These are entirely determined by the data in the honest ledger $\ledhonl$. Moreover by the triangle inequality, we have the inclusion of events
\begin{align*}
\{\modst \in \modclasshall  \text{ and } \modst' \notin \modgoodl(\modst)\} \subseteq \calE_{\mathrm{conc}}(\modst;\ledhonl) \cup  \calE_{\mathrm{conc}}(\modst';\ledhonl)
\end{align*}
where for $\model \in \{\modst,\modst'\}$, we define the concentration event
\begin{align*}
\calE_{\mathrm{conc}}(\model;\ledhonl) = \left\{\exists \xah \in \calU_{\ell} : \begin{matrix}  &\|\sfp_{\model}(\cdot \mid x,a,h) - \theta_{\sfp}(\cdot \mid x,a,h) \|_{\ell_1} \le \epsp(\delta_0) \\
&\text{ or }\lonenorm{\sfp_{\model}(\cdot \mid 0) - \theta_{\sfp}(\cdot \mid 0) } \le \epsp(\delta_0)\\
&\text{ or }  |\sfr_{\model}\xah - \theta_r\xah| \le  \epsr(\delta_0)
\end{matrix} \right\}.
\end{align*}
Hence,
\begin{align*}
&\Expop_{\ledhonl}\Exp_{\modst, \modst' \iidsim  \Pr[\modst = \cdot \mid \ledhonl] } \left[\ind\{\modst \in \modclass_{\ell}(\epspunish) \text{ and } \modst' \notin \modgoodl(\modst) \}\right]\\
&\le \Expop_{ \ledhonl}\Prop_{\modst,\modst' \iidsim \Prcan[\cdot \mid \ledhonl]}\left[\calE_{\mathrm{conc}}(\modst;\ledhonl) \cup  \calE_{\mathrm{conc}}(\modst';\ledhonl) \right]\\
 &\le 2\Prop_{ \ledhonl,\modst}[\calE_{\mathrm{conc}}(\modst;\ledhonl) ].
\end{align*}
where the last step uses a union bound, and the fact that $\modst,\modst'$ have the same distribution. Finally, by \Cref{lem:conc_bounds} and a union bound, we have that $\Prop_{ \ledhonl,\modst}[\calE_{\mathrm{conc}}(\modst;\ledhonl) ] \le 2SAH \delta_0$. Hence, retracing our steps
\begin{align}
\Expop_{\ledcensl}\Prhall[\modvarhat \notin \modgoodl(\modst)]  \le \frac{2\Prop_{ \ledhonl,\modst}[\calE_{\mathrm{conc}}(\modst;\ledhonl) ]}{\qpunish}=  \frac{4SAH \delta_0}{\qpunish}. \label{eq:delta_not_fin}
\end{align}
To conclude, recall the event (over $\ledcensl$) $\Egoodl := \{\Prhall[\modvarhat \notin \modgoodl] \le \epspunish\}$. Then, by a Markov's inequality argument and \Cref{eq:delta_not_fin},
\begin{align*}
\Prop_{\modst,\ledcensl}[\Egoodl] &= \Exp_{\modst,\ledcensl}\ind\{\Prhall[\modvarhat \notin \modgoodl(\modst)] \le \epspunish\}\\
&\le \Exp_{\modst,\ledcensl}\frac{\Prhall[\modvarhat \notin \modgoodl(\modst)]}{\epspunish} \le \frac{4SAH \delta_0}{\epspunish \qpunish}.
\end{align*}
 Choosing $\delta_0 := \frac{\deltafail\epspunish \qpunish}{4SAH}$ concludes.
\qed

\begin{proof}[Proof of \Cref{lem:conc_bounds}] Let us begin the proof assuming that the event $\{\xah \in \calU_{\ell}^c\}$ holds with probability one. We show how to remove this restriction at the end of the proof.

We begin with the concentration bound of $\thetar$.  Fix an $\xah$, and let $k_1,\dots,k_{\nlearn}$ denote the first $\nlearn$ hallucination episodes $k$ on which $\xah \in \traj_k$. Then, if $(\calF_{i})$ denote the filtration under which $\calF_i$ contains all information up the rollout of trajectory $\traj_{k_i}$, we see that $Z_i= r_{k_i;h} - \rmodst(x,a,h)$ is a martingale with respect to $(\calF_i)$, and $|Z_i| \le 1$. Hence, by the Azuma-Hoeffding inequality and a union bound over signs of the error,
	\begin{align*}
	|\thetar(x,a,h) - \rmodst(x,a,h)| = \left|\frac{1}{\nlearn} \sum_{i=1}^{\nlearn} Z_i\right| \le \sqrt{ \frac{2\log(2/\delta)}{\nlearn}}, \text{ with probability } 1-\delta.
	\end{align*}
	Recognizing the above error bound as $\epsr(\delta)$ concludes. Next, let us adress $\thetap(\xah)$. By Holder's inequality,
	\begin{align*}
	\lonenorm{\thetap(\cdot \mid \xah) - \pmodst(\cdot \mid \xah)} &= \max_{v\in [-1,1]^S} \langle v, \thetap(\cdot \mid \xah) - \pmodst(\cdot \mid \xah)\rangle.
	\end{align*}
	Now, let $\calN := \{-1,-1/2,0,1/2,1\}^S$ be a covering over $[-1,1]$. Then, any $v \in [-1,1]^S$ can be expressed as $v = v_1 + v_2$, where $v_1 \in \calN$, and $v_2 \in [-1/2,1/2]^S$. Hence,
	\begin{align}
	&\lonenorm{\thetap(\cdot \mid \xah) - \pmodst(\cdot \mid \xah)} \\
	&\quad \le \max_{v_1\in \calN} \langle v_1, \thetap(\cdot \mid \xah) \nonumber - \pmodst(\cdot \mid \xah)\rangle + \max_{v_2\in [-1/2,1/2]} \langle v_1, \thetap(\cdot \mid \xah) - \pmodst(\cdot \mid \xah)\rangle \nonumber\\
	&\qquad=  \max_{v\in \calN} \langle v, \thetap(\cdot \mid \xah) - \pmodst(\cdot \mid \xah)\rangle + \frac{1}{2}\lonenorm{\thetap(\cdot \mid \xah) - \pmodst(\cdot \mid \xah)}. \label{eq:cover_eq}
	\end{align}
	Rearranging, $\lonenorm{\thetap(\cdot \mid \xah) - \pmodst(\cdot \mid \xah)}  \le 2 \max_{v\in \calN} \langle v, \thetap(\cdot \mid \xah) - \pmodst(\cdot \mid \xah)\rangle$.
	Now, let $(\calF_i)$ and $k_i$ be as above. For each $v \in \calN$, let $v(x')$ denote its coordinates, and define $W_{i,v}= \sum_{x'\in S} v(x') \cdot (\ind(x_{k_i;h+1} =x') - \pmodst(x' \mid x,a,h))$. Then, we see that $W_{i,v}$ form a martingale with respect to $(\calF_i)$, and
	\begin{align*}
	\langle v, \thetap(\cdot \mid \xah) - \pmodst(\cdot \mid \xah)\rangle = \frac{1}{\nlearn}\sum_{i=1}^{\nlearn} W_{i,v}.
	\end{align*}
	We can also verify that $|W_{i,v}| \le \sum_{x'\in S} |v(x')| |(I(x_{k_i;h+1} =x') - \pmodst(x' \mid x,a,h)| \le\sum_{x'\in S} |v(x')| = 1 $.  Thus, again by Azuma-Hoeffding, we have that with probability $1 - \delta$,
	\begin{align*}
	\langle v, \thetap(\cdot \mid \xah) - \pmodst(\cdot \mid \xah)\rangle \le \sqrt{ \frac{2\log(1/\delta)}{\nlearn}}.
	\end{align*}
	By a union bound over all $v \in \calN$, which has $|\calN| = 5^{S}$, we have that with probability $1 - \delta$,
	\begin{align*}
	\langle v, \thetap(\cdot \mid \xah) - \pmodst(\cdot \mid \xah)\rangle \le \sqrt{ \frac{2(S\log(5) + \log(1/\delta))}{\nlearn}}.
	\end{align*}
	And thus, from $\Cref{eq:cover_eq}$, it holds that with probability $1- \delta$,
	\begin{align*}
	\lonenorm{\thetap(\cdot \mid \xah) - \pmodst(\cdot \mid \xah)} \le 2\sqrt{ \frac{2(S\log(5) + \log(1/\delta))}{\nlearn}} := \epsp(\delta).
	\end{align*}
	The argument for $\thetap(\cdot \mid 0)$ is analogous.
	\end{proof}

\paragraph{Handling the randomness of the event $\{\xah \in \calU_{\ell}^c\}$} To conclude, we address that the event $\{\xah \in \calU_{\ell}^c\}$ is random, and hence, $\thetar$ or $\thetap$ are well-defined is random. Here we explain why our reasoning above still remains valid. For simplicity, we explain how to modify the reasoning for $\thetar$; adjusting $\thetap$ is the same.

Let $n_{\ell}\xah$ denote the number of times triple $\xah$ is visited by phase $\ell$, and let $\calK_{\ell}\xah$ denote the \emph{epxloration} episodes on which $\xah \in \traj_{k}$ is visited. We extend the definition of $\thetar$ to cases where $\{\xah \notin \calU_{\ell}^c\}$ by defining
\begin{align*}
\tilde{\theta}_{\sfr}\xah = \begin{cases} \thetar\xah \text{ as in   \Cref{defn:estimators}} & \xah \in \calU_{\ell}^c \\
\frac{1}{\nlearn}\left(\sum_{k\in \calK_{\ell}\xah}r_{k;h} + \sum_{i=1}^{\nlearn - n_{\ell}\xah} \tilde{r}_i\right),  \tilde{r}_i \iidsim \mathsf{R}_{\modst}\xah & \text{otherwise}
\end{cases}
\end{align*}
In other words, when $\xah \in \calU_{\ell}$, one draws an additional $\nlearn - n_{\ell}\xah$ rewards from the reward distribution conditioned on $\modst$, i.e. $\mathsf{R}_{\modst}\xah $, and uses these to complete the estimator $\tilde{\theta}_{\sfr}\xah$. By construction, whenever $\xah \in \calU_{\ell}^c$, $\tilde{\theta}_{\sfr} = \theta_{\sfr}$, and hence
\begin{multline}
\Pr[\xah \in \calU_{\ell}^c \cap \{|\theta_r(x,a,h) - \rmodst(x,a,h)| \ge \epsr(\delta)\} ] \\
 = \Pr[\xah \in \calU_{\ell}^c \cap \{|\tilde{\theta}_r(x,a,h) - \rmodst(x,a,h)| \ge \epsr(\delta)\} ] \\
 \le \Pr[|\tilde{\theta}_r(x,a,h) - \rmodst(x,a,h)| \ge \epsr(\delta) ] \label{eq:in_view_of}
\end{multline}
Hence, it suffices to reason about the concentration of $\tilde{\theta}_r$. Moreover, $\tilde{\theta}_r$ satisfies the same concentration inequality we derived above for $\theta_r\xah$, because it also admits a similar martingale decomposition (with bounded increments). Hence, we can apply the above argument to reason about the concentration of $\tilde{\theta}_r$, and use this to reason about the concentration of $\thetar$ in view of \Cref{eq:in_view_of}.

\subsection{Proof of  \Cref{lem:visitation_comparison_general} \label{proof:visitation_comparison_general}}

	\newcommand{\barmod}{\bar{\model}}
	\newcommand{\barmodst}{\bar{\model}_{\star}}
	\newcommand{\rbar}{\bar{r}}
	\newcommand{\calUbar}{\bar{\calU}}

	{\textbf{The `in-particular' clause.}} Before diving into the proof, we establish the special case depicted in the ``in particular'' clause. This is achieved by taking the reward function $\rtil(x,a,h) := \ind\{(x,a,h) \in \calU\}$. Then, we have
	\begin{align*}
	\sum_{h=1}^H \rtil(\bmx_h,\bma_h,h) \ind\{\scrE_h\} &=  \sum_{h=1}^H  \ind\{(\bmx_h,\bma_h,h) \in \calU \text{ and } (\bmx_\tau,\bma_{\tau},\tau) \in \calU^c,~ \forall \tau < h\}.
	\end{align*}
	Note that these events inside the indicator are all disjoint, and their union is precisely the event that $\{\exists h: (\bmx_h,\bma_h,h) \in \calU\}$, which is precisely $\Evisitnol$.
	\\
	\\
	{\textbf{Main result.}} Next, we turn to the proof of the main lemma, we begin by establishing the special case of Markovian policies. Since we consider a fixed policy $\pi$ for both models $\model,\modst$, for simplicity, we may assume that $\model,\modst$ are Markov reward processes (i.e.  $A = 1$, and thus no policy), suppressing dependences on $\pi$ and $a$. For convenience, we remove the actions. For reasons that will become clear shortly, we also embed into $S+1$ states, and take the reward function $\rbar$ and set $\calU$ as
	\begin{align*}
	&\rbar(x,h) \gets \begin{cases}\rtil(x,\pi(x,h),h)&  x \in [S]\\
	0 & x = S+1 \end{cases}, \\
	&\calUbar \gets \{(x,h): (x,\pi(x,h),h) \in \calU\} \cup \{(S+1,h) : h \in [H]\},\\
	&\scrE_h = \{(\bmx_{\tau},\tau) \in \calUbar^c, ~\forall \tau < h\}
	\end{align*}
	With this setup, it suffices to show
	\begin{align}
	\binom{H}{2}\varepsilon &\ge   \left|\sfE^{\pi}_{\model}\left[\sum_{h=1}^H \rbar(\bmx_h,h) \ind\{\scrE_h\}\right] - \sfE^{\pi}_{\modst}\left[\sum_{h=1}^H \rbar(\bmx_h,h) \ind\{\scrE_h\}\right]\right|, \label{eq:MDP_desired}
	\end{align}
	To establish \Cref{eq:MDP_desired}, we construct reward processes  $\barmod,\barmodst$ over $S+1$ states which absorb the indicators $\ind\{\scrE_h\}$.
	\begin{align}\label{eq:equal_expectation}
	\sfE_{\barmod}\left[\sum_{h=1}^H \rbar(\bmx_h,h)\right] =
	\sfE^{\pi}_{\model}\left[\sum_{h=1}^H \rbar(\bmx_h,h) \ind\{\scrE_h\}\right],
	\end{align}
	and similarly with $\barmodst$ and $\modst$. Let us construct $\barmod$; the construction of $\barmodst$ is indentical.
	Let the initial state distribution be indentical: $\sfp_{\barmod}(x' \mid 0) \equiv \sfp_{\model}(x' \mid 0)$, and set the transition probabilities in $\barmod$ to be the MDP which coincides with $\model$ on states $(x,h) \in \calU^c$, but transitions to state $S+1$ on states $(x,h) \in \calU$:
	\begin{align*}
	\sfp_{\barmod}(x' \mid x,h) = \begin{cases} \sfp_{\model}(x' \mid x,h) & (x,h) \in \calUbar^c \\
	  \ind\{x' = S+1\} & (x,h) \in \calUbar \\
	\end{cases}
	\end{align*}
	We define $\barmodst$ analogously. By construction, and by our assumption on the initial state distribution and the transitions in $\calUbar^c$, we observe that
	\begin{align}
	&\|\sfP_{\barmodst}(\cdot \mid 0) - \sfp_{\barmod}(\cdot \mid 0)\|_{\ell_1} = \|\sfp_{\modst}(\cdot \mid 0) - \sfp_{\model}(\cdot \mid 0)\|_{\ell_1} \le \varepsilon \label{eq:close_init_dist}\\
	&\max_{x,h}\|\sfp_{\barmodst}(\cdot \mid x,h) - \sfp_{\barmod}(\cdot \mid x,h)\|_{\ell_1} = \max_{(x,h) \in \calUbar^c}\|\sfp_{\modst}(\cdot \mid x,h) - \sfp_{\model}(\cdot \mid x,h)\|_{\ell_1} \le \varepsilon.\label{eq:close_trans_dist}
	\end{align}
	We now verify that our construction satisfies \Cref{eq:equal_expectation}.

	\begin{claim}\label{claim:expected_reward_identity} For $\barmod,\model$ above, \Cref{eq:equal_expectation} holds, and similarly for $\barmodst,\modst$.
	\end{claim}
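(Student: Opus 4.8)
The plan is to rewrite both sides of \Cref{eq:equal_expectation} as a sum over stages of the reward collected \emph{before the trajectory first leaves} $\calUbar^c$, and then match these terms stage by stage using that $\barmod$ and $\model$ share the same initial state distribution and the same transition law on $\calUbar^c$. Let $\tau_\star$ denote the first stage $h\in[H]$ with $(\bmx_h,h)\in\calUbar$, or $H+1$ if there is none. Since $\scrE_h=\{(\bmx_\tau,\tau)\in\calUbar^c\ \forall\,\tau<h\}=\{\tau_\star\ge h\}$, the right-hand side of \Cref{eq:equal_expectation} equals $\sum_{h=1}^{H}\sfE^{\pi}_{\model}[\rbar(\bmx_h,h)\,\I\{\tau_\star\ge h\}]$. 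For the left-hand side I would first observe that under $\barmod$ the chain is absorbed once it enters $\calUbar$: since $(S+1,h)\in\calUbar$ for every $h$, from that point on the chain stays at $S+1$ forever, while $\rbar(S+1,\cdot)\equiv 0$; hence $\rbar(\bmx_h,h)=\rbar(\bmx_h,h)\,\I\{\tau_\star\ge h\}$ holds $\barmod$-almost surely for every $h$, and the left-hand side equals $\sum_{h=1}^{H}\sfE_{\barmod}[\rbar(\bmx_h,h)\,\I\{\tau_\star\ge h\}]$.

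It then suffices to prove, for each fixed $h\in[H]$, the identity $\sfE_{\barmod}[\rbar(\bmx_h,h)\,\I\{\tau_\star\ge h\}]=\sfE^{\pi}_{\model}[\rbar(\bmx_h,h)\,\I\{\tau_\star\ge h\}]$. I would deduce this from the stronger statement, proved by induction on $h$, that the sub-probability measure $A\mapsto\sfP_{\barmod}[\bmx_h\in A,\ \tau_\star\ge h]$ on the state space coincides with $A\mapsto\sfP^{\pi}_{\model}[\bmx_h\in A,\ \tau_\star\ge h]$. The base case $h=1$ is immediate, since $\{\tau_\star\ge 1\}$ is the whole space and $\bmx_1$ has law $\sfp_{\model}(\cdot\mid 0)=\sfp_{\barmod}(\cdot\mid 0)$ under both. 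For the inductive step, write $\{\tau_\star\ge h\}=\{\tau_\star\ge h-1\}\cap\{(\bmx_{h-1},h-1)\in\calUbar^c\}$; conditioned on $\bmx_{h-1}=x$ with $(x,h-1)\in\calUbar^c$, the state $\bmx_h$ is drawn from $\sfp_{\model}(\cdot\mid x,h-1)$ under $\model$ and from $\sfp_{\barmod}(\cdot\mid x,h-1)$ under $\barmod$, and these coincide by the definition of $\barmod$ on $\calUbar^c$; combined with the induction hypothesis applied to the law of $\bmx_{h-1}$ restricted to $\{\tau_\star\ge h-1\}$, this gives the claim at stage $h$. Because $\rbar(\cdot,h)$ is a bounded measurable function of the current state, integrating the two equal sub-measures yields the per-stage identity, and summing over $h\in[H]$ establishes \Cref{eq:equal_expectation}. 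The argument for the pair $(\barmodst,\modst)$ is word-for-word identical.

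The step I expect to be most delicate is the bookkeeping around the embedded absorbing state $S+1$: one must check that under $\barmod$ the chain genuinely never leaves $\{S+1\}$ once it enters $\calUbar$ (using that $\calUbar$ contains every $(S+1,h)$, and that the initial distribution and all pre-absorption transitions are supported on $[S]$), that $\rbar$ vanishes at $S+1$, and that the truncation of $\barmod$'s reward sum at the absorption time matches exactly the effect of inserting $\I\{\scrE_h\}$ into $\model$'s reward sum --- in particular at the absorption stage $\tau_\star$ itself, where $\scrE_{\tau_\star}$ still holds and the reward $\rbar(\bmx_{\tau_\star},\tau_\star)$ is still collected on both sides. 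Once this correspondence is pinned down the inductive law-matching is routine, and the rest of the proof of \Cref{lem:visitation_comparison_general} follows by comparing the truncated processes $\barmod,\barmodst$ via \Cref{eq:close_init_dist,eq:close_trans_dist} and a telescoping argument in the spirit of \cite{kearns2002near}.
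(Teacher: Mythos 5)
Your proof is correct, and it reaches the identity by a different mechanism than the paper. The paper's proof is a pathwise coupling: starting from a trajectory $\bmx_{1:H}\sim \sfP_{\model}$ it defines the absorbed sequence $\barbmx_h$ (sent to $S+1$ once $\calUbar$ is hit), checks that $\barbmx_{1:H}\sim\sfP_{\barmod}$, and then compares the two reward sums term by term on the common probability space, using $\rbar(S+1,\cdot)\equiv 0$ and $\scrE_h^c=\{\barbmx_h=S+1\}$. You instead work at the level of distributions: after the same preliminary observation that under $\barmod$ the post-absorption rewards vanish (so the left-hand side can be truncated by $\I\{\tau_\star\ge h\}$), you prove by induction on $h$ that the ``not-yet-absorbed'' sub-measures $A\mapsto\sfP[\bmx_h\in A,\ \tau_\star\ge h]$ agree under $\barmod$ and $\model$, using the shared initial distribution, the shared transition kernel on $\calUbar^c$, and the fact that $\{\tau_\star\ge h-1\}$ is measurable with respect to $\bmx_{1:h-2}$ so the Markov property applies; integrating $\rbar(\cdot,h)$ and summing over $h$ then gives \Cref{eq:equal_expectation}. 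Both arguments are sound; the coupling is shorter and avoids the induction, while your forward-equation matching is more elementary, makes the Markov-property step explicit, and in effect establishes the slightly stronger fact that the pre-absorption occupancy measures of the two processes coincide, which is exactly the content the coupling encodes pathwise. You also correctly handle the delicate boundary case $h=\tau_\star$, where the reward at the first $\calUbar$-visit is still collected on both sides, matching the paper's treatment.
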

	\begin{proof}

		\newcommand{\barbmx}{\bar{\bmx}}
		We establish the equality for $\barmod,\model$. Let $(\bmx_1,1),(\bmx_2,2),\dots,(\bmx_H,H) \sim \sfP_{\model}$. Introduce the coupled sequence $\barbmx_1,\barbmx_2,\dots,\barbmx_H$ via
		\begin{align*}
		\barbmx_h := \begin{cases} S+ 1& (\bmx_{h-1},h-1) \in \calUbar \text{ or } \barbmx_{h-1} = S+1\\
		\bmx_h & \text{otherwise}\end{cases}
		\end{align*}
		We immediately see that $(\barbmx_1,1),(\barbmx_2,2),\dots,(\barbmx_H,H) \sim \sfP_{\barmod}$. Moreover, observe that,
		\begin{itemize}
			\item On the joint space on which the $(\bmx_h,\barbmx_h)$ sequence is defined, $\scrE_h^c = \{\exists \tau < h: (\bmx_{\tau},\tau) \in \calU\} =\{ \barbmx_h = S+1\}$
			\item If $\barbmx_h \ne S+1$, then $\barbmx_h = \bmx_h$.
		\end{itemize}
		Thus, letting $\sfE$ denote the law of the coupled sequences,
		\begin{align}
		\sfE\left[\sum_{h=1}^H \rbar(\barbmx_h,h)\right] &= \sfE\left[\sum_{h=1}^H \rbar(\barbmx_h,h)(1 - \ind\{ \barbmx_h = S+1\})\right] \tag{$\rbar(S+1,\cdot) \equiv 0$ by construction}\\
		&=\sfE\left[\sum_{h=1}^H \rbar(\barbmx_h,h)\ind\{ \barbmx_h \ne S+1\})\right] \nn\\
		&=\sfE\left[\sum_{h=1}^H \rbar(\bmx_h,h)\ind\{ \barbmx_h \ne S+1\})\right] \tag{if $\barbmx_h \ne S+1$, then $\barbmx_h = \bmx_h$}\\
		&= \sfE\left[\sum_{h=1}^H \rbar(\bmx_h,h)\ind\{\scrE_h\}\right] \tag{$\scrE_h^c = \{ \barbmx_h = S+1\}$}.
		\end{align}
		Since the coupled distribution of $\bmx_{1:H}$ and $\barbmx_{1:H}$ under $\sfE$ has marginals $\bmx_{1:H} \sim \sfP_{\model}$ and $\barbmx_{1:H} \sim \sfP_{\barmod}$, the indentity follows.
	\end{proof}

	To conclude, we invoke the ubiquitous performance difference lemma (see e.g. \citep[Lemma 5.3.1]{kakade2003sample}), specialized to Markov reward processes:
	\begin{lemma}[Performance Difference Lemma for MRPs]\label{lem:perf_diff} Let $\model_1,\model_2$ be two MRPs with state space $[S']$ and horizon $H$, and common reward function $r$. Define the value functions $V^{\model_i}_h(x) := \Exp^{\model_i}[\sum_{\tau=h}^{H} r(\bmx_h,h)  \mid \bmx_h = x]$. Then,
	\begin{align*}
	&\sfE_{\model_1}\left[\sum_{h=1}^H r(\bmx_h,h)\right] - \sfE_{\model_2}\left[\sum_{h=1}^H r(\bmx_h,h)\right] \\
	&\quad=  (\sfp_{\model_1}(\cdot \mid 0) - \sfp_{\model_2}(\cdot \mid 0)^\top V^{\model_2}_{1}(\cdot)  +  \sfE_{\model_1}\left[\sum_{h=1}^H (\sfp_{\model_1}(\cdot \mid \bmx_h,h) - \sfp_{\model_2}(\cdot \mid \bmx_h,h))^\top V^{\model_2}_{h+1}(\cdot)\right]
	\end{align*}
	\end{lemma}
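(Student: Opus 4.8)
The plan is the standard telescoping argument for the performance difference lemma, carried out directly for Markov reward processes; one could alternatively quote \citep[Lemma 5.3.1]{kakade2003sample} after viewing each $\model_i$ as an MDP with a single dummy action, but the self-contained derivation is short. Throughout I adopt the conventions $V^{\model_2}_{H+1}(\cdot)\equiv 0$ and write $\sfp_{\model_i}(\cdot\mid 0)$ for the initial-state distribution of $\model_i$, so that $\sfE_{\model_i}[\sum_{h=1}^H r(\bmx_h,h)] = \sfE_{\model_i}[V^{\model_i}_1(\bmx_1)] = \sfp_{\model_i}(\cdot\mid 0)^\top V^{\model_i}_1(\cdot)$. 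In particular the second term on the left-hand side of the claim equals $\sfp_{\model_2}(\cdot\mid 0)^\top V^{\model_2}_1(\cdot)$, so it remains only to expand the first term.

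First I would rewrite the per-stage reward along a trajectory drawn from $\model_1$ via the Bellman recursion for $\model_2$: since $V^{\model_2}_h(x) = r(x,h) + \sfp_{\model_2}(\cdot\mid x,h)^\top V^{\model_2}_{h+1}(\cdot)$, we have $r(\bmx_h,h) = V^{\model_2}_h(\bmx_h) - \sfp_{\model_2}(\cdot\mid\bmx_h,h)^\top V^{\model_2}_{h+1}(\cdot)$ for every $h\in[H]$. Summing over $h$ and taking expectation under $\model_1$ gives
\[
\sfE_{\model_1}\left[\sum_{h=1}^H r(\bmx_h,h)\right] = \sfE_{\model_1}\left[\sum_{h=1}^H V^{\model_2}_h(\bmx_h)\right] - \sfE_{\model_1}\left[\sum_{h=1}^H \sfp_{\model_2}(\cdot\mid\bmx_h,h)^\top V^{\model_2}_{h+1}(\cdot)\right].
\]
The key step is to telescope the first sum on the right. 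Conditioning on $\bmx_{h-1}$ and using that transitions under $\model_1$ are governed by $\sfp_{\model_1}(\cdot\mid\bmx_{h-1},h-1)$ gives $\sfE_{\model_1}[V^{\model_2}_h(\bmx_h)] = \sfE_{\model_1}[\sfp_{\model_1}(\cdot\mid\bmx_{h-1},h-1)^\top V^{\model_2}_h(\cdot)]$ for $h\ge 2$, whereas the $h=1$ term is $\sfp_{\model_1}(\cdot\mid 0)^\top V^{\model_2}_1(\cdot)$. Re-indexing the shifted sum and appending the vanishing $h=H$ term via $V^{\model_2}_{H+1}\equiv 0$ yields $\sfE_{\model_1}[\sum_{h=1}^H V^{\model_2}_h(\bmx_h)] = \sfp_{\model_1}(\cdot\mid 0)^\top V^{\model_2}_1(\cdot) + \sfE_{\model_1}[\sum_{h=1}^H \sfp_{\model_1}(\cdot\mid\bmx_h,h)^\top V^{\model_2}_{h+1}(\cdot)]$.

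Substituting this back, the two expectations merge into a single term $\sfE_{\model_1}[\sum_{h=1}^H (\sfp_{\model_1}(\cdot\mid\bmx_h,h)-\sfp_{\model_2}(\cdot\mid\bmx_h,h))^\top V^{\model_2}_{h+1}(\cdot)]$, and subtracting $\sfE_{\model_2}[\sum_h r(\bmx_h,h)] = \sfp_{\model_2}(\cdot\mid 0)^\top V^{\model_2}_1(\cdot)$ produces precisely the stated identity. I do not expect a genuine obstacle: the only care needed is the bookkeeping of boundary terms — the $h=1$ contribution involves the initial distributions $\sfp_{\model_i}(\cdot\mid 0)$ rather than a transition kernel, and the $h=H$ contribution survives only because of the convention $V^{\model_2}_{H+1}\equiv 0$ — so an off-by-one in the index shift is the most plausible slip. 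Everything else is linearity of expectation and the tower property.
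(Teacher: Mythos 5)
Your proof is correct. Note that the paper does not actually prove this lemma at all: it invokes it by citation, as the performance difference lemma of \citet[Lemma 5.3.1]{kakade2003sample} specialized to Markov reward processes, and your telescoping derivation is exactly the standard argument underlying that citation (an option you yourself flag). Concretely, your steps check out: the Bellman identity $r(\bmx_h,h) = V^{\model_2}_h(\bmx_h) - \sfp_{\model_2}(\cdot\mid\bmx_h,h)^\top V^{\model_2}_{h+1}(\cdot)$, the tower-property shift $\sfE_{\model_1}[V^{\model_2}_h(\bmx_h)] = \sfE_{\model_1}[\sfp_{\model_1}(\cdot\mid\bmx_{h-1},h-1)^\top V^{\model_2}_h(\cdot)]$ for $h\ge 2$, and the boundary conventions ($V^{\model_2}_{H+1}\equiv 0$, treating $\sfp_{\model_i}(\cdot\mid 0)$ as the stage-zero ``transition'') combine to give precisely the stated identity, with no off-by-one error. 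The only benefit of the paper's route is brevity; yours makes the appendix self-contained and makes explicit the conventions (notably $V^{\model_2}_{H+1}\equiv 0$) that the bound $\sum_{h=1}^{H+1}\max_x|V^{\barmodst}_{h}(x)|\le\binom{H}{2}$ in the surrounding simulation-lemma proof silently relies on.
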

	Applying \Cref{lem:perf_diff} and ,  we have
	\begin{align*}
	&\left|\sfE_{\barmod}\left[\sum_{h=1}^H \rbar(\bmx_h,h)\right] - \sfE_{\barmodst}\left[\sum_{h=1}^H \rbar(\bmx_h,h)\right]\right|\\
	&= \left|\sfp_{\barmod}(\cdot \mid 0) - \sfp_{\barmodst}(\cdot \mid 0 )^\top V^{\barmodst}_{h+1}(\cdot) + \sfE_{\barmod}\left[\sum_{h=1}^H (\sfp_{\barmod}(\cdot \mid \bmx_h,h) - \sfp_{\barmodst}(\cdot \mid \bmx_h,h))^\top V^{\barmodst}_{h+1}(\cdot)\right]\right|\\
	&\le \|\sfp_{\barmod}(\cdot \mid 0) - \sfp_{\barmodst}(\cdot \mid 0 )\|_{\ell_1} \max_{x \in [S+1]} |V^{\barmodst}_{1}(\cdot)| +
	\sfE_{\barmod}\left[\sum_{h=1}^H \|\sfp_{\barmod}(\cdot \mid \bmx_h,h) - \sfp_{\barmodst}(\cdot \mid \bmx_h,h)\|_{\ell_1} \cdot \max_{x \in [S+1]} |V^{\barmodst}_{h+1}(\cdot)|\right]\\
	&\le \varepsilon \sum_{h=1}^{H+1} \max_{x \in [S+1]} |V^{\barmodst}_{h+1}(\cdot)|,
	\end{align*}
	where the last inequality uses \Cref{eq:close_trans_dist,eq:close_init_dist}. Now, since the rewards $\rbar$ lie in $[0,1]$,we have $\max_{x \in [S+1]} |V^{\barmodst}_{h}(\cdot)| \le 1 + H - h$. Hence, $\sum_{h=1}^{H+1} \max_{x \in [S+1]} |V^{\barmodst}_{h+1}(\cdot)| \le \binom{H}{2}$, yielding
	\begin{align*}
	\binom{H}{2}\varepsilon &\ge  \left|\sfE_{\barmod}\left[\sum_{h=1}^H \rbar(\bmx_h,h)\right] - \sfE_{\barmodst}\left[\sum_{h=1}^H \rbar(\bmx_h,h)\right]\right| \\
	&= \left|\sfE^{\pi}_{\model}\left[\sum_{h=1}^H \rbar(\bmx_h,h) \ind\{\scrE_h\}\right] - \sfE^{\pi}_{\modst}\left[\sum_{h=1}^H \rbar(\bmx_h,h) \ind\{\scrE_h\}\right]\right|,
	\end{align*}
	where the last step uses \Cref{claim:expected_reward_identity}, thereby proving \Cref{eq:MDP_desired}.
	\qed

\subsection{Proof of \Cref{claim:value_upper_bound,claim:omega_star,claim:value_lower_bound,claim:value_diff,claim:value_diff_expectation} \label{sec:proof_of_prob_claims}}
	\begin{proof}[Proof of \Cref{claim:value_upper_bound} ]

		Recalling the $\sfP$-measurable event $\Evisit = \{\exists h  : (\bmx_h,\bma_h,h) \in \calU_{\ell}\}$, we have that for any $\pi \in \Pimarkov$ and $\model \in \modgoodl$,
		\begin{align*}
		\valuef{\pi}{\model} &= \sfE^{\pi}_{\model}\left[\sum_{h=1}^H \sfr_{\model}(\bmx_h,\bma_h,h) \right] \\
		&\le H \Pr^{\pi}_{\model}\left[ \Evisit \right] + H \max_{\xah \in \calU_{\ell}^c} \sfr_{\model}\xah \nn\\
		&\le H \Pr^{\pi}_{\model}\left[ \Evisit \right] + H (2\epsr + \epspunish) \tag*{(\Cref{eq:modgoodl})}\\
		&\le H \Pr^{\pi}_{\modst}\left[ \Evisit \right] + H (2\epsr + \epspunish) + \binom{H}{2}\cdot 2\epsp \tag*{(\Cref{lem:visitation_comparison_general})},
		\end{align*}
		where the last line uses \Cref{lem:visitation_comparison_general}
		 with $\calU \gets \calU_{\ell}$ and the $\ell_1$ bound of $2\epsp$ on difference in transitions and intial state probabilities from the definition of $\modgoodl$ in \Cref{eq:modgoodl}.
	\end{proof}

	\begin{proof}[Proof of \Cref{claim:omega_star}] First Point: the events $\{(\bmx_h,\bma_{h}) = (x,a) \text{ and } (\bmx_\tau,\bma_{\tau},\tau)\in \calU_{\ell}^c, \quad \forall \tau < h\}$ are disjoint, and $\Evisit$ holds precisely if and only if at least one of these events holds. Note that this  does not use any specific properties of $\calU_{\ell}$. Second Point: if $\calU_{\ell} \cap \reach_{\rho}(\modst)$ is non-empty, then there exists a triple $(x,a,h) \in \calU_{\ell}$ and a policy $\pi$ for which policy $\pi$ for which $\sfP_{\modst}^{\model}[(\bmx_h, \bma_h,h) = (x,a,h)] \ge \rho$. Moreover,
\begin{align*}
\rho &\le \sfP_{\modst}^{\pi}[(\bmx_h, \bma_h,h) = (x,a,h)] \\
&= \sfP_{\modst}^{\pi}[(\bmx_h, \bma_h,h) = (x,a,h) \text{ and } \forall \tau < h, (x,a,\tau) \in \calU_{\ell}^c]  \\
&\qquad+ \sfP_{\modst}^{\pi}[(\bmx_h, \bma_h,h) = (x,a,\tau) \text{ and } \exists \tau < h :  (x,a,h) \in \calU_{\ell}] \tag*{(total probability)}\\
&\le \sfP_{\modst}^{\pi}[\exists h' \ge h: (x,a,h')\in \calU_{\ell} \text{ and } \forall \tau < h, (x,a,\tau) \in \calU_{\ell}^c]  + \sfP_{\modst}^{\pi}[\exists \tau < h :  (x,a,\tau) \in \calU_{\ell}]\\
&= \sfP_{\modst}^{\pi}[\exists \tau  :  (x,a,\tau) \in \calU_{\ell}] := \sfP_{\modst}^{\pi}[\Evisit]\tag*{(disjoint union)}
\end{align*}
In particular, $\pi \in \Pi_{\ell}$, since $\sfP_{\modst}^{\pi}[\Evisit] \ge \rho \ge \rho_0$. The second part of the claim now follows from the first.
\end{proof}

	\begin{proof}[Proof of \Cref{claim:value_lower_bound}]

		Let $\model \in \modgoodl$. Adopting the notation of \Cref{lem:visitation_comparison_general} with $\calU \gets \calU_\ell$, set $\scrE_h := \{(\bmx_\tau,\bma_{\tau},\tau) \in \calU_\ell^c,~ \forall \tau < h\}$. Since the rewards are non-negative, we lower bound the value by a sum over rewards times indicators of $\scrE_h$, and invoke \Cref{lem:visitation_comparison_general} with $\rtil = \sfr_{\model}$ together with the definition of $\modgoodl$ in \Cref{eq:modgoodl}:
		\begin{align*}
		\valuef{\pi}{\model} &= \sfE^{\pi}_{\model}\left[\sum_{h=1}^H \sfr_{\model}\xah \right]\\
		&\ge \sfE^{\pi}_{\model}\left[\sum_{h=1}^H \sfr_{\model}\xah \ind(\scrE_h)\right]\\
		&\ge  \sfE^{\pi}_{\modst}\left[\sum_{h=1}^H \sfr_{\model}\xah \ind(\scrE_h)\right] - \binom{H}{2}\cdot 2\epsp
		\end{align*}
		Again, by nonnegativity of the rewards $\sfE^{\pi}_{\modst}\left[\sum_{h=1}^H \sfr_{\model}\xah \ind(\scrE_h)\right] = \sum_{x,a,h} \sfr_{\model}\xah \cdot \omegast\xah \ge \sum_{(x,a,h) \in \calU_{\ell}} \sfr_{\model}\xah \cdot \omegast\xah$. The bound bound follows.
	\end{proof}

	\begin{proof}[Proof of \Cref{claim:value_diff}]

		Let $\pi_1 \in \Pimarkov$, and $\pi_2 \in \Pi_{\ell}^c$. Fix a $\model \in \modgoodl$. By definition of $\Pi_{\ell}^c$, $\Pr^{\pi_2}_{\modst}\left[ \Evisit \right] \le \rho_0$,  \Cref{claim:value_upper_bound} ensures $\valuef{\pi_2}{\model} \le H \rho_0 + H \epspunish + 2 \epsr + H(H-1)\epsp$. On the other hand, \Cref{claim:value_lower_bound} ensures $\valuef{\pi_1}{\model}  \ge \sum_{(x,a,h) \in \calU_{\ell}} \sfr_{\model}\xah \cdot \omegast^{\pi_1}\xah - H(H-1)\epsp$, Hence,
		\begin{align*}
		&\valuef{\pi_1}{\model} - \valuef{\pi_2}{\model} \\
		&\quad\ge \sum_{(x,a,h) \in \calU_{\ell}} \sfr_{\model}\xah \cdot \omegast^{\pi_1}\xah -H \rho_0 - H \epspunish  - 2 \epsr -  2H(H-1)\epsp\\
		&\quad\ge \sum_{(x,a,h) \in \calU_{\ell}} \sfr_{\model}\xah \cdot \omegast^{\pi_1}\xah -H \left(\rho_0 + \epspunish  + 2H\epsp\right),
		\end{align*}
		where we  use that $\epsp \le \epsr$.
	\end{proof}

	\begin{proof}		To streamline the proof, we condense notation. Introduce the shorthand
		\begin{align*}
		Z^{\pi}_{\model} := \valuef{\pi}{\model} - \max_{\pi' \in \Pi_{\ell}^c}\valuef{\pi'}{\model} , \quad \text{and} \quad W^{\pi}_{\model} := \sum_{(x,a,h) \in \calU_{\ell}}\sfr_{\model}\xah \cdot \omegast^{\pi}\xah
		\end{align*}
		 and let $\varepsilon := H \left(\rho_0 + \epspunish  + 2H\epsp\right)$. Finally, recall the shorthand $\Exp_{\mathrm{hal}}[\cdot] := \Expsimhh \Expcan[\cdot \mid \ledhall]$ and set $\Pr_{\mathrm{hal}}[\cdot] := \Expsimhh \Prcan[\cdot \mid \ledhall]$. Now, \Cref{claim:value_diff} implies that, for all $\model \in \modgoodl$, $Z^{\pi}_{\model} \ge W^{\pi}_{\model} - \varepsilon$.  Since $Z^{\pi}_{\model} \in [-H,H]$ and $W^{\pi}_{\model} \in [0,H]$, we have
		\begin{align*}
		\Exphall[\valuef{\pi}{\modvarhat} - \max_{\pi' \in  \Pi^c} \valuef{\pi'}{\modvarhat} ] &= \Exphall[ Z^{\pi}_{\modvarhat} ]\\
		&\ge \Exphall[ \ind\{\modvarhat \in \modgoodl \} Z^{\pi}_{\modvarhat}  ] - H \Exphall[ \modvarhat \notin \modgoodl ] \\
		&\ge \Exphall[ \ind\{\modvarhat \in \modgoodl \} (W^{\pi}_{\modvarhat} - \varepsilon)  ] - H \Exphall[ \modvarhat \notin \modgoodl    ] \\
		&\ge \Exphall[  W^{\pi}_{\modvarhat} ] - \varepsilon - 2H \Exphall[ \modvarhat \notin \modgoodl  ].
		\end{align*}
		By \Cref{lem:hallucination_good_event}, we have that on $\Egoodl$, $\Exphall[ \modvarhat \notin \modgoodl  ] \le \epspunish$. Moreover, by linearity of expectation, and the fact that $\Exphall[\cdot]$ treats $\modst$ as deterministic,
		\begin{align*}
		\Exphall[  W^{\pi}_{\modvarhat} ] &= \sum_{(x,a,h) \in \calU_{\ell}} \Exphall[\sfr_{\model}\xah] \cdot \omegast^{\pi}\xah.
		\end{align*}
		Putting things together, we conclude,
		\begin{align*}
		&\Exphall[\valuef{\pi}{\modvarhat} - \max_{\pi' \in  \Pi^c} \valuef{\pi'}{\modvarhat} ] \\
		&\qquad\ge\sum_{(x,a,h) \in \calU_{\ell}} \Exphall[\sfr_{\model}\xah] \cdot \omegast^{\pi}\xah.  - 2\epspunish - \varepsilon \\
		&\qquad= \sum_{(x,a,h) \in \calU_{\ell}} \Exphall[\sfr_{\model}\xah] \cdot \omegast^{\pi}\xah  - H \left(\rho_0 + 3\epspunish  + 2H\epsp\right).
		\end{align*}
		Finally, we can lower bound
		\begin{align*}
		\sum_{(x,a,h) \in \calU_{\ell}} \Exphall[\sfr_{\model}\xah]  &\ge \min_{\xah \in \calU_{\ell}} \Exphall[\sfr_{\model}\xah] \cdot \sum_{(x,a,h) \in \calU_{\ell}} \omegast^{\pi}\xah\\
		&= \sfP_{\modst}^{\pi} \cdot\min_{\xah \in \calU_{\ell}} \Exphall[\sfr_{\model}\xah],
		\end{align*}
		where the last equality uses the indentity \Cref{claim:value_lower_bound}. Hence, setting $\varepsilon_0 :=  H \left(\rho_0 + 3\epspunish  + 2H\epsp\right)$, we have
		\begin{align*}
		\Exphall[\valuef{\pi}{\modvarhat} - \max_{\pi' \in  \Pi^c} \valuef{\pi'}{\modvarhat} ] &\ge \sum_{(x,a,h) \in \calU_{\ell}} \Exphall[\sfr_{\model}\xah] \cdot \omegast^{\pi}\xah  - \varepsilon_0\\
		&\ge \sfP_{\modst}^{\pi} \cdot\min_{\xah \in \calU_{\ell}} \Exphall[\sfr_{\model}\xah] -  \varepsilon_0. \qquad\qedhere
		\end{align*}
	\end{proof}

\subsection{Proof of \Cref{thm:main_prob_mdp} from \Cref{lem:main_prob_lemma} \label{proof:main_prob_mdp}}
	\newcommand{\bbar}{\widebar{b}}
	\newcommand{\Efinl}[1][\ell]{\calE_{\mathrm{fin};#1}}

	Let us first recall the relevant parameters and assumptions. Recall the per-phase failure probability $\deltafail$, set $\Delta_0 = \rho\ralt(\epspunish)/2$,
	\begin{align*}
	\textstyle n_0(\deltafail) := \frac{96 H^4((S\log(5) + \log(1/\delta_0))}{\Delta_0^2}, \quad \text{ where } \delta_0 = \frac{\deltafail \cdot \qpunish \cdot \epspunish}{4SAH}.
	\end{align*}
	Recall the exploration event $\Eexplorel := \{\exists h: (x\kh,a\kh,h) \in \calU_{\ell}, ~k := \kexpl\}$. Define the ``finishing'' event $\Efinl := \{\calU_\ell \cap \reach_{\rho}(\modst)\}$, on which we have successfully $(\rho,\nlearn)$ explored by phase $\ell$. We also define
	 Bernoull random variables $B_{\ell} := \ind\{\Eexplorel\}$, which are $\calF_{\ell}$-measurable, and their deviations from their conditional expectations $\bbar_{\ell} :=\Exp[B_{\ell} \mid \calF_{\ell}]$. From \Cref{lem:main_prob_lemma}, it thus holds that
	 \begin{align}
	 \Pr[ \bbar_{\ell} \le \rhoprog \text{ and not } \Efinl] \le 1 - \deltafail, \quad\text{where }\rhoprog := \frac{\Delta_0^2}{6H^2} = \frac{\rho^2\ralt(\epspunish)^2}{36H^2} \label{eq:main_prob_lemma_guar}
	\end{align}

	Now, suppose that for a given phase $L \ge 0$, we have that $\calU_{L+1} \cap \reach_{\rho}(\modst) \ne \emptyset$ (i.e., $\Efinl[L+1]$ fails). The same must be be true for all $\ell \le L+1$ since $\calU_{\ell}$ is non-decreasing across phases $\ell$. Hence, by a union bound
	\begin{align*}
	&\Pr[ \text{not } \Efinl[L+1] \text{ and } \sum_{\ell=\nlearn + 1}^{L} \bbar_{\ell} \le (L-\nlearn) \rhoprog] \\
	&\quad\le (L-\nlearn) \max_{\ell \in \{\nlearn+1,\dots,L\}}\Pr[  \text{not } \Efinl[L+1] \text{ and } \bbar_{\ell} \le  \rhoprog]\\
	&\quad\le (L-\nlearn) \max_{\ell \in \{\nlearn+1,\dots,L\}}\Pr[ \text{not } \Efinl[\ell] \cap \reach_{\rho}(\modst) \ne \emptyset \text{ and } \bbar_{\ell} \le  \rhoprog]\\
	&\quad \le (L-\nlearn) \deltafail \tag*{by \Cref{eq:main_prob_lemma_guar}}
	\end{align*}
	On the other hand, by the pidgeonhole principle, it must be the case that
	\begin{align}
	\sum_{\ell=1}^L B_{\ell} \le SAH \nlearn, \label{eq:B_l_bound}
	\end{align}
	since each time $B_{\ell} = 1$, one triple $\xah \in \calU_{\ell}$ is visited during a hallucination episode, and each triple $\xah \in \calU_{\ell}$ can only be visited a maximum of $\nlearn$ times during hallucination episodes. Hence, if we consider phase $L_0 := 4SAH\nlearn/\rhoprog$ and $L_1 = L_0 + \nlearn$ (incrementing $L_0$ by $\nlearn$, so $L_0 = L_1 - \nlearn$), and we have
	\begin{align*}
	\Pr[ \text{not } \Efinl[L_1+1] ] &= \Pr[ \text{not } \Efinl[L_1+1] \text{ and } \sum_{\ell=\nlearn + 1}^{L_1} \bbar_{\ell} \le L_0 \rhoprog] + \Pr[ \text{not } \Efinl[L+1] \text{ and } \sum_{\ell=\nlearn +1}^{ L_1} \bbar_{\ell} > L_0 \rhoprog]\\
	&\le L_0 \deltafail + \Pr\left[  \sum_{\ell=\nlearn+1}^{L_1} \bbar_{\ell} > L_0 \rhoprog\right]\\
	&= L_0 \deltafail + \Pr\left[  \sum_{\ell=1+\nlearn}^{L_1} \bbar_{\ell} > L_0 \rhoprog \text{ and } \sum_{\ell=1+\nlearn}^{L_1} B_{\ell} \le SAH \nlearn \right] \tag{\Cref{eq:B_l_bound}} \\
	&= L_0 \deltafail + \Pr\left[  \sum_{\ell=\nlearn+1}^{L_1} \bbar_{\ell} >  4SAH\nlearn \text{ and } \sum_{\ell=\nlearn+1}^{L_1} B_{\ell} \le SAH \nlearn  \right] \tag{Definition of $L_0$}.
	\end{align*}
	To conclude, let us use a concentration inequality to bound the probability on the final display. Applying a standard martingale Chernoff bound (Lemma F.4 in \cite{dann2017unifying}, with $X_\ell \gets B_{\ell}$, $P_{\ell} \gets \bbar_{\ell}$, and $W =  SAH \nlearn$) yields that
	\begin{align*}
	\Pr\left[  \sum_{\ell=\nlearn+1}^{L_1} \bbar_{\ell} >  4SAH\nlearn \text{ and } \sum_{\ell=\nlearn+1}^{L_1} B_{\ell} \le SAH \nlearn  \right] \le e^{-W} = e^{- SAH \nlearn},
	\end{align*}
	and thus, for $L_0 := 4SAH\nlearn/\rhoprog = 36SAH^3\nlearn/\Delta_0^2$, and $\nlearn = L_0 + \nlearn \le 37SAH^3\nlearn/\Delta_0^2$
	\begin{align}
	\Pr[ \text{not } \Efinl[L_1+1] ] \le L_0 \deltafail  + e^{- SAH \nlearn},
	\end{align}
	To conclude, we note that when $\Efinl[L_0+1]$, then we have $(\rho,\nlearn)$ explored by episode $K_0 = L_1 \cdot \nphase = 4SAH\nlearn \nphase = \frac{37SAH^3 \nlearn\nphase}{\Delta_0^2}$.

	\newcommand{\deltapun}{\delta_{\mathrm{pun}}}
	To conclude, let us select conditions on $\nlearn$ for which $L_0 \deltafail  + e^{- SAH \nlearn}$ is bounded by the target failure probability $\delta$. For this it suffices to choose $\deltafail$ such $\deltafail = \delta/2L_0$, and ensure $\nlearn \ge \log(2/\delta)$. This only affects our bound through the requirement $\nlearn \ge n_0(\delta_0) \vee \log(2/\delta)$, where
	\begin{align*}
	\delta_0 &= \frac{\delta \cdot \qpunish \cdot \epspunish}{4SAH} = \frac{\delta\cdot \qpunish \cdot \epspunish}{8SAHL_0} = \frac{\rhoprog\delta \cdot \qpunish \cdot \epspunish}{(4SAH)^2\nlearn}\\
	&= \frac{\frac{\Delta_0^2}{4H^2}\delta \cdot \qpunish \cdot \epspunish}{(4SAH)^2\nlearn}=  \frac{\delta \cdot \qpunish \cdot \epspunish}{(8SAH^2)^2\nlearn}\\
	&\le \frac{\delta}{\nlearn} \Delta_0^2 \cdot \delta_1^2, \text{ where } \delta_1 := \frac{\qpunish \cdot \epspunish}{24SAH}.
	\end{align*}
	 Hence, this requires
	\begin{align*}
	\nlearn \ge \frac{96 H^4 \log(\nlearn \cdot \frac{5^S}{\delta \cdot \delta_1^2 \Delta_0^2})}{\Delta_0^2}, \text{ or } \frac{\Delta_0^2}{96 H^4} \ge  \frac{\log(\nlearn \cdot \frac{5^S}{\delta \cdot \delta_1^2 \Delta_0^2})}{\nlearn}
	\end{align*}
	We appeal to the following claim to invert the logarithm
	\begin{claim} For all positive $a,b,t$ with $a/b \ge e$, $t \ge 2 \log(a/b)/b$ implies that $b \ge \log(a t)/t$.
	\end{claim}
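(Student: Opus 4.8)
The plan is to change variables to $u := bt$ and reduce the claimed inequality $b \ge \log(at)/t$ --- equivalently $bt \ge \log(at)$ --- to a single-variable inequality that follows from monotonicity. First I would set $c := \log(a/b)$, noting that the hypothesis $a/b \ge e$ is exactly $c \ge 1$, and that the hypothesis $t \ge 2\log(a/b)/b$ is exactly $u = bt \ge 2c$. Since $at = (a/b)\cdot(bt) = (a/b)\, u$, we have $\log(at) = c + \log u$, so the goal $bt \ge \log(at)$ becomes
\[
u - \log u \;\ge\; c .
\]

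Next I would exploit monotonicity twice. The map $u \mapsto u - \log u$ has derivative $1 - 1/u \ge 0$ on $[1,\infty)$, hence is nondecreasing there; since $u \ge 2c \ge 2 > 1$, we get $u - \log u \ge 2c - \log(2c)$. It therefore suffices to verify $2c - \log(2c) \ge c$, i.e. $c - \log c \ge \log 2$. But $c \mapsto c - \log c$ is likewise nondecreasing on $[1,\infty)$ (derivative $1 - 1/c \ge 0$) and takes the value $1$ at $c = 1$, so $c - \log c \ge 1 > \log 2$ for every $c \ge 1$. Chaining these inequalities gives $u - \log u \ge c$, which unwinds to $bt \ge \log(at)$ and hence $b \ge \log(at)/t$, as desired.

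There is essentially no serious obstacle here: the argument is a routine two-step monotonicity computation. The only points requiring a little care are (i) performing the substitution cleanly so that $\log(at)$ splits as $c + \log u$, and (ii) checking that all the quantities that enter a logarithm are positive and, where the monotonicity bound is invoked, lie in the region $[1,\infty)$ --- which is guaranteed precisely by the two hypotheses $a/b \ge e$ and $t \ge 2\log(a/b)/b$ (the latter forcing $u \ge 2c \ge 2$). I would also remark that no upper bound on $a/b$ or lower bound on $t$ beyond these is needed, so the claim is tight in spirit with how it is applied to invert the logarithm in the bound on $\nlearn$.
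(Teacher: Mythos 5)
Your argument is correct and complete: the substitution $u=bt$, $c=\log(a/b)$ turns the goal into $u-\log u\ge c$, and the two monotonicity steps ($u-\log u$ nondecreasing for $u\ge 1$ with $u\ge 2c\ge 2$, then $c-\log c\ge 1>\log 2$ for $c\ge 1$) close it without any gaps; the positivity and domain checks you flag are exactly the ones needed. The paper itself states this claim without proof (it is invoked only to invert the logarithm in the choice of $\nlearn$), so there is no authorial argument to compare against, but your proof is the natural one and fully justifies the step as used.
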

	In particular, take $t = \nlearn$, $a = \frac{5^S}{\delta \cdot \delta_1^2 \Delta_0^2}$, and $b = \frac{\Delta_0^2}{96 H^4}$. Then, $a/b \ge e$, so it suffices that
	\begin{align*}
	\nlearn \ge \frac{192 H^4 \log(\frac{5^S 96 H^4}{\delta \cdot \delta_1^2 \Delta_0^4 })}{\Delta_0^2} \vee \log(2/\delta).
	\end{align*}
	Finally, we simplify
	\begin{align*}
	\log(\frac{5^S 96 H^4}{\delta \cdot \delta_1^2 \Delta_0^4 })  &= \log \frac{5^S 96 H^4 \cdot 16 \cdot 64 S^2A^2H^4  }{\delta \cdot (\qpunish \cdot \epspunish)^2 \rho^4 \ralt(\epspunish)^4 }\\
	&= S\log 5 + \log(1/\delta) +  \log \frac{96 H^4 \cdot 16 \cdot 64 S^2A^2H^4  }{\delta \cdot (\qpunish \cdot \epspunish)^2 \rho^4 \ralt(\epspunish)^4 }\\
	&\le S\log 5 + \log(1/\delta) +  4\log \frac{20 SAH^2 }{\rho \cdot \qpunish \cdot \epspunish  \ralt(\epspunish) }\\
	&:= S\log 5 + \log(1/\delta) +  \iota(\epspunish,\rho).
	\end{align*}
	Thus, it is enough to ensure $\nlearn \ge \frac{192 H^4 (S\log 5 + \log(1/\delta) +  \iota(\epspunish,\rho))}{\Delta_0^2} \vee \log(2/\delta)$. One can verify that the first term dominates the $\log(2/\delta)$.
	\qed

\newpage

\section{Proof of \Cref{prop:revelation}}
\label{app:exploit}

\newcommand{\calV}{\mathcal{V}}
\newcommand{\Vvis}{\calV_{\mathrm{vis}}}
\newcommand{\Vreach}{\calV_{\mathrm{rea}}}
\newcommand{\modtil}{\tilde{\model}}
\newcommand{\valuefrest}[2]{\valuename_{\mathrm{rstr}}(#1;#2)}

In this appendix, we restate and prove \Cref{prop:revelation}.
\proprevelation*

At a high-level, the proof requires two steps. First, we show that, with high probability, any model $\model$ drawn from the posterior given the signal $\signal$ has similar rewards and transitions to those of $\modst$ under all triples which are $\rho$-reachable under $\modst$. This step invokes a Bayesian Chernoff argument similar in spirit to those in \Cref{sec:proof_hall_good_event}. In the second step, we argue that similarity on $\rho$-reachable triples implies uniformly that for all Markovian policies $\pi$, $\valuef{\pi}{\model}$ and $\valuef{\pi}{\modst}$ are close by. As a consequence, we conclude that any BIC policy (one which optimizes $\Exp[\valuef{\pi}{\modst} \mid \signal]$) must be near optimal for $\modst$.

\begin{remark} Because assume the ledger $\ledger$ contain \emph{all} trajectories collected by the mechanism, all posteriors are cannonical.
\end{remark}

\paragraph{Preliminaries.}
We begin with a remark on notation. In the majority of the paper, we were concerned with sets of \emph{undervisited} triples $\xah$, notated $\calU$. In this section of the paper, we are concered more with sets of triples $\calV$ which we wish are sufficiently visited, or which we  wish to be so. The two sets of interest are
\begin{definition}[Reachable and Visited Set] Given $\rho > 0$ and a model $\modst$, $\Vreach(\rho,\modst)$ denote the sets of triples $\xah$ which are  $\rho$-reachable under $\modst$. We define $\Vreach(\modst)$ as the set of all reachabile triples $\xah$ for any postive $\rho$; i.e. $\Vreach(\modst) = \bigcup_{\rho > 0}\Vreach(\rho,\modst)$. \footnote{In other words, $\Vreach(>0,\modst)$ is the compliment of the set of triples which cannot be reached by \emph{any} policy under $\modst$.} Given $n \ge 0$ and ledger $\ledger_K$, we let $\Vvis(n)$ denote the set of triples $\xah$ which have been visited at least $n$ times in ledger $k$.
\end{definition}

We now recall the definition of transition-similiarity, modified to be stated in terms of $\calV$-sets.
\begin{restatable}[Transition-Similar]{definition}{defnsimilarV} \label{defn:similarity_V} Let $\|\cdot\|_{\ell_1}$ denote the $\ell_1$-distance between probability distributions.  Given $\calV \subset [S] \times [A] \times [H]$, we say two models $(\model,\modst)$ are $\varepsilon$-transition-similar on $\calV$ if (i) $\|\sfp_{\model}(\cdot \mid 0 ) - \sfp_{\modst}(\cdot \mid 0)\|_{\ell_1} \le \varepsilon $ \emph{(closeness of initial state distribution)}, and (ii) for each $(x,a,h) \in \calV$, $\|\sfp_{\model}(\cdot \mid x,a,h) - \sfp_{\modst}(\cdot\mid x,a,h)\|_{\ell_1} \le \varepsilon$ \emph{(closeness of transitions on $\calV$)}.
\end{restatable}

We also introduce the analogous notion of \emph{reward similarity}
\begin{definition}[Reward Similar]\label{defn:similarity_R}  Given $\calV \subset [S] \times [A] \times [H]$, we say two models $(\model,\modst)$ are $\varepsilon$-reward-similar on $\calV$ if for each $(x,a,h) \in \calV$, $|\sfr_{\model}(x,a,h) - \sfr_{\modst}( x,a,h)| \le \varepsilon$.
\end{definition}

\paragraph{Bayesian Concentration}
We begin by arguing that there exist accurate estimators $\thetar$ and $\thetap$ of the rewards and transitions which are well defined for all states visited at least $n$ times. The following is a modification of \Cref{lem:conc_bounds}, whose proof is similar and omitted in the interest of brevity.
\begin{lemma}[Chernoff Concentration Bounds]\label{lem:conc_bounds_two} Given an $n \ge 0$ and let $\ledger_K$ be an uncensored ledger containing at least $K \ge n$ trajectories. Define the error bounds
\begin{align*}
\textstyle \epsr(n,\delta) := \sqrt{ \frac{2\log(1/\delta)}{n}}, \quad \text{ and } \epsp(n,\delta):= 2\sqrt{ \frac{2(S\log(5) + \log(1/\delta))}{n}}.
\end{align*}
Then, there exist estimators $\theta_r(x,a,h)$, $\theta_{\sfp}(x,a,h)$, $\theta_\sfp(\cdot \mid 0)$,  of the rewards,  transition probabilities, initial state distribution, which are functions of the ledger $\ledger_K$  such that
\begin{align*}
&\Pr[\xah \in \Vreach(n) \cap \{|\theta_r(x,a,h) - \rmodst(x,a,h)| \ge \epsr(n,\delta)\} ] \le \delta \quad \text{and}\\
&\Pr[\xah \in \Vreach(n) \cap \{\|\theta_{\sfp}(x,a,h) - \pmodst(\cdot \mid x,a,h)\|_{\ell_1} \ge \epsp(n,\delta)\} ] \le \delta\\
&\Pr[\|\thetap(\cdot \mid 0) - \pmodst(\cdot \mid 0)\|_{\ell_1} \ge \epsp(n,\delta)] \le \delta.
\end{align*}
\end{lemma}
We now invoke the Bayesian concentration argument due to \cite{Selke-PoIE-ec21}. Let $\model'$ be a drawn from the posterior conditioned on $\ledger$, $\model' \sim \Pr[\cdot \mid \ledger]$. Then, $(\model',\ledger)$ and $(\modst,\ledger)$ have the same distribution. Hence, the estimators $\theta_r(\cdot)$ and $\theta_p(\cdot)$ (a function of only the ledger $\ledger$) also concentrate around $\sfr_{\model'}$ and $\sfp_{\model'}$, in the sense of \Cref{lem:conc_bounds_two}. Thus, by unions bounds and applications of the triangle inequality, it holds that
\begin{align*}
&\Pr[\xah \in \Vreach(n)\cap \{|\sfr_{\model'}(x,a,h) - \rmodst(x,a,h)| \ge 2\epsr(n,\delta)\} ] \le 2\delta \quad \text{and}\\
&\Pr[\xah \in \Vreach(n) \cap \{\|\sfp_{\model'}(x,a,h) - \pmodst(\cdot \mid x,a,h)\|_{\ell_1} \ge 2\epsp(n,\delta)\} ] \le 2\delta\\
&\Pr[\|\sfp_{\model'}(\cdot \mid 0) - \pmodst(\cdot \mid 0)\|_{\ell_1} \ge 2\epsp(n,\delta)] \le 2\delta.
\end{align*}
Recalling the definitions of transition- and reward-similarity, another union bound yields the following lemma:
\newcommand{\Esim}{\mathcal{E}_{\mathrm{sim}}}
\newcommand{\Ebarsim}{\bar{\mathcal{E}}_{\mathrm{sim}}}
\begin{lemma}\label{lem:Esim} Let $\modst$ denote the true model, and consider a sample $\model' \sim \Pr[\cdot \mid \ledger]$. Then for any $\delta \in (0,1)$, the following event $\Esim(\delta)$ holds with probability $1 - 6SAH\delta$ over all randomness in $(\modst,\model',\ledger_K)$:
\begin{align}
\Esim(n,\delta) := \left\{(\modst,\model') \text{ are }\,\, \left\{\begin{matrix} 2\epsr(n,\delta)\text{-reward-similar and}\\
2\epsp(n,\delta)\text{-transition-similar}\\
\end{matrix}\right\} \text{ on } \Vreach(n).\right\}
\end{align}
\end{lemma}

Lastly, we convert \Cref{lem:Esim} into a slightly more useful form to reason about sampling from the posterior conditioned on a fixed true model $\modst$ and ledger $\ledger_K$.
\begin{lemma}\label{lem:Esim_two} Fix a $\delta_1,\delta_2 \in (0,1)$, and define the event
\begin{align*}
\Ebarsim(n,\delta_1,\delta_2) := \{\Pr[\Esim(n,\delta_1) \mid \ledger_K,\modst] \ge 1 - \delta_2 \}
\end{align*}
Then, $\Pr[\Ebarsim(\delta_1,\delta_2)] \ge 1 - \frac{6SAH\delta_1}{\delta_2}$.
\end{lemma}
\begin{proof} We apply Markov's inequality.
\begin{align*}
\Pr[\Esim(\delta_1)^c] &= \Exp[\Pr[\Esim(\delta_1)^c \mid \ledger_K,\modst] ]\\
&\ge  \Exp[\delta_2 \cdot\ind\{\Pr[\Esim(\delta_1)^c \mid \ledger_K,\modst] \ge - \delta_2\}]\\
&:=  \Exp[\delta_2 \cdot \ind\{\Ebarsim(\delta_1,\delta_2)^c\}]\\
&= \delta_2 \cdot\Pr[\Ebarsim(\delta_1,\delta_2)^c].
\end{align*}
From \Cref{lem:Esim}, we know that $\Pr[\Esim(\delta_1)^c] \le 6\delta_1$. Therefore, $\Pr[\Ebarsim(\delta_1,\delta_2)^c] \le 6\delta_1/\delta_2$. The bound follows.
\end{proof}

\paragraph{Similarity implies close values.} With the above preliminaries in place, we first show that the simulation lemma which states that if two models $(\model,\modst)$ are both transition-similiar and reward-similar on $\Vreach(\rho,\modst)$, then \emph{all} policies have similar value.

Our first step is to show that the set of non-$\rho$-reachable triples $\Vreach(\rho,\modst)^c$ for $\modst$ is hard to reach under any $\model$ which is $\epsp$-transition-similar to $\modst$ on the $\rho$-reachable triples $\Vreach(\rho,\modst)$.
\begin{lemma}\label{lem:mass_of_reachable} Let $(\model,\modst)$ be $\epsp$-similar on $\Vreach(\rho,\modst)$. Then for policy $\pi \in \Pimarkov$,
\begin{align*}
\sfP^\pi_{\modst}[\exists (\bmx_h,\bma_h,h) \in \Vreach(\rho,\modst)^c] &\le \rho SH.\\
\sfP^\pi_{\model}[\exists h : (\bmx_h,\bma_h,h) \in \Vreach(\rho,\modst)^c] &\le \rho SH + H^2\epsp.
\end{align*}
\end{lemma}
\begin{proof} This first inequality follows from a union bound,
\begin{align*}
\sfP^\pi_{\modst}[\exists h : (\bmx_h,\bma_h,h) \in \Vreach(\rho,\modst)^c] &\le \sum_{x,a,h} \ind\{a = \pi(x,h)\} \ind\{(x,a,h) \in \Vreach(\rho,\modst)^c\}\sfP^\pi_{\modst}[ (\bmx_h,\bma_h,h) = (x,a,h)]\\
&\le \sum_{x,a,h} \ind\{a = \pi(x,h)\} \ind\{(x,a,h) \in \Vreach(\rho,\modst)^c\} \rho\\
&\le \rho SH.
\end{align*}
Moreover, by $\epsp$-similarity on $\Vreach(\rho,\modst)$, \Cref{lem:visitation_comparison_general} entails the following inequality, which proves our desired bound:
\begin{align*}
\left|\sfP^\pi_{\modst}[\exists h : (\bmx_h,\bma_h,h) \in \Vreach(\rho,\modst)^c] - \sfP^\pi_{\model}[\exists h : (\bmx_h,\bma_h,h) \in \Vreach(\rho,\modst)^c]\right| \le \binom{H}{2}\epsp \le H^2 \epsp. \quad\qedhere
\end{align*}
\end{proof}

\newcommand{\indreach}{\ind_{\mathrm{reach}}}
Using the above, we establish closeness of values:
\begin{lemma}[Simulation on Reachable Set]\label{lem:reachable_simulation} Fix $\rho > 0$, and suppose that  $(\model,\modst)$ are  $\epsp$-transition-similar and $\epsr$-reward-similar on $\Vreach(\rho,\modst)$. Introduce the indicatior
\begin{align*}
\indreach :=\begin{cases}0 & \Vreach(\rho,\modst) = \Vreach(\modst) = \Vreach(\model)\\
1 & \text{otherwise},
\end{cases}
\end{align*}
which is equal to $1$ unless the set of $\rho$-reachable triples under $\modst$ coincide with the set of reachable (for any $\rho'$) triples under either $\modst$ or $\model$).  Then, for any $\pi \in \Pimarkov$,
\begin{align}
\left|\valuef{\pi}{\model} - \valuef{\pi}{\modst}\right| \le  H^2 S \rho \cdot\indreach  + 2 H^3 \epsp + H\epsr. \label{eq:first_bound_reach}
\end{align}
\end{lemma}
\begin{proof}

Our strategy is to invoke the simulation lemma, \Cref{lem:visitation_comparison_general}, twice.  For $h \in [H]$, introduce the $\sfP$-events $\scrE_h := \{(\bmx_\tau,\bma_{\tau},\tau) \in \Vreach(\rho,\modst),~ \forall \tau < h\}$, and let us define the \emph{restricted value function} for any model $\model'$ via
\begin{align}
\valuefrest{\pi}{\model'} := \sfE^{\pi}_{\model'}\left[\sum_{h=1}^H \sfr_{\model'}(\bmx_h,\bma_h,h) \ind\{\scrE_{h+1}\}\right],
\end{align}
which only counts rewards accumulated on trajectories which remain on $\rho$-reachable states $\Vreach(\rho,\modst)$ up until time and \emph{including} step $h$. We can observe then that, for any model $\model'$,
\begin{align*}
\valuefrest{\pi}{\model'} \le \valuef{\pi}{\model'} &\le \valuefrest{\pi}{\model'}  +  H\sfP^{\pi}_{\model'}[\exists h: (\bmx_h,\bma_h,h) \in \Vreach(\rho,\modst)^c ]\\
&= \valuefrest{\pi}{\model'}  +  H\sfP^{\pi}_{\model'}[\exists h: (\bmx_h,\bma_h,h) \in \Vreach(\rho,\modst)^c ] \cdot \indreach,
\end{align*}
where we can multiply by the indicator $\indreach$ since if $\indreach = 0$,  $\sfP^{\pi}_{\model'}[\exists h: (\bmx_h,\bma_h,h) \in \Vreach(\rho,\modst)^c ]  = 0$. In light of \Cref{lem:mass_of_reachable}, we find then that $\modst$ and $\model$ satisfy,
\begin{align*}
\valuefrest{\pi}{\modst} \le \valuef{\pi}{\modst} \le \valuefrest{\pi}{\modst}  +  H^2 S \rho \cdot \indreach \\
\valuefrest{\pi}{\model} \le \valuef{\pi}{\model} \le \valuefrest{\pi}{\model}  +  H^2 S \rho \cdot \indreach + H^3 \epsp.
\end{align*}
Together, these bounds imply that
\begin{align}\label{eq:value_diff_inter_rest}
 |\valuef{\pi}{\model}  - \valuef{\pi}{\modst}| \le  H^2 S \rho \cdot \indreach + H^3 \epsp + |\valuefrest{\pi}{\modst} - \valuefrest{\pi}{\model}|.
\end{align}

It remains to bound the difference in restricted values $|\valuefrest{\pi}{\modst} - \valuefrest{\pi}{\model}|$. To do so, introduce an interpolating model $\modtil$ whose transitions $\modtil$ are the same as those in $\model$ ($\sfp_{\modtil} = \sfp_{\model}$), but whose rewards are the same as those in $\modst$ ($\sfr_{\modtil} = \sfr_{\modst}$). By the triangle inequality and \Cref{eq:value_diff_inter_rest},
\begin{equation}\label{eq:pennultimate_value_bound}
\begin{aligned}
 &|\valuef{\pi}{\model}  - \valuef{\pi}{\modst}| \le  H^2 S \rho \cdot \indreach + H^3 \epsp \\
 &\qquad+ \underbrace{|\valuefrest{\pi}{\modst} - \valuefrest{\pi}{\modtil}|}_{(i)} + \underbrace{|\valuefrest{\pi}{\model} - \valuefrest{\pi}{\modtil}|}_{(ii)}.
\end{aligned}
\end{equation}
To bound term $(i)$, define the event $\bar{\scrE}_{h} := \ind\{(x,a,h) \in \Vreach(\rho,\modst)\}$, so that $\scrE_{h+1} = \bar{\scrE}_{h} \cap \scrE_{h}$. Defining the reward $\rtil(x,a,h) := \ind\{\bar{\scrE}_h\}\sfr_{\modst}\xah$, we then have
\begin{align}
\ind\{\scrE_h\}\rtil(x,a,h) = \ind\{\scrE_h\}\ind\{\bar{\scrE}_h\}\sfr_{\modst}\xah = \ind\{\scrE_{h+1}\}\sfr_{\modst},
\end{align}
and since $\modst$ and $\modtil$ have the same reward function $\ind\{\scrE_h\}\rtil(x,a,h)= \sfr_{\modtil}(\bmx_h,\bma_h,h) \ind\{\scrE_{h+1}\}$. Therefore,
\begin{align*}
\valuefrest{\pi}{\modst} - \valuefrest{\pi}{\modtil} &= \sfE^{\pi}_{\modst}\left[\sum_{h=1}^H \sfr_{\modst}(\bmx_h,\bma_h,h) \ind\{\scrE_{h+1}\}\right] - \sfE^{\pi}_{\modtil}\left[\sum_{h=1}^H \sfr_{\modtil}(\bmx_h,\bma_h,h) \ind\{\scrE_{h+1}\}\right]\\
 &= \sfE^{\pi}_{\modst}\left[\sum_{h=1}^H \rtil(\bmx_h,\bma_h,h) \ind\{\scrE_{h}\}\right] - \sfE^{\pi}_{\modtil}\left[\sum_{h=1}^H \rtil(\bmx_h,\bma_h,h) \ind\{\scrE_{h}\}\right].
\end{align*}
Hence, the difference $\valuefrest{\pi}{\modst} - \valuefrest{\pi}{\modtil}$ takes the form of precisely the quantity bounded by \Cref{lem:visitation_comparison_general}, implying that (with the crude bound $\binom{H}{2} \le H^2$)
\begin{align}
(i) \le |\valuefrest{\pi}{\modst} - \valuefrest{\pi}{\modtil}| \le H^2 \epsp.
\end{align}
Next, we bound term $(ii)$, which, due to the fact that $\modtil$ and $\model$ have the ssame transitions, (and again $\modtil$ has the same rewards as $\modst$) takes the form
\begin{align*}
\valuefrest{\pi}{\model} - \valuefrest{\pi}{\modtil} &= \sfE^{\pi}_{\model}\left[\sum_{h=1}^H \sfr_{\model}(\bmx_h,\bma_h,h) \ind\{\scrE_{h+1}\}\right] - \sfE^{\pi}_{\modtil}\left[\sum_{h=1}^H \sfr_{\modtil}(\bmx_h,\bma_h,h) \ind\{\scrE_{h+1}\}\right]\\
 &=\sfE^{\pi}_{\model}\left[\sum_{h=1}^H (\sfr_{\model}(\bmx_h,\bma_h,h) - \sfr_{\modst}(\bmx_h,\bma_h,h))\ind\{\scrE_{h+1}\}\right].
\end{align*}
We further observe that $\scrE_{h+1} = 0$ unless $(\bmx_h,\bma_h,h) \in \Vreach(\rho,\modst)$, and when this occurs, $\epsr$-reward-similarly implies that $|(\sfr_{\model}(\bmx_h,\bma_h,h) - \sfr_{\modst}(\bmx_h,\bma_h,h))| \le \epsr$. Hence,
\begin{align*}
|\valuefrest{\pi}{\model} - \valuefrest{\pi}{\modtil}| \le  \sfE^{\pi}_{\model}\left[\sum_{h=1}^H \ind\{\scrE_{h+1}\}\epsr\right] \le H\epsr.
\end{align*}
In summary, we have bounded term $(i)$ by $H^2 \epsp$ and term $(ii)$ by $H\epsr$. From \Cref{eq:pennultimate_value_bound},
\begin{align*}
 &|\valuef{\pi}{\model}  - \valuef{\pi}{\modst}| &\le  H^2 S \rho \cdot \indreach+ H^3 \epsp + H \epsr + H^2 \epsp \\
 &\le H^2 S \rho \cdot \indreach + 2 H^3 \epsp + H\epsr. \qquad\qedhere
\end{align*}
\end{proof}

\paragraph{Concluding the proof. }
\newcommand{\Etrav}{\mathcal{E}_{\mathrm{trave}}}
To conclude, suppose that that the following two events hold:
\begin{itemize}
\item The true model $\modst$ has be $(\rho,n)$-traversed, i.e. $\Etrav := \{\Vvis(n) \supset \Vreach(\rho,\modst)\}$ holds.
\item Recall the event $\Ebarsim(n,\delta_1,\delta_2)$ from \Cref{lem:Esim_two}. In words, this is the event that, with probability $1 - \delta_2$, a draw $\model' \sim \Pr[\cdot \mid \ledger_K,\modst]$ is both $2\epsr(n,\delta_1)$-reward-similar and $2\epsp(n,\delta_1)$-transition-similar to $\modst$ on the visited triples $\Vvis(n)$. We take $\delta_2 = 1/n$ and $\delta_1 = \delta_2\cdot \delta/6SAH = \epsilon \delta/6SAH n$, where $\delta \in (0,1)$ is our target failure probability.
\end{itemize}
Observe that
\begin{align}
\Pr[\Etrav \cap \Ebarsim(n,\delta_1,\delta_2)] \ge  1 - \Pr[\Etrav] - \Pr[\Ebarsim] \overset{(i)}{\ge} 1 - \delta - \frac{6SAHn \delta_1}{\delta_2} \overset{(ii)}{\ge} 1 - 2\delta,
\end{align}
where $(i)$ uses that our algorithm satisfies $(\rho,n,\delta,K_0)$-\traversal and \Cref{lem:Esim_two}, and $(ii)$ replaes our chose of $\delta_1,\delta_2$.

To conclude, we assume $\Etrav \cap \Ebarsim(n,\delta_1,\delta_2)$ holds, and bound  $ |\valuef{\hat{\pi}}{\modst} - \valuef{\pi_{\star}}{\modst}|$,  where $\hat{\pi} \in \argmax_{\pi \in \Pimarkov}\Exp'[\valuef{\pi}{\model'} ]$, and $\pi_{\star} \in \argmax_{\pi \in \Pimarkov}\Exp[\valuef{\pi}{\modst} ]$.

To this end, $\Pr'[\cdot], \Exp'[\cdot]$ denote a shorthand expectation over a model $\model' \sim \Pr[\modst \in \cdot \mid \ledger_K]$ (treating $\modst$ and $\ledger_K$ as fixed).  Then on their intersection $\Etrav \cap \Ebarsim(n,\delta_1,\delta_2)$,
\begin{align}\label{eq:pEst}
\calE_{\star} := \left\{ \begin{matrix} \model' \text{ and }\modst \text{ are } 2\epsr(n,\delta_1)\text{-reward-similar and }\\
 2\epsp(n,\delta_1)\text{-transition-similar} \text{ on } \Vreach(\rho,\modst)\end{matrix}\right\} \text{ has } \Pr'[\calE_{\star}] \ge 1 - \delta_2.
\end{align}
On $\calE_{\star}$, \Cref{lem:reachable_simulation} implies that for any policy $\pi \in \Pi$,
\begin{equation}\label{eq:eprime_diff}
\begin{aligned}
|\valuef{\pi}{\model'}  - \valuef{\pi}{\modst}| &\le H^2 S \rho + 4 H^3 \epsp(n,\delta_1) + 2H\epsr(n,\delta_1) \\
&\le H^2 S \rho + 6H^3 \epsp(n,\delta_1),
\end{aligned}
\end{equation}
where we use $\epsp(n,\delta_1) \ge \epsr(n,\delta_1)$ as defined above.

Therefore, since $\Pr'[\calE_{\star}] \ge 1 - \delta_2$ on $\Etrav \cap \Ebarsim(n,\delta_1,\delta_2)$, it holds that any policy $\pi \in \Pimarkov$ and reachability lower bound $\rho_{\min}$,
\begin{align*}
&|\Exp'[\valuef{\pi}{\model'} ] - \valuef{\pi}{\modst}| \\
&= |\Exp'[\valuef{\pi}{\model'} - \valuef{\pi}{\modst}]| \\
&\le \Exp'[|\valuef{\pi}{\model'} - \valuef{\pi}{\modst}|]\\
&\le H\Exp'[\ind\{\calE_\star\}] + H\Exp'[\ind\{\calE_{\star}\}|\valuef{\pi}{\model'} - \valuef{\pi}{\modst}|]\\
&\overset{(i)}{\le}  H\delta_2 + H^2 S \rho \cdot \ind_{\{\rho > \rho_{\min}\}}+ 6H^3 \epsp(n,\delta_1)\\
&\overset{(ii)}{\le}H^2 S \rho \cdot \ind_{\{\rho > \rho_{\min}\}}+ 7 H^3 \epsp(n,\delta_1) := \bar{\varepsilon}.
\end{align*}
where $(i)$ last step uses \Cref{eq:pEst,eq:eprime_diff}, together with the fact that if $\rho_{\min}$ is a reachability bound, then for any $\rho < \rho_{\min}$, $\Vreach(\rho,\modst) = \Vreach(\modst) = \Vreach(\model')$ for all $\modst,\model'$ in the support of the prior $\prior$. In addition, $(ii)$ uses that $H\delta_2 = H/n \le 7 H^3 \epsp(n,\delta_1)$.

In particular, if $\hat{\pi} \in \argmax_{\pi \in \Pimarkov}\Exp'[\valuef{\pi}{\model'} ]$, and $\pi_{\star} \in \argmax_{\pi \in \Pimarkov}\Exp[\valuef{\pi}{\modst} ]$, we conclude that on $\calE_{\star}$.
\begin{align*}
 |\valuef{\hat{\pi}}{\modst} - \valuef{\pi_{\star}}{\modst}| &\le 2\bar{\varepsilon} \\
 &= {\textstyle \mathcal{O}(H^2)\cdot\left(S \rho \cdot \ind_{\{\rho > \rho_{\min}\}}+ H\sqrt{\frac{S + \log(SAHn/\delta)}{n}}\right)}.\qquad\qed
\end{align*}

\end{document}